\def\eqref#1{equation~\ref{#1}}
\def\1{\bm{1}}
\DeclareMathAlphabet{\mathsfit}{\encodingdefault}{\sfdefault}{m}{sl}
\SetMathAlphabet{\mathsfit}{bold}{\encodingdefault}{\sfdefault}{bx}{n}
\DeclareMathOperator*{\argmin}{arg\,min}
\theoremstyle{plain}
\newtheorem{theorem}{Theorem}[section]
\newtheorem{lemma}[theorem]{Lemma}
\theoremstyle{definition}
\theoremstyle{remark}
\newcommand{\abs}[1]{\left| #1 \right|}
\DeclareRobustCommand\circled[1]{\tikz[baseline=(char.base)]{\node[shape=circle,draw=black,minimum size=0.35cm,inner sep=0pt,fill=lightgray] (char) {\fontfamily{phv}\selectfont \scriptsize \textbf{#1}};}}
\newcommand*\challenges[1]{\tikz[baseline=(char.base)]{\node[shape=circle,draw=black,minimum size=0.35cm,inner sep=0pt,fill=violet_soft] (char) {\fontfamily{phv}\selectfont \scriptsize #1};}}
\definecolor{lightgray}{HTML}{F5F5F5}
\definecolor{lightblue}{HTML}{DAE8FC}
\definecolor{blue}{HTML}{66B2FF}
\definecolor{violet_soft}{HTML}{CDA2BE}
\title{Conformal prediction for causal effects of continuous treatments}
\begin{document}

\author{%
  Maresa Schröder\\
  LMU Munich\\
  Munich Center for Machine Learning\\
  \texttt{maresa.schroeder@lmu.de} \\
  \And
  Dennis Frauen \\
  LMU Munich\\
  Munich Center for Machine Learning\\
  \texttt{frauen@lmu.de} \\
  \And
  Jonas Schweisthal \\
  LMU Munich\\
  Munich Center for Machine Learning\\
  \texttt{jonas.schweisthal@lmu.de} \\
  \And
  Konstantin Hess \\
  LMU Munich\\
  Munich Center for Machine Learning\\
  \texttt{k.hess@lmu.de} \\
  \And
  Valentyn Melnychuk \\
  LMU Munich\\
  Munich Center for Machine Learning\\
  \texttt{melnychuk@lmu.de} \\
  \And
  Stefan Feuerriegel \\
  LMU Munich\\
  Munich Center for Machine Learning\\
  \texttt{feuerriegel@lmu.de} \\
}

\maketitle

\begin{abstract}
Uncertainty quantification of causal effects is crucial for safety-critical applications such as personalized medicine. A powerful approach for this is conformal prediction, which has several practical benefits due to model-agnostic finite-sample guarantees. Yet, existing methods for conformal prediction of causal effects are limited to binary/discrete treatments and make highly restrictive assumptions, such as known propensity scores. In this work, we provide a novel conformal prediction method for potential outcomes of continuous treatments. We account for the additional uncertainty introduced through propensity estimation so that our conformal prediction intervals are valid even if the propensity score is unknown. Our contributions are three-fold: (1)~We derive finite-sample validity guarantees for prediction intervals of potential outcomes of continuous treatments. (2)~We provide an algorithm for calculating the derived intervals. (3)~We demonstrate the effectiveness of the conformal prediction intervals in experiments on synthetic and real-world datasets. To the best of our knowledge, we are the first to propose conformal prediction for continuous treatments when the propensity score is unknown and must be estimated from data. 
\end{abstract}

\section{Introduction}

Machine learning (ML) for estimating causal quantities such as causal effects and the potential outcomes of treatments is nowadays widely used in real-world applications such as personalized medicine \citep{Feuerriegel.2024}. However, existing methods from causal ML typically focus on point estimates \citep[e.g.,][]{Nie.2021, Schwab.2020}, which means that the uncertainty in the predictions is neglected and hinders the use of causal ML in safety-critical applications \citep{Feuerriegel.2024, Kneib.2023}.
As the following example shows, uncertainty quantification~(UQ) of causal quantities is crucial for reliable decision-making. 

\emph{Motivating example:} Let us consider a doctor who seeks to determine the dosage of chemotherapy in cancer care. This requires estimating the tumor size in response to the dosage for a specific patient profile. A point estimate will predict the \emph{average} size of the tumor post-treatment, but it will neglect that chemotherapy is ineffective for some patients. In contrast, UQ will give a \emph{range} of the tumor size that is to be expected post-treatment, so that doctors can assess the probability that the patients will actually benefit from treatment. This helps to understand the {risk} of a treatment being ineffective and can guide doctors to choose treatments that are effective \emph{with large probability}. 

\begin{wrapfigure}[11]{r}{0.5\textwidth}
    \vspace{-0.5cm}
    \fbox{\includegraphics[width=1\linewidth]{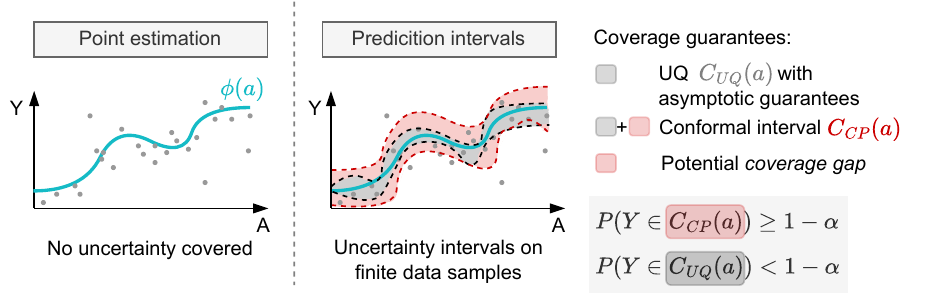}}
    \caption{CP intervals on finite-sample data. UQ methods with asymptotic guarantees might suffer from under-coverage and are often \emph{not} faithful. Thus, we aim at CP with finite-sample guarantees.}
    \label{fig:cp}
\end{wrapfigure}
A powerful method for UQ is \textbf{\emph{conformal prediction}}~(CP) \citep{Lei.2014, Papadopoulos.2002, Vovk.2005}. CP provides model-agnostic and distribution-free, finite-sample validity guarantees for quantifying uncertainty. CP has been widely used for traditional, predictive ML \citep[e.g.,][]{Angelopoulos.2022, Barber.2023, Gibbs.2023}, where it has been shown to yield reliable prediction intervals in finite-sample settings (see Fig.~\ref{fig:cp}). Recently, there have been works that adapt CP for estimating causal quantities (see Fig.~\ref{fig:literature} for an overview). Yet, existing methods for CP focus on binary or discrete treatments \citep[e.g.,][]{Alaa.2023, Jonkers.2024, Lei.2021}, but \emph{not} continuous treatments, which is our novelty. 

Adapting CP to causal quantities is \emph{non-trivial} for two main reasons. \textbf{Challenge}\,\challenges{a}: Intervening on the treatment induces a shift in the covariate distribution, specifically in the propensity score. As a result, the so-called \emph{exchangeability assumption}, which is inherent to CP \citep{Vovk.2005}, is violated between the observational and interventional distribution\footnote{By interventional we refer to the distribution after the intervention}, and because of this, standard CP intervals are not valid. Thus, we must later account for the distribution shift and derive \emph{treatment-conditional guarantees}. \textbf{Challenge}\,\challenges{b}: Assessing the aforementioned shift in the distribution requires information about the propensity score; yet, the propensity score is typically unknown. Hence, estimating the propensity score introduces \emph{additional uncertainty}. However, incorporating the additional uncertainty in the overall CP intervals cannot be done in a simple plug-in manner, and it is highly non-trivial.

Unique to CP for effects of continuous treatments is a third challenge\,\textcolor{black}{\challenges{c}}: data points with the same treatment value are rarely observed. Thus, we later employ smoothing to model the propensity shift. 

In this paper, we develop a CP method for causal quantities, such as potential outcomes, of continuous treatments. Our method is designed to account for the additional uncertainty introduced during propensity estimation and is thus applicable to settings where the propensity score is unknown. 

\textbf{Our contributions:}\footnote{Code and data are available at our public GitHub repository: \url{https://github.com/m-schroder/ContinuousCausalCP}} (1)~We propose a novel method for CP of causal quantities such as potential outcomes or treatment effects of continuous treatments. For this, we mathematically derive finite-sample prediction intervals for potential outcomes under known and unknown propensity functions. (2)~We provide an algorithm for efficiently calculating the derived intervals. (3)~We demonstrate the effectiveness of the derived CP intervals in experiments on multiple datasets.

\vspace{-0.2cm}
\section{Related Work}
\vspace{-0.2cm}

\textbf{UQ for causal effects:} Existing methods for UQ of causal quantities are often based on Bayesian methods \citep[e.g.,][]{Alaa.2017, Hess.2024, Hill.2011, Jesson.2020}. However, Bayesian methods require the specification of a prior distribution based on domain knowledge and are thus neither robust to model misspecification nor generalizable to model-agnostic machine learning models. A common ad~hoc method for computing uncertainty intervals is Monte Carlo (MC) dropout~\citep{Gal.2016}. However, MC dropout yields approximations of the posterior distribution, which are \emph{not} faithful~\citep{LeFolgoc.2021}.

\textbf{Conformal prediction:} CP \citep{Lei.2014, Papadopoulos.2002, Vovk.2005} has recently received large attention for finite-sample UQ. For a prediction model $\phi$ trained on dataset $D_T = (X_i, Y_i)_{i=1,\ldots , m}$ and a new test sample $X_{k}$, CP aims to construct a prediction interval $C(X_{k})$ such that $P(Y_{k} \in C(X_{k})) \geq 1-\alpha$ for some significance level $\alpha$. We refer to \citep{Angelopoulos.2022} for an in-depth overview. Due to its strong finite-sample validity guarantees, CP is widely used for traditional, predictive ML with widespread applications such as in medical settings \citep{Zhan.2020} or drug discovery \citep{Alvarsson.2021, Eklund.2015}.

\begin{wrapfigure}[11]{r}{0.45\textwidth}
\vspace{-0.2cm}
    \hspace{-0.5cm}
    \includegraphics[width=1.1\linewidth]{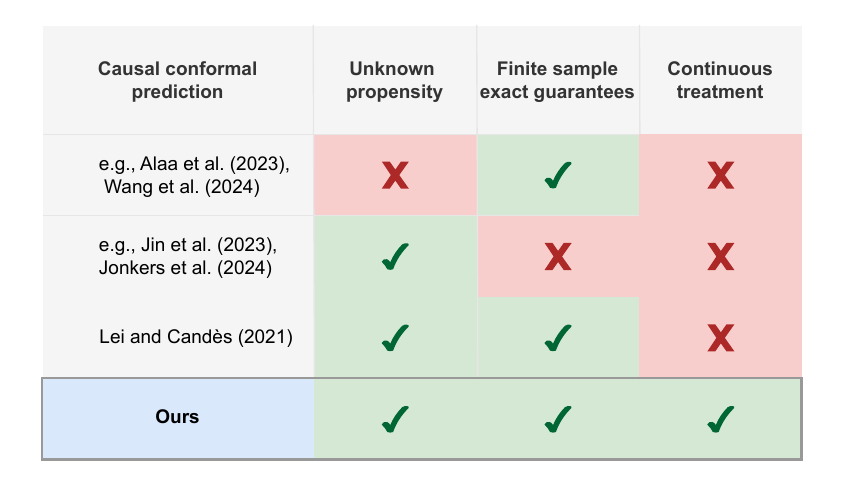}
    \vspace{-2em}
    \caption{Key works on causal CP.}
    \label{fig:literature}
\end{wrapfigure}
Several extensions have been developed for CP. One literature stream focuses on CP with \emph{marginal coverage} under distribution shifts between training and test data \citep[e.g.,][]{Cauchois.2020, Fannjiang.2022, Gendler.2022, Ghosh.2023, Gibbs.2021, Gibbs.2023, Guan.2023, Lei.2021, Podkopaev.2021, Tibshirani.2019, Yang.2022}. Our setting later also involves a distribution shift due to the intervention on the treatment but differs from the latter in that the true distribution shift is unknown. Another literature stream constructs intervals \emph{conditional} on the variables following the shifted distribution. Since, in general, exact conditional coverage has been proven impossible \citep{Lei.2014, Vovk.2012}, the works in this literature stream have two key limitations: (1)~they only guarantee \emph{approximate} conditional coverage \citep[e.g.,][]{Barber.2021, Cai.2014, Lei.2014, Romano.2020}; or (2)~they are restricted to specific data structures such as binary variables \citep[e.g.,][]{Lei.2014, Vovk.2012}. Because of that, none of the existing methods for marginal and conditional coverage can be applied to derive prediction intervals with finite-sample validity guarantees for causal quantities of continuous treatments.

\textbf{Conformal prediction for causal quantities:} Only a few works focus on CP for causal quantities (see Fig. \ref{fig:literature}). Examples are methods aimed at off-policy learning \citep{Taufiq.2022, Zhang.2022b}, conformal sensitivity analysis \citep{Yin.2022b}, or meta-learners for the conditional average treatment effect (CATE) \citep{Alaa.2023, Jonkers.2024,Lei.2021, Wang.2024}. However, there are crucial differences to our setting: First, the existing works (a) assume that the propensity is \emph{known} and thus achieve finite-sample coverage guarantees, or the existing works (b) focus on the easier task of giving \emph{asymptotic} guarantees but then might suffer from under-coverage because of which the intervals are \emph{not} faithful. Only \citet{Lei.2021} provides finite-sample coverage guarantees under estimated propensity scores. However, all existing CP methods are designed for \emph{binary} or \emph{discrete} treatments. Applying such methods to discretized continuous treatments leads to ill-defined causal estimands. Therefore, none of the existing methods are applicable to our continuous treatment setting. We offer a detailed discussion in Supplement~\ref{sec:appendix_discussion}.

%\vspace{-1em}
\textbf{Research gap:} To the best of our knowledge, no work has provided prediction intervals with finite-sample validity guarantees for causal quantities of continuous treatments.

\vspace{-0.5em}
\section{Problem formulation}

\vspace{-0.5em}
\textbf{Notation:}
We denote random variables by capital letters $X$ with realizations $x$. Let $P_X$ be the probability distribution over $X$. We omit the subscript whenever it is obvious from the context. For discrete $X$, we denote the probability mass function by $P(x) = P(X=x)$ and the conditional probability mass functions by $P(y \mid x) = P(Y=y \mid X=x)$ for a discrete random variable $Y$. For continuous $X$, $p(x)$ is the probability density function w.r.t. the Lebesgue measure. 

\textbf{Setting:} Let the data $(X_i, A_i, Y_i)_{i=1,\ldots , n}$ consisting of observed confounders $X \in \mathcal{X}$, a continuous treatment $A \in \mathcal{A}$, and an outcome $Y \in \mathcal{Y}$ be drawn \emph{exchangeably} from the joint distribution $P$. Additionally, let a new sample of confounders $X_{n+1}$ be drawn independently from the marginal distribution $P_X$. Throughout our work, we split the dataset into a proper training dataset $D_T = (X_i, A_i, Y_i)_{i=1,\ldots, m}$, and a calibration dataset $D_C = (X_i, A_i, Y_i)_{i=m+1,\ldots, n}$. Furthermore, let $\pi (a \mid x)$ define the generalized propensity score for treatment $A=a$ given $X=x$. 

Throughout this work, we build upon the potential outcomes framework \citep{Rubin.2005}. We denote the potential outcomes of a hard intervention $a^*$ by $Y(a^*)$ and of a soft intervention $A^*(x) \sim \tilde{\pi}(a \mid x) = P_{A^*\mid X=x}$ by $Y(A^*(x))$.\footnote{Interventions are characterized by two classes: hard (structural) and soft (parametric) interventions. Hard interventions directly affect the treatment by setting it to a specific value and removing the edge in the graph (as in the do-operator). Soft interventions do not change the structure of the graph but affect the conditional distribution of the treatment given the confounders. All interventions affect the propensity score, but the mathematical consequences are different. For soft interventions, the new (interventional) propensity is a function of the original propensity. For hard interventions, the new propensity is given by the Dirac-delta function.} We make three standard identifiability assumptions for causal effect estimation: positivity, consistency, and unconfoundedness \citep[e.g.,][]{Alaa.2023, Jonkers.2024}. Finally, we consider an arbitrary machine learning model $\phi$ to predict the potential outcomes. Hence, we define the outcome prediction function as $\phi: \mathcal{X} \times \mathcal{A} \rightarrow \mathbb{R}$, $\phi(X,A) \mapsto Y$. We assume the dose-response curve to be sufficiently Hölder-smooth. This is common in settings with continuous treatments \citep[e.g.,][]{Schwab.2020, Schweisthal.2023}.

\textbf{Our objective:} In this work, we aim to derive \emph{conformal prediction intervals} $C(X_{n+1}, \Diamond)$ for the prediction of a potential outcome $Y_{n+1}(\Diamond)$ of a new data point under either hard, $\Diamond = a^*$, or soft intervention, $\Diamond = A^*(X_{n+1}) \sim \tilde{\pi}(a \mid X_{n+1})$. The derived intervals are called \emph{valid} for any new exchangeable sample $X_{n+1}$ with non-exchangeable intervention $\Diamond$, i.e., for $\Diamond \in \{a^*, A^*(X_{n+1}) \}$ and significance level $\alpha \in (0,1)$
\begin{align}
    P(Y_{n+1}(\Diamond) \in C(X_{n+1}, \Diamond)) \geq 1-\alpha.
\end{align}
Of note, our CP method can be used with an arbitrary ML model $\phi$ to predict the potential outcomes. 

In CP, the interval $C$ is constructed based on so-called \emph{non-conformity scores} \citep{Vovk.2005}, which capture the performance of the prediction model $\phi$. For example, a common choice for the non-conformity score is the residual of the fitted model $s(X,A,Y) = \abs{Y-\phi(X, A)}$, which we will use throughout our work. For ease of notation, we define $S_i:= s(X_i, A_i, Y_i)$.

\textbf{Why is CP for causal quantities non-trivial?} There are two main reasons. First, coverage guarantees of CP intervals essentially rely on the exchangeability of the non-conformity scores. However, intervening on treatment $A$ shifts the propensity function and, therefore, induces a shift in the covariates $(X, A)$ ($\rightarrow$\,Challenge\,\challenges{a}). Formally, we have a \emph{propensity shift} in which the intervention $\Diamond$ shifts the propensity function $\pi(a\mid x)$ to either a Dirac-delta distribution of the hard intervention, $\delta_{a^*}(a)$, or to the distribution of the soft intervention, $\tilde{\pi}(a\mid x)$, without affecting the outcome function $\phi(x, a)$. As a result, the test data sample under $\Diamond$ does \emph{not follow the same distribution} as the train and calibration data, i.e., the exchangeability assumption is violated. 

Second, the propensity score $\pi$ is commonly \emph{unknown} in observational data and, therefore, must be estimated, which introduces additional uncertainty that one must account for when constructing CP intervals ($\rightarrow$\,Challenge\,\challenges{b}). Crucially, existing coverage guarantees \citep[e.g.,][]{Vovk.2012, Tibshirani.2019} do \emph{not} hold in our setting. Instead, we must derive new intervals with valid \emph{coverage under propensity shift}.

In the following, we address the above propensity shift by performing a calibration conditional on the shift induced by the intervention, which allows us then to yield valid prediction intervals with significance level $(1-\alpha)$ for potential outcomes of a specific hard or soft intervention. We emphasize that the extension to intervals for causal effects is straightforward, in that one combines the intervals for each potential outcome under a certain treatment and without treatment, so that eventually arrives at CP intervals for the individual treatment effect (ITE). Details are in Supplement~\ref{sec:appendix_theory}.

\section{CP intervals for potential outcomes of continuous treatments}
\label{sec:method}

Recall that intervening on test data breaks the necessary exchangeability assumption, i.e., the guaranteed coverage of at least $(1-\alpha)$. Therefore, we now construct CP intervals where we account for a (potentially unknown) propensity shift in the test data induced by the intervention.

\begin{wrapfigure}[14]{r}{0.5\textwidth}
\vspace{-0.5cm}
    \fbox{\includegraphics[width=1\linewidth]{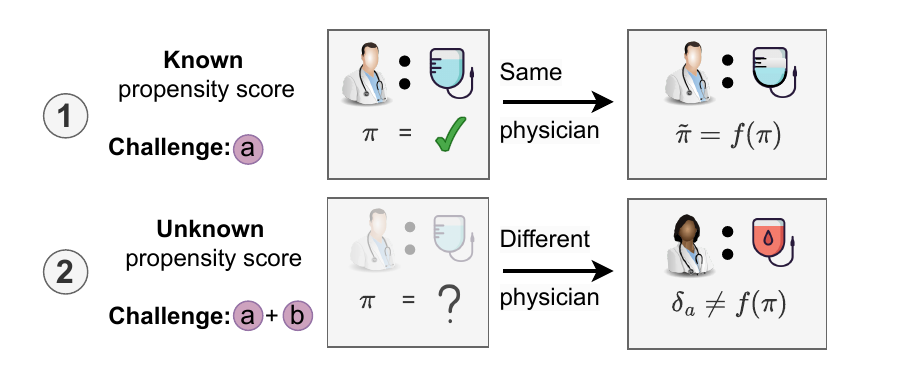}}
    \caption{Use cases of the two scenarios: 
    \protect\circled{1}~The new assignment is a function of the original policy  (i.e., soft intervention).
    \protect\circled{2}~The policy in the dataset is unknown. The new assignment cannot be expressed as a function of the original policy (i.e., hard intervention).
    }
    \label{fig:cases}
\end{wrapfigure}
\textbf{Scenarios:} In our derivation, we distinguish two different {scenarios}(see Fig.~\ref{fig:cases}):
\begin{enumerate}[leftmargin=*]
\vspace{-0.5em}
    \item[\circled{1}] \textbf{Known propensity score} (see Section~\ref{sec:known_prop}): If the propensity score in the observational data is known, it means that the treatment policy is known. Then, we aim to update the policy by increasing/decreasing the treatment by a value $\Delta_A$, i.e., $A^*(X) = A + \Delta_A$.  \\
    \emph{Example:}
    A doctor prescribes a medication to a new patient. Instead of prescribing the same dosage as he would have prescribed to a similar patient in the past, the doctor is interested in the potential health outcome when increasing (or decreasing) the original dosage by an amount $\Delta_A$.
\end{enumerate}

\begin{enumerate}[leftmargin=*]
\vspace{-1em}
    \item[\circled{2}] \textbf{Unknown propensity score} (see Section~
\ref{sec:unknown_prop}): In observational data, the propensity score is unknown. Therefore, we usually assess the effect of hard interventions, i.e., $a^{\ast}$. Here, we face additional uncertainty from propensity score estimation ($\rightarrow$\, Challenge\,\challenges{b}).\footnote{Throughout our main paper, we focus on the setting of hard interventions. In some cases, it might also be of interest to perform soft interventions on the estimated propensity score. We provide derivations for this setting in Supplement~\ref{sec:appendix_theory}.}
\emph{Example:}
    In our running example, a patient sees a doctor who has never prescribed the respective medication and thus will base the decision on observational data (electronic health records), which was collected under a different, unknown treatment policy (e.g., from another physician). Thus, the prescribed intervention (dosage) cannot be expressed in terms of the policy in the observational data.
\end{enumerate}

\vspace{-0.2cm}
In our derivations, we make use of the following two mathematical tools. First, we define the \emph{propensity shift}. Formally, it is the shift between the observational and interventional distributions, $P$ and $\tilde{P}$, in terms of the tilting of the propensity function by a non-negative function $f$. Hence, we have
\begin{align}\label{eqn:prop_shift}
    \tilde{\pi}(a \mid x) = \frac{f(a,x)}{\mathbb{E}_P[f(A,X)]} \pi (a \mid x).
\end{align}
for some $f$ with $\mathbb{E}_P[f(X, A)]>0$ and $a \in \mathcal{A}, x \in \mathcal{X}$.

Second, our CP method will build upon ideas from so-called split conformal prediction \citep{Papadopoulos.2002, Vovk.2005}, yet with crucial differences. In our methods, the calibration step differs from the standard procedure in that we \emph{conditionally calibrate} the non-conformity scores depending on the tilting function $f$ to achieve marginal coverage for the interventional -- and thus shifted -- data.

\textbf{High-level outline:} Our derivation in Sections~\ref{sec:known_prop} and \ref{sec:unknown_prop} proceed as follows. Following \citep{Gibbs.2023, Romano.2019}, we reformulate split conformal prediction as an augmented quantile regression. Let $S_i$ represent the non-conformity score of the sample $(X_i, A_i, Y_i)$ for $i=m+1,\ldots, n$ of the calibration dataset and $S_{n+1} = S$ an imputed value for the unknown score of the new sample. We define 
%\small
\begin{align}
    \hat{\theta}_S := \argmin\limits_{\theta \in \mathbb{R}} \frac{1}{n-m} \left( \sum_{i=m+1}^n l_{\alpha}(\theta, S_i) + l_{\alpha}(\theta, S) \right),
\end{align}
%\normalsize
where 
\begin{align}
    l_{\alpha}(\theta, S) := (\alpha - \mathbf{1}_{[\theta-S < 0]})(\theta-S) = \begin{cases}
        (1-\alpha)(S-\theta) , & \text{ if } S \geq \theta , \\
        \alpha (\theta -S) , & \text{ if } S < \theta.
    \end{cases}
\end{align}
Of note, $\hat{\theta}_S$ is an estimator of the $(1-\alpha)$-quantile of the non-conformity scores \citep{Koenker.1978, Steinwart.2011}. Using $\hat{\theta}_S$, we construct the CP interval with the desired coverage $(1-\alpha)$. However, the interval is only valid for exchangeable data. Quantile regression might yield non-unique solutions that can depend on the indices of the scores \citep{Gibbs.2023}, so we later restrict the analysis to solvers invariant to the data ordering.\footnote{We note that commonly used solvers, such as interior point solvers, are invariant to the data ordering.
}

\subsection{Scenario 1: Known propensity score}
\label{sec:known_prop}

We first consider scenario~\circled{1} with known propensity scores. Here, existing CP intervals are not directly applicable due to the shift from old to new propensity ($\rightarrow$\,Challenge\,\challenges{a}). For our derivation, we need the following lemma building upon and generalizing the intuition presented above.

\begin{lemma}[\citep{Gibbs.2023}]
\label{lem:finite_dim_classes}
    Let $\mathcal{F}$ define a finite-dimensional function class that includes the function $f$  characterizing the shift in the (potentially unknown) propensity function $\pi$ (see Eq.~\ref{eqn:prop_shift}). Define the distribution-shift-calibrated $(1-\alpha)$-quantile of the non-conformity scores as 
    %\small
    \begin{align} \label{eqn:general_quantile_reg}
    \hat{g}_S(X_{n+1}):= \argmin\limits_{g \in \mathcal{F}} \frac{1}{n -m} \left( \sum_{i=m+1}^n l_{\alpha}(g(X_i), S_i) + l_{\alpha}(g(X_{n+1}), S) \right)
    \end{align}
    %\normalsize
    for an imputed guess $S$ of the $(n+1)$-th non-conformity score $S_{n+1}$. The prediction interval  
    \begin{align}
        C(X_{n+1}) := \{ y \mid S_{n+1}(y) \leq \hat{g}_{S_{n+1}(y)}(X_{n+1}) \}
    \end{align}
    for the true $S_{n+1}$ given a realization of $Y_{n+1} = y$ satisfies the desired coverage guarantee under all distribution shifts $f \in \mathcal{F}$, i.e.,
    \begin{align}
        P_f(Y_{n+1} \in C(X_{n+1})) \geq 1-\alpha.
    \end{align}
\end{lemma}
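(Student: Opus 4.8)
The plan is to reduce the coverage guarantee $P_f(Y_{n+1} \in C(X_{n+1})) \geq 1-\alpha$ to a single sign condition on an expectation, and then to verify that condition by coupling the first-order optimality (KKT) conditions of the pinball-loss program in \eqref{eqn:general_quantile_reg} with the exchangeability of the augmented sample. First I would rewrite the tilted coverage probability. Since the shift $f \in \mathcal{F}$ is non-negative with $\mathbb{E}_P[f] > 0$ and the interventional law satisfies $\mathrm{d}\tilde{P}/\mathrm{d}P \propto f$ by \eqref{eqn:prop_shift}, I obtain
\[
P_f\big(Y_{n+1} \in C(X_{n+1})\big) - (1-\alpha) = \frac{1}{\mathbb{E}_P[f]}\, \mathbb{E}_P\big[f(X_{n+1})\,(\mathbbm{1}\{Y_{n+1} \in C(X_{n+1})\} - (1-\alpha))\big],
\]
so it suffices to prove that the expectation on the right-hand side is non-negative.

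The next step identifies the coverage event and turns optimality into a deterministic inequality. Writing $\Phi$ for a basis of the $d$-dimensional class $\mathcal{F}$, so that every $g \in \mathcal{F}$ equals $\beta^\top \Phi$, the crucial observation is \emph{self-consistency}: evaluating $C$ at the true outcome imputes the true score, $S_{n+1}(Y_{n+1}) = S_{n+1}$, hence $\hat{g}_{S_{n+1}}$ coincides with the full-sample fit $\hat{g}$ that minimizes $\sum_{i=1}^{n+1} l_\alpha(g(X_i), S_i)$ over the true scores, and therefore $\mathbbm{1}\{Y_{n+1} \in C(X_{n+1})\} = \mathbbm{1}\{S_{n+1} \leq \hat{g}(X_{n+1})\}$. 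At the optimum there exist subgradients $v_i \in \partial_\theta l_\alpha(\hat{g}(X_i), S_i)$ — equal to $-(1-\alpha)$ when $S_i > \hat{g}(X_i)$, to $\alpha$ when $S_i < \hat{g}(X_i)$, and lying in $[-(1-\alpha), \alpha]$ at ties — satisfying $\sum_{i=1}^{n+1} \Phi(X_i)\, v_i = 0$. Pairing this against the coefficient vector of $f = c^\top \Phi$ gives $\sum_i f(X_i)\, v_i = 0$. A three-case check shows $v_i + (1-\alpha) \leq \mathbbm{1}\{S_i \leq \hat{g}(X_i)\}$ for every $i$, with slack only at ties; multiplying by $f(X_i) \geq 0$, summing, and substituting $\sum_i f(X_i)\, v_i = 0$ yields
\[
\sum_{i=1}^{n+1} f(X_i)\,\big(\mathbbm{1}\{S_i \leq \hat{g}(X_i)\} - (1-\alpha)\big) \geq 0 .
\]

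I would then average this deterministic bound using exchangeability. Because the objective $\sum_{i=1}^{n+1} l_\alpha(g(X_i), S_i)$ is invariant under permutations of the indices, a data-order-invariant solver returns a $\hat{g}$ that is a symmetric function of the augmented sample; combined with the exchangeability of $(X_i, A_i, Y_i)_{i=1}^{n+1}$, each summand $f(X_i)(\mathbbm{1}\{S_i \leq \hat{g}(X_i)\} - (1-\alpha))$ has the same expectation as the $(n+1)$-th one. Dividing the inequality above by $n+1$ and taking $\mathbb{E}_P$ therefore identifies the $(n+1)$-th term's expectation with the (non-negative) average, which by the opening reduction gives $P_f(Y_{n+1} \in C(X_{n+1})) \geq 1-\alpha$.

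The main obstacle, I expect, is not the algebra but the two degeneracies. First, at ties $S_i = \hat{g}(X_i)$ the subdifferential becomes set-valued and the clean identity $v_i + (1-\alpha) = \mathbbm{1}\{\cdot\}$ degrades to an inequality; I must check that the resulting slack carries the correct sign, so that ties can only \emph{help} coverage. Second, quantile regression may admit non-unique minimizers, which is exactly why the statement restricts attention to solvers invariant to the data ordering, as flagged in the excerpt — this permutation symmetry is precisely what the exchangeability averaging step consumes, and without it the final expectation identity would break down.
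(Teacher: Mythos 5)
Your proposal is correct and follows essentially the same route as the paper: the paper does not prove Lemma~\ref{lem:finite_dim_classes} itself (it is imported from \citet{Gibbs.2023}), but the argument you give --- reduce coverage under $P_f$ to the sign of $\mathbb{E}_P[f(X_{n+1})(\mathbbm{1}\{Y_{n+1}\in C\}-(1-\alpha))]$, use the pinball-loss optimality conditions to obtain the deterministic inequality $\sum_i f(X_i)(\mathbbm{1}\{S_i\le \hat g(X_i)\}-(1-\alpha))\ge 0$, and average by exchangeability of the augmented sample --- is exactly the one the paper replays in its proofs of Theorems~\ref{thm:known_prop} and \ref{thm:unknown_prop}. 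The only cosmetic difference is that you phrase the optimality conditions via primal subgradients $v_i$ while the paper passes through the Lagrangian dual and complementary slackness (its multipliers $\eta_i$ are just $-v_i$), and you correctly flag the two degeneracies (ties and non-unique minimizers) that the paper also handles via the tie term and the order-invariant-solver restriction.
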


Building upon Lemma~\ref{lem:finite_dim_classes}, we derive our first main result in Theorem~\ref{thm:known_prop}. We define the finite-dimensional function class of interest as $\mathcal{F} := \{ \theta \frac{\pi(a + \Delta_A \mid x)}{\pi(a \mid x)} \mid \theta \in \mathbb{R}^+\}$. It is easy to verify that all $f \in \mathcal{F}$ represent the desired propensity shift to $\tilde{\pi}(a \mid x) = \pi(a + \Delta_A \mid x)$ as defined in Eq.~\ref{eqn:prop_shift}. 

However, note that the optimization problem in Eq.~\ref{eqn:general_quantile_reg} requires knowledge about the \emph{true} or \emph{optimal} imputed scores $S_{n+1}$. Directly solving the problem in this form would require running it for all possible imputed values $S \in \mathbb{R}$, i.e.,  an infinite amount of times. As a remedy, we exploit the \emph{strong duality property} and present our results in terms of a dual problem formulation.

\vspace{0.1cm}
\begin{theorem}[Conformal prediction intervals for known baseline policy]
\label{thm:known_prop}
    %Let $f \in \mathcal{F} := \{ \theta \frac{\pi(a + \Delta_A \mid x)}{\pi(a \mid x)} \mid \theta \in \mathbb{R}\}$, 
    Consider a new datapoint with $X_{n+1} = x_{n+1}$, $A_{n+1} = a_{n+1}$, and $A^{\ast}(X_{n+1}) = a^* = a_{n+1} + \Delta_A$. Let $\eta^{S} = \{\eta_{m+1}^{S}, \ldots, \eta_{n+1}^{S} \} \in \mathbb{R}^{n-m}$ be the optimal solution to
    %\small
    \begin{equation}
    \label{eq:known_prop}
    \begin{aligned}
        &\max_{\eta_i, i=m+1,\ldots,n+1} \;\; \min_{\theta > 0} \ \sum_{i=m+1}^{n} \eta_i \, \left( S_i - \theta \frac{\pi(a_i + \Delta_A \mid x_i)}{\pi(a_i \mid x_i)} \right) 
        + \eta_{n+1} \left( S - \theta \frac{\pi(a^{\ast} \mid x_{n+1})}{\pi(a_{n+1} \mid x_{n+1})} \right)\\
        &\text{ s.t.} \qquad  -\alpha \leq \eta_i \leq 1-\alpha, \quad \forall i=m+1,\ldots,n+1 ,
    \end{aligned}    
    \end{equation}
    %\normalsize
    for an imputed unknown $S_{n+1}=S$.
    Furthermore, let $S^{\ast}$ be defined as the maximum $S$ s.t. $\eta_{n+1}^{S} < 1-\alpha$. Then, the prediction interval
    \begin{align}
        C(x_{n+1}, a^*):=\{ y \mid S_{n+1}(y) \leq S^{\ast}\}
    \end{align}
    satisfies the desired coverage guarantee
    %\small
    \begin{align}
        P(Y(A^{\ast}(X_{n+1})) \in C(X_{n+1}, A^*(X_{n+1}))) \geq 1-\alpha.
    \end{align}
\end{theorem}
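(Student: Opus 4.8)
The plan is to derive the theorem directly from Lemma~\ref{lem:finite_dim_classes} by (i) verifying that the covariate shift induced by the soft intervention $A^*(X)=A+\Delta_A$ lies in the one-dimensional class $\mathcal{F}$, and (ii) showing that the max--min program in Eq.~\eqref{eq:known_prop} together with the threshold $S^*$ reproduces exactly the quantile-regression interval of the lemma. For step (i), I would first pin down the shift: under consistency and unconfoundedness, the conditional law of $Y_{n+1}(a^*)$ given $X_{n+1}=x$ coincides with the observational law of $Y$ given $X=x,A=a^*$. Combined with the definition $\tilde{\pi}(a\mid x)=\pi(a+\Delta_A\mid x)$, the joint law of $(X_{n+1},a^*,Y_{n+1}(a^*))$ is the tilting $\tilde{P}$ of $P$ from Eq.~\eqref{eqn:prop_shift} with Radon--Nikodym weight $h(x,a):=\pi(a+\Delta_A\mid x)/\pi(a\mid x)$. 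Since $h=f/\mathbb{E}_P[f]$ up to the positive constant absorbed into $\theta$, this shift is a member of $\mathcal{F}$, so the $P_f$-coverage guaranteed by the lemma is precisely the target probability $P(Y(A^*(X_{n+1}))\in C)$. It then suffices to prove that the $S^*$-interval equals the lemma's interval $\{y\mid S_{n+1}(y)\le \hat{g}_{S_{n+1}(y)}(X_{n+1})\}$.

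Next I would translate the primal quantile regression of Eq.~\eqref{eqn:general_quantile_reg}, restricted to $g=\theta h\in\mathcal{F}$, into the dual program of Eq.~\eqref{eq:known_prop}. The key tool is the variational form of the pinball loss, $l_{\alpha}(u,S)=\max_{\eta\in[-\alpha,1-\alpha]}\eta(S-u)$, which rewrites the objective as $\min_{\theta>0}\sum_i\max_{\eta_i}\eta_i(S_i-\theta h_i)$, where $h_i=h(x_i,a_i)$ and $h_{n+1}=h(x_{n+1},a_{n+1})=\pi(a^*\mid x_{n+1})/\pi(a_{n+1}\mid x_{n+1})$. Because the summand is linear in $\eta$ and convex in $\theta$ over convex constraint sets, Sion's minimax theorem lets me interchange the two operators to obtain exactly Eq.~\eqref{eq:known_prop}. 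In particular, the optimal $\eta^S$ are the subgradient weights of the primal optimizer $\hat{\theta}_S$, satisfying stationarity $\sum_i\eta_i^S h_i=0$ together with complementary slackness: $\eta_i^S=1-\alpha$ when $S_i>\hat{\theta}_S h_i$, $\eta_i^S=-\alpha$ when $S_i<\hat{\theta}_S h_i$, and $\eta_i^S\in[-\alpha,1-\alpha]$ at ties.

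I would then apply these KKT relations at the test index $n+1$ to identify the interval. For an imputed value $S$, the lemma's inclusion condition $S\le\hat{g}_S(x_{n+1})=\hat{\theta}_S h_{n+1}$ holds if and only if $\eta_{n+1}^S<1-\alpha$, since complementary slackness forces $\eta_{n+1}^S$ to its upper bound $1-\alpha$ whenever $S>\hat{\theta}_S h_{n+1}$. Establishing that $S\mapsto\eta_{n+1}^S$ is non-decreasing then makes $\{S\mid\eta_{n+1}^S<1-\alpha\}$ a lower ray with supremum $S^*$, so that $\{y\mid S_{n+1}(y)\le S^*\}$ and $\{y\mid S_{n+1}(y)\le\hat{g}_{S_{n+1}(y)}(x_{n+1})\}$ coincide. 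Invoking Lemma~\ref{lem:finite_dim_classes} on the shift identified in the first step delivers the claimed coverage $P(Y(A^*(X_{n+1}))\in C(X_{n+1},A^*(X_{n+1})))\ge 1-\alpha$.

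The main obstacle, I expect, is exactly this last equivalence: proving monotonicity of $S\mapsto\eta_{n+1}^S$ and correctly handling the boundary tie $S=\hat{\theta}_S h_{n+1}$ (the gap between strict and non-strict coverage that motivates defining $S^*$ as the \emph{maximum} $S$ with $\eta_{n+1}^S<1-\alpha$), especially because quantile regression admits a continuum of dual optima and non-unique primal solutions. To make $\eta^S$ well defined I would restrict, as the paper anticipates, to a permutation-invariant solver and argue that the selected dual solution inherits the required monotonicity while preserving the exchangeability of the weights across the $n+1$ points. A secondary technical point is justifying the minimax interchange under the open constraint $\theta>0$, which I would resolve by optimizing over $\theta\ge 0$ on a compactified range and checking, via positivity of the propensity, that the boundary is never binding.
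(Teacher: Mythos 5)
Your proposal follows essentially the same route as the paper's proof: verify that the soft-intervention shift lies in the one-dimensional class $\mathcal{F}$ (using $\mathbb{E}_P[f(A,X)]=\theta$), dualize the pinball-loss quantile regression to obtain Eq.~\eqref{eq:known_prop}, use complementary slackness and the monotonicity of $S\mapsto\eta_{n+1}^S$ to identify the $S^{\ast}$-threshold with the interval of Lemma~\ref{lem:finite_dim_classes}, and conclude coverage from the exchangeability of the weighted scores. The only cosmetic differences are that you dualize via the variational form of the pinball loss and Sion's minimax theorem where the paper introduces slack variables and takes the LP Lagrangian, and that you delegate the final coverage bound to Lemma~\ref{lem:finite_dim_classes} where the paper re-derives it explicitly; you also correctly anticipate the two technical points the paper handles (the monotonicity lemma, cited from Gibbs et al., and the restriction to permutation-invariant solvers).
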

%\vspace{-1em}
\begin{proof}
    We provide a full proof in Supplement~\ref{sec:appendix_proof_known}.
    Here, we briefly outline the underlying idea of the proof. First, we show that the function class $\mathcal{F}$ indeed satisfies Eq.~\eqref{eqn:prop_shift} for the intervention $A^*(X) = A + \Delta_A$, and we then rewrite Eq.~\eqref{eqn:general_quantile_reg} as a convex optimization problem. Next, we exploit the strong duality property. We optimize over the corresponding dual problem to receive a dual prediction set with equal coverage probability. Finally, we derive $S^{\ast}$ from the dual prediction set to construct $C_{n+1}$ and prove the overall coverage guarantee.
\end{proof}

\subsection{Scenario 2: Unknown treatment policy}
\label{sec:unknown_prop}

If the underlying treatment policy is unknown, the only possible intervention is a hard intervention $a^{\ast}$. As described above, measuring the induced propensity shift is non-trivial due to two reasons: (i)~The propensity model needs to be estimated, which introduces additional uncertainty affecting the validity of the intervals ($\rightarrow$\,Challenge\,\challenges{b}). (ii)~The density function corresponding to a hard intervention is given by the Dirac delta function 
\begin{align}
    \delta_{a^{\ast}}(a) := \begin{cases}
        0, &\quad \text{for} \quad a \neq a^{\ast} , \\
        \infty, &\quad \text{for} \quad a = a^{\ast} ,
    \end{cases}
\end{align}
which hinders a direct adaptation of Theorem~\ref{thm:known_prop} due to the inherent discontinuity of the improper function. Hence, we make the following assumption on the propensity estimator.

\textbf{Assumption 1.} \emph{The estimation error of the propensity function $\hat{\pi}(a\mid x)$ is bounded in the sense that, for all $i=1,\ldots,n+1$, there exists $M>0$ such that}
\begin{align}
    c_{a_i} := \frac{\hat{\pi}(a_{i} \mid x_{i})}{\pi(a_{i} \mid x_{i})} \in \left[ \tfrac{1}{M}, M \right].
\end{align}
Under Assumption 1, the distribution shift induced by the intervention is then defined as
\begin{equation}
\hspace{-0.3cm}
    \tilde{\pi}(a \mid x) 
    = \frac{\delta_{a^{\ast}}(a)}{\hat{\pi}(a \mid x)} \frac{\hat{\pi}(a \mid x)}{\pi(a \mid x)} \pi(a \mid x)
    =  c_a \frac{\delta_{a^{\ast}}(a)}{\hat{\pi}(a \mid x)} \pi(a \mid x)
    = \frac{f(a,x)}{\mathbb{E}_P[f(A,X)]} \pi({a \mid x}),
\end{equation}
for a suitable function $f$. We further formulate $\delta_{a^{\ast}}(a)$ in terms of a Gaussian function as
\begin{align}
    \delta_{a^{\ast}}(a) = \lim\limits_{\sigma \rightarrow 0} \frac{1}{\sqrt{2\pi} \sigma} \exp \left( -\frac{(a-a^{\ast})^2}{2\sigma^2} \right).
\end{align}
This motivates the following lemma. Therein, we specify the class $\mathcal{F}$ of tilting functions $f$ that represent the distribution shift induced by the hard intervention $a^{\ast}$.
\begin{lemma} \label{lem:point_intervention_shift}
    For $\sigma >0$, we define
    \begin{align}
        f(a, x) := \frac{c_a}{\sqrt{2\pi} \sigma} \frac{\exp(-\frac{(a-a^{\ast})^2}{2\sigma^2})}{\hat{\pi}(a\mid x)}
    \end{align}
    with $\mathbb{E}_P[f(A,X)] = 1$.
    Furthermore, we define the finite-dimensional function class $\mathcal{F}$ 
    \begin{align}\hspace{-0.25cm}
        \mathcal{F} := \Bigg\{\frac{c_{a}}{\sqrt{2\pi} \sigma} \frac{\exp(-\frac{(a-a^{\ast})^2}{2\sigma^2})}{\hat{\pi}(a\mid x)} \Bigm\vert 0<\sigma, \frac{1}{M} \leq c_a \leq M \Bigg\}.
    \end{align}   
    Then, $f(a,x) \in \mathcal{F}$ for all $c_a \in [\frac{1}{M}, M]$ and $\sigma \rightarrow 0$. As a result, the distribution shift 
    \begin{align}
        \tilde{\pi}(a \mid x) = \lim\limits_{\sigma \rightarrow 0} \frac{c_{a}}{\sqrt{2\pi} \sigma} \, \frac{\exp(-\frac{(a-a^{\ast})^2}{\sigma^2})}{\hat{\pi}(a\mid x)} \pi({a \mid x})
    \end{align}
    can be represented in terms of Eq.~\eqref{eqn:prop_shift} through functions $f \in \mathcal{F}$.
\end{lemma}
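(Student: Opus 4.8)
The plan is to verify the lemma's assertions by direct computation: a normalization identity, an immediate membership check, and a weak-limit argument for the shift representation. First I would establish $\mathbb{E}_P[f(A,X)] = 1$. The key simplification is the definition $c_a = \hat{\pi}(a\mid x)/\pi(a\mid x)$ from Assumption~1: substituting it into $f$ and multiplying by the observational conditional density $\pi(a\mid x)$ cancels both the estimated propensity $\hat{\pi}(a\mid x)$ in the denominator of $f$ and the ratio defining $c_a$, leaving only a Gaussian kernel centered at $a^{\ast}$. Writing the expectation as an iterated integral,
\[
\mathbb{E}_P[f(A,X)] = \int_{\mathcal{X}} p(x) \left( \int_{\mathcal{A}} \frac{1}{\sqrt{2\pi}\sigma} \exp\!\left( -\tfrac{(a-a^{\ast})^2}{2\sigma^2} \right) da \right) dx ,
\]
the inner integral equals one by the normalization of the Gaussian (the same representation of $\delta_{a^{\ast}}$ stated just above the lemma), and integrating the marginal $p(x)$ over $\mathcal{X}$ again gives one. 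Crucially, this holds for every fixed $\sigma>0$, so no limit is required for this step.

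Membership $f \in \mathcal{F}$ is then immediate by construction: $\mathcal{F}$ is exactly the set of functions of the form of $f$ with $\sigma>0$ and $c_a \in [1/M, M]$, and Assumption~1 guarantees that the relevant ratio $c_a = \hat{\pi}(a\mid x)/\pi(a\mid x)$ lies in $[1/M,M]$, so the true tilting function sits inside the (over-parameterized) class $\mathcal{F}$ that Lemma~\ref{lem:finite_dim_classes} requires. For the shift representation, I would plug $f$ into the tilting identity~\eqref{eqn:prop_shift} with $\mathbb{E}_P[f]=1$ to obtain $\tilde{\pi}(a\mid x) = f(a,x)\pi(a\mid x)$, which after the same cancellation as above is the Gaussian kernel; taking $\sigma \to 0$ and invoking the Gaussian representation of the Dirac delta identifies the limit with $\delta_{a^{\ast}}(a)$, i.e., precisely the interventional policy of a hard intervention at $a^{\ast}$. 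Hence the propensity shift induced by the hard intervention is captured through functions of $\mathcal{F}$ in the sense of~\eqref{eqn:prop_shift}.

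The hard part will be making the $\sigma\to 0$ step rigorous: the Dirac delta is not a genuine density, so the limit must be read weakly, as convergence of the tilted measures tested against bounded continuous functions, and one must confirm that the unit normalization survives the limit, which it does because $\mathbb{E}_P[f]=1$ holds uniformly in $\sigma$. A secondary care point is justifying the interchange of the $a$- and $x$-integrals via Fubini's theorem, which follows from the positivity of $f$ together with the boundedness of $c_a$ under Assumption~1.
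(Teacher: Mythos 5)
Your argument is correct and follows essentially the same route as the paper's proof: decompose $\delta_{a^{\ast}}(a)$ by inserting the ratio $\hat{\pi}/\pi$, invoke Assumption~1 to place $c_a$ in $[\tfrac{1}{M},M]$, and represent the resulting tilt as the $\sigma\to 0$ limit of Gaussian elements of $\mathcal{F}$. You are in fact somewhat more careful than the paper, which asserts $\mathbb{E}_P[f(A,X)]=1$ without the explicit cancellation-and-normalization computation you give (note only that your Gaussian normalization requires the exponent $-(a-a^{\ast})^2/(2\sigma^2)$, consistent with the paper's main-text definition of $\delta_{a^{\ast}}$ but not with the exponent as printed in the lemma itself).
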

\vspace{-0.4cm}
\begin{proof}
    See Supplement~\ref{sec:appendix_proof_lemma}.
\end{proof}

Following the motivation in scenario\,\circled{1}, we thus aim to estimate the $(1-\alpha)$-quantile of the non-conformity scores under the distribution shift in Lemma~\ref{lem:finite_dim_classes}. We can reformulate this problem as
\vspace{-0.2cm}
\begin{equation}  
\label{eqn:PS}
%\tag{P$_S$}
\begin{aligned}
    &\min\limits_{{\sigma >0, \frac{1}{M} \leq c_a \leq M}}\quad \sum_{i=m+1}^{n+1} (1-\alpha)u_i + \alpha v_i\\
    \text{ s.t.} & \quad S_i - \frac{c_{a}}{\sqrt{2\pi} \sigma} \frac{\exp{(-\frac{(a_i-a^{\ast})^2}{2\sigma^2})}}{\hat{\pi}(a_i\mid x_i)} -u_i + v_i = 0, \quad \forall i=m+1\ldots,n+1,\\
    & \quad u_i, v_i \geq 0,  \quad \forall i=m+1\ldots,n+1
\end{aligned}
\end{equation}
for the imputed score $S_{n+1}=S$. As the score is unknown, computing the CP interval would require solving \eqref{eqn:PS} for all $S \in \mathbb{R}$, yet which is computationally infeasible. Before, we exploited properties of the dual optimization problem and the Lagrange multipliers of the convex problem in Theorem~\ref{thm:known_prop} to efficiently compute the CP intervals. However, the present non-convex problem does not automatically allow for the same simplifications. Instead, we now present a remedy for efficient computation of the CP intervals in the following lemma. We prove Lemma~\ref{lem:invexity} in Supplement~\ref{sec:appendix_proofs}.
\begin{lemma}\label{lem:invexity}
    The problem \eqref{eqn:PS} is Type-I invex and satisfies the linear independence constraint qualification (LICQ).
\end{lemma}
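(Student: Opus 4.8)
The plan is to read the lemma as the statement that makes first‑order analysis of the nonconvex program \eqref{eqn:PS} legitimate: Type‑I invexity will guarantee that every KKT point of \eqref{eqn:PS} is a \emph{global} minimizer, while LICQ guarantees that a minimizer \emph{is} a KKT point. Together they let us pin down the optimal value of \eqref{eqn:PS}, and hence the distribution‑shift‑calibrated quantile of Lemma~\ref{lem:finite_dim_classes}, purely through the KKT system — which is exactly what makes an efficient computation in the spirit of Theorem~\ref{thm:known_prop} possible even though convexity is lost. I would therefore verify the two properties separately, after fixing coordinates: the decision variables are the shared bandwidth $\sigma$, the per‑sample tilts $c_{a_1},\dots,c_{a_{n+1}}$, and the slacks $u_i,v_i$, and I abbreviate $\psi_i(\sigma):=\frac{1}{\sqrt{2\pi}\sigma}\,\frac{\exp(-(a_i-a^{\ast})^2/\sigma^2)}{\hat\pi(a_i\mid x_i)}>0$, so that each equality constraint reads $h_i = S_i - c_{a_i}\psi_i(\sigma) - u_i + v_i = 0$ and couples to other indices only through $\sigma$.

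For LICQ I would exploit this block structure. Since $\nabla h_i$ carries a $-1$ in the $u_i$ slot and a $+1$ in the $v_i$ slot, and these slots are disjoint across $i$, the equality gradients are automatically independent; a loss of independence can only occur at an index where the equality is active \emph{together with} both slack bounds ($u_i=v_i=0$) and one box bound on $c_{a_i}$. I would rule this out in the relevant small‑$\sigma$ regime as follows. A short computation gives $\psi_i'(\sigma)=0$ only at $\sigma^2=2(a_i-a^{\ast})^2$, so $\partial_\sigma h_i = -c_{a_i}\psi_i'(\sigma)\neq 0$ once $\sigma$ is small; and because the treatment is continuous, almost surely $a_i\neq a^{\ast}$ for every observed $i$, whence $\psi_i(\sigma)\to0$ and the residual $S_i - c_{a_i}\psi_i(\sigma)$ stays near $S_i>0$, so the two slacks cannot vanish simultaneously (at the test index $\psi_{n+1}\to\infty$ drives its residual to $-\infty$, so the same holds). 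Thus the degenerate case is vacuous in the limit, and reading off the nonzero $\sigma$‑coordinate of the single remaining candidate gradient forces all multipliers to zero, giving LICQ at every feasible point.

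For Type‑I invexity I would construct the kernel $\eta(z,\bar z)$ explicitly, using that the objective $\sum_i (1-\alpha)u_i+\alpha v_i$ is linear and that each $h_i$ is affine in $(c_{a_i},u_i,v_i)$ with all nonlinearity confined to $\sigma$. Choosing the slack components $\eta_{u_i}=u_i-\bar u_i$ and $\eta_{v_i}=v_i-\bar v_i$ makes the objective inequality $f(z)-f(\bar z)\ge\nabla f(\bar z)^{\top}\eta$ hold with equality, and reduces the two nonnegativity‑bound conditions to $u_i\ge0$, $v_i\ge0$, true by feasibility. Splitting each equality into the pair $h_i\le0$, $-h_i\le0$ forces $\nabla h_i(\bar z)^{\top}\eta=0$; since $\psi_i(\bar\sigma)>0$ this is solvable for the tilt component (taking, say, $\eta_\sigma=0$), yielding $\eta_{c_{a_i}}=\big[c_{a_i}\psi_i(\sigma)-\bar c_{a_i}\psi_i(\bar\sigma)\big]/\psi_i(\bar\sigma)$.

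The hard part is the last family of conditions — the box constraints $\tfrac1M\le c_{a_i}\le M$ — together with the genuine nonconvexity in $\sigma$, since $\psi_i$ is unimodal with peak at $\sigma^2=2(a_i-a^{\ast})^2$ and so no global convexity argument is available. Concretely, at an index whose lower box bound is active one needs $\eta_{c_{a_i}}\ge0$, i.e.\ $c_{a_i}\psi_i(\sigma)\ge\tfrac1M\psi_i(\bar\sigma)$ for \emph{all} feasible $\sigma$, which the non‑monotone $\psi_i$ violates for the naive kernel. I expect to resolve this, first, by invoking that the Type‑I inequalities are only needed for the \emph{active} constraints (the inactive multipliers drop out by complementary slackness in the sufficiency argument), and, second and crucially, by passing to the Dirac limit $\sigma\to0$: there $\psi_i(\sigma)\to0$ for every $a_i\neq a^{\ast}$, so the offending tilt terms — and with them the nonconvex coupling and the right‑hand side $\tfrac1M\psi_i(\bar\sigma)$ of the box condition — vanish, leaving only affine and linear pieces on which the constructed kernel is provably valid. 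Making this limiting reduction rigorous, while keeping the kernel consistent across the shared variable $\sigma$, is the step I anticipate to be the main obstacle.
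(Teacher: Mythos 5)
There are two genuine gaps here, one structural and one substantive. Structurally, you have changed the problem: in \eqref{eqn:PS} the optimization variables are a \emph{single} scalar $c_a$ and a single $\sigma$ shared by all $n+1$ equality constraints (plus the slacks $u_i,v_i$), not per-sample tilts $c_{a_1},\dots,c_{a_{n+1}}$. This matters for both halves of the lemma. For LICQ, once you work with the correct variables, the paper's argument is a one-line linear-algebra observation: each $\nabla h_i$ carries the entries $-1$ and $+1$ in the $u_i$ and $v_i$ coordinates, and no other $h_j$ touches those coordinates, so $\sum_i \lambda_i \nabla h_i = 0$ forces $\lambda = 0$ directly — no appeal to a small-$\sigma$ regime, to $a_i \neq a^{\ast}$ holding almost surely, or to $S_i>0$ keeping the slacks apart is needed. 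Your LICQ argument, by contrast, only rules out degeneracy ``in the relevant small-$\sigma$ regime'' and ``in the limit,'' which does not establish the qualification at every feasible point as the lemma requires; and the probabilistic step ($a_i\neq a^{\ast}$ a.s.) is out of place in a deterministic constraint-qualification claim.

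The more serious gap is in the Type-I invexity half, which you explicitly leave unfinished: you identify the box constraints on the tilt and the non-monotonicity of $\psi_i$ in $\sigma$ as an obstruction to your candidate kernel and propose to escape it by ``passing to the Dirac limit $\sigma\to 0$,'' conceding that making this rigorous is the main open step. That limiting reduction is not a proof — the lemma is a statement about the finite-$\sigma$ program \eqref{eqn:PS}, and the coverage argument in Theorem~\ref{thm:unknown_prop} needs the KKT sufficiency at the actual optimizer $(\sigma^{S},c_a^{S})$, not at a degenerate limit. The paper instead exhibits a single explicit kernel
\begin{align}
    \nu\big((u,v,c_a,\sigma),(u_0,v_0,c_{a_0},\sigma_0)\big) = (-u_{0_1},\ldots,-u_{0_{n+1}},-v_{0_1},\ldots,-v_{0_{n+1}},-c_{a_0},0)^{T},
\end{align}
whose $\sigma$-component is zero; with this choice the objective inequality of Hanson's Type-I definition reduces to $\sum_i (1-\alpha)u_i + \alpha v_i \geq 0$, which holds by feasibility, and no limiting argument is invoked. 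Your instinct that the kernel should make the linear objective inequality tight and that the nonlinearity in $\sigma$ must be neutralized is on the right track, but the correct way to neutralize it is to set the $\sigma$-component of the kernel to zero, not to send $\sigma\to 0$ in the problem itself. As it stands, the invexity claim — the part of the lemma that actually carries the weight in Theorem~\ref{thm:unknown_prop} — is not established by your argument.
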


Lemma~\ref{lem:invexity} allows us to derive properties of the present non-convex optimization problem in terms of the Karush-Kuhn-Tucker (KKT) conditions. For this, we note that the fulfillment of the LICQ serves as a sufficient regularity condition for the KKT to hold at any (local) optimum of \eqref{eqn:PS}. Combined with the Type-I invexity of the objective function and the constraints, the KKT conditions are not only necessary but also sufficient for a global optimum. As a result, we can employ the KKT conditions at the optimal values\footnote{We provide an interpretation of the optimal values in Supplement~\ref{sec:appendix_experiments}. Therein, we further discuss the implications of the proposed kernel smoothing of $\delta_{a^{\ast}}(a)$.} $\sigma^{\ast}$ and $c_a^{\ast}$ to derive coverage guarantees of our CP interval in a similar fashion as in Theorem~\ref{thm:known_prop}. We thus arrive at the following theorem to provide CP intervals for the scenario with unknown propensity scores.

\begin{theorem}[Conformal prediction intervals for unknown propensity scores]
\label{thm:unknown_prop}
    Let $u^{S} =  \{u_{m+1}^{S}, \ldots, u_{n+1}^{S} \}, v^{S} =  \{v_{m+1}^{S}, \ldots, v_{n+1}^{S} \} \in \mathbb{R}^{n-m}$, $\sigma^{S}, c_a^{S} \in \mathbb{R}$ be the optimal solution to
    \vspace{-0.2cm}
    \begin{equation}
    \begin{aligned}
        &\min\limits_{{\sigma >0, \frac{1}{M} \leq c_a \leq M}}\quad \sum_{i=m+1}^{n+1} (1-\alpha)u_i + \alpha v_i\\
        \text{ s.t.} & \quad S_i - \frac{c_{a}}{\sqrt{2\pi} \sigma} \frac{\exp{(-\frac{(a_i-a^{\ast})^2}{2\sigma^2})}}{\hat{\pi}(a_i\mid x_i)} -u_i + v_i = 0, \quad \forall i=m+1\ldots,n+1\\
        & \quad u_i, v_i \geq 0, \quad \forall i=m+1\ldots,n+1
    \end{aligned}
    \end{equation}
    for an imputed unknown $S_{n+1}=S$.
    Let $S^{\ast}$ be defined as the maximum $S$ s.t. $v_{n+1}^{S} > 0$. Then,
    \begin{align}
        C(X_{n+1}, a^{\ast}):=\{ y \mid S_{n+1}(y) \leq S^{\ast}\}
    \end{align}
    satisfies the desired coverage guarantee
    \begin{align}
        P(Y(a^{\ast}) \in C(X_{n+1}, a^{\ast})) \geq 1-\alpha.
    \end{align}
\end{theorem}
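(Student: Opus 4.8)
The plan is to mirror the proof of Theorem~\ref{thm:known_prop}, but to replace its convex strong-duality step with the invexity-based KKT analysis supplied by Lemma~\ref{lem:invexity}. First I would invoke Lemma~\ref{lem:point_intervention_shift}, which shows that the propensity shift induced by the hard intervention $a^\ast$ is representable, in the sense of Eq.~\eqref{eqn:prop_shift}, by tilting functions $f \in \mathcal{F}$ from the finite-dimensional class parameterized by $(\sigma, c_a)$. Lemma~\ref{lem:finite_dim_classes} then applies, so the distribution-shift-calibrated interval $\{y \mid S_{n+1}(y) \leq \hat{g}_{S_{n+1}(y)}(X_{n+1})\}$ obtained from the quantile regression in Eq.~\eqref{eqn:general_quantile_reg} already enjoys coverage at least $1-\alpha$ under the intervention. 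The remaining task is computational: to prove that the scalar threshold $S^\ast$ read off from the optimizer of \eqref{eqn:PS} reproduces exactly this coverage-valid interval.

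Second, I would verify that \eqref{eqn:PS} is the slack-variable reformulation of the quantile regression $\hat{g}_S$ over $\mathcal{F}$: writing each residual as $S_i - f(a_i,x_i) = u_i - v_i$ with $u_i,v_i$ its positive and negative parts turns the pinball loss into $(1-\alpha)u_i + \alpha v_i$, so the two programs coincide. I would then form the Lagrangian with a multiplier $\eta_i$ on the $i$-th equality constraint and nonnegative multipliers on $u_i \geq 0$ and $v_i \geq 0$. Stationarity in $u_i$ and $v_i$ yields the box constraints $-\alpha \leq \eta_i \leq 1-\alpha$, exactly matching the dual feasibility set in Theorem~\ref{thm:known_prop}, while complementary slackness gives the key implications $v_i > 0 \Rightarrow \eta_i = -\alpha$ and $u_i > 0 \Rightarrow \eta_i = 1-\alpha$. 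Crucially, Lemma~\ref{lem:invexity} is what lets me use these relations at the true optimizer $(\sigma^S, c_a^S, u^S, v^S)$: LICQ makes the KKT conditions necessary, and Type-I invexity makes them sufficient for a \emph{global} optimum, so no convexity is required.

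Third, I would translate the primal slack $v_{n+1}^S$ into a membership test. At the optimum the test residual satisfies $u_{n+1}^S - v_{n+1}^S = S - \hat{g}_S(X_{n+1})$, so whenever it is nonpositive we have $u_{n+1}^S = 0$ and $v_{n+1}^S = \hat{g}_S(X_{n+1}) - S$; hence $v_{n+1}^S > 0$ holds precisely when $S < \hat{g}_S(X_{n+1})$, which is exactly the membership criterion of Lemma~\ref{lem:finite_dim_classes} up to the measure-zero boundary $S = \hat{g}_S(X_{n+1})$. A standard monotonicity property of quantile regression --- that $S \mapsto \hat{g}_S(X_{n+1})$ increases with slope at most one, so $S - \hat{g}_S(X_{n+1})$ is nondecreasing --- shows that $\{S : v_{n+1}^S > 0\}$ is a lower half-line. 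Therefore $S^\ast = \max\{S : v_{n+1}^S > 0\}$ is well defined and equals the fixed point $S^\ast = \hat{g}_{S^\ast}(X_{n+1})$, so $C(X_{n+1}, a^\ast) = \{y \mid S_{n+1}(y) \leq S^\ast\}$ coincides with the coverage-valid interval, giving $P(Y(a^\ast) \in C(X_{n+1}, a^\ast)) \geq 1-\alpha$.

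The hard part will be controlling the Dirac limit $\sigma \to 0$ and the scalar relaxation of $c_a$. Because the exact hard-intervention tilt is attained only as $\sigma \to 0$ within $\mathcal{F}$ (Lemma~\ref{lem:point_intervention_shift}), I must argue that both the coverage statement and the KKT characterization are stable under this limit --- for instance by showing that the optimal value of \eqref{eqn:PS} and the sign of $v_{n+1}^S$ converge, so that $S^\ast$ tends to the correct threshold rather than degenerating as the Gaussian approaches the improper density. Relatedly, since \eqref{eqn:PS} optimizes a single scalar $c_a \in [\tfrac{1}{M}, M]$ rather than the per-point ratios $c_{a_i}$ of Assumption~1, I would use that assumption to confirm that this family still covers the true shift, so that optimizing over it preserves validity rather than eroding it. These two issues --- the improper-density limit and the non-convex global-optimality guarantee underlying the KKT step --- are precisely where the argument departs from the convex Theorem~\ref{thm:known_prop} and demand the most care.
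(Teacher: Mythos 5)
Your proposal follows essentially the same route as the paper's proof: Lemma~\ref{lem:point_intervention_shift} for the shift class, the slack-variable reformulation of the pinball loss, KKT conditions justified at a global optimum via the LICQ and Type-I invexity of Lemma~\ref{lem:invexity}, complementary slackness to pin down the multipliers, and monotonicity of $S \mapsto v_{n+1}^S$ to make $S^{\ast}$ well defined, with the coverage bound closed by exchangeability. The one caveat is that the "standard monotonicity property of quantile regression" you invoke is not taken off the shelf in the paper --- it is exactly its Lemma~\ref{lem:S_to_v}, proved by a dedicated contradiction argument for the non-convex program \eqref{eqn:PS} --- and your closing concerns about the $\sigma \to 0$ limit are ones the paper itself sidesteps by establishing coverage under each fixed $f$ in the class rather than in the limit.
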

\begin{proof}
    See Supplement~\ref{sec:appendix_proof_unknown}.
\end{proof}
In certain applications, it might be beneficial to fix $\sigma$ to a small value $\sigma_0$ to approximate $\delta_{a^\ast}(a)$ though a soft intervention and only construct the CP interval through optimizing over $c_a$. We present an alternative theorem for this case in Supplement~\ref{sec:appendix_theory}.

We now use Thm.~\ref{thm:unknown_prop} to present an algorithm for computing CP intervals of potential outcomes from continuous treatment variables under unknown propensities in Alg.~\ref{alg:algorithm_unknown}. We present a similar algorithm for scenario\,\circled{1} with known propensities and discuss the computational complexity in Supplement~\ref{sec:appendix_algorithm}.

\section{Experiments}
\label{sec:experiments}

\textbf{Baselines:} As we have discussed above, there are \emph{no} baselines that directly compute prediction intervals with finite-sample guarantees for potential outcomes of continuous treatments. Therefore, we compare our method against MC dropout \citep{Gal.2016} and deep ensemble methods \citep{Lakshminarayanan.2017}. Yet, we again emphasize that MC dropout is an ad~hoc method with poor approximations of the posterior, which is known to give unfaithful intervals \citep{LeFolgoc.2021}. 
Furthermore, we report the empirical coverage achieved by intervals from a Gaussian process regression (GP). By doing so, we consider a method that assesses the underlying aleatoric uncertainty. Additionally, we compare our method against the naive vanilla CP and the method by \citep{Lei.2021} for binary treatments in Supplements~\ref{sec:appendix_tcga} and~\ref{sec:appendix_experiments}.

\textbf{Implementation:} All methods are implemented with $\phi$ as a multi-layer perceptron (MLP) and an MC dropout regularization of rate 0.1. Crucially, we use the \emph{identical} MLP for both our CP method and MC dropout. Hence, all performance gains must be attributed to the coverage guarantees of our conformal method. In the MC dropout baseline, the uncertainty intervals are computed via Monte Carlo sampling. In scenario \circled{2}, we perform conditional density estimation by conditional normalizing flows \citep{Trippe.2018}. Implementation and training details are in Supplement~\ref{sec:appendix_experiments}.

\textbf{Performance metrics:} We evaluate the methods in terms of whether the prediction intervals are \emph{faithful} \citep[e.g.,][]{Hess.2024}. That is, we compute whether the \emph{empirical coverage} of the prediction intervals surpasses the threshold of $1-\alpha$ for different significance levels $\alpha \in \{ 0.05, 0.1, 0.2 \}$. Additionally, we report the width of the resulting intervals in Supplement~\ref{sec:appendix_results}.

\subsection{Datasets} 
\label{sec:datasets}

\textbf{Synthetic datasets:} We follow common practice and evaluate our methods using synthetic datasets \citep[e.g.,][]{Alaa.2023, Jin.2023}. Due to the fundamental problem of causal inference, counterfactual outcomes are never observable in real-world datasets. Synthetic datasets enable us to access counterfactual outcomes and thus to benchmark methods in terms of whether the computed intervals are faithful. Additionally, we perform experiments on the semi-synthic TCGA dataset in Supplement~\ref{sec:appendix_tcga}. We hereby show the applicability of our method to high-dimensional real-world data in a controlled environment.

\textbf{Medical dataset:} We demonstrate the applicability of our CP method to medical datasets on the MIMIC-III dataset \citep{Johnson.2016}. MIMIC-III contains de-identified health records from patients admitted to critical care units at large hospitals. Our goal is to predict patient outcomes in terms of blood pressure when treated with a different duration of mechanical ventilation. We use 8 confounders from medical practice (e.g., respiratory rate, hematocrit). Overall, we consider 14,719 patients, split into train (60\%), validation (10\%), calibration (20\%), and test (10\%) sets. Details are in Supplement~\ref{sec:appendix_experiments}.

\newpage
\subsection{Results for synthetic datasets}

\begin{wrapfigure}[11]{r}{0.6\textwidth}
    \vspace{-1cm}
    %\centering
    \hspace{-0.7cm}
    \includegraphics[width=1.1\linewidth]{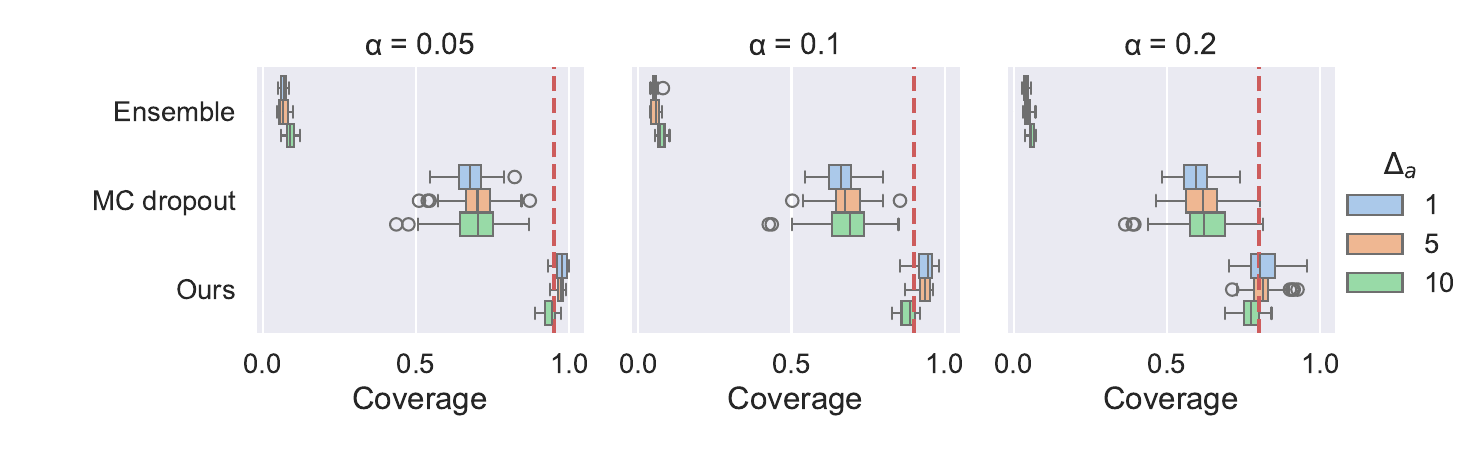}
    \vspace{-0.5cm}
    \caption{Comparison of \emph{faithfulness} on dataset 1 across 50 runs. Larger values are better. For each $\alpha$, the plots show how often the empirical intervals contain the true outcome. Intervals should ideally yield a coverage of $1-\alpha$ (red line).}%\TODO{besser: Deep ensemble}}
    \label{fig:boxplots_coverage_1}
\end{wrapfigure}
We consider two synthetic datasets with different propensity scores and outcome functions. \underline{Dataset~1} uses a step-wise propensity function and a concave outcome function. \underline{Dataset~2} is more complex and uses a Gaussian propensity function and oscillating outcome functions. Both datasets contain a single discrete confounder, a continuous treatment, and a continuous outcome. 

\begin{wrapfigure}[8]{r}{0.6\textwidth}
    \vspace{-.8cm}
    \hspace{-0.7cm}
    \includegraphics[width=1.1\linewidth]{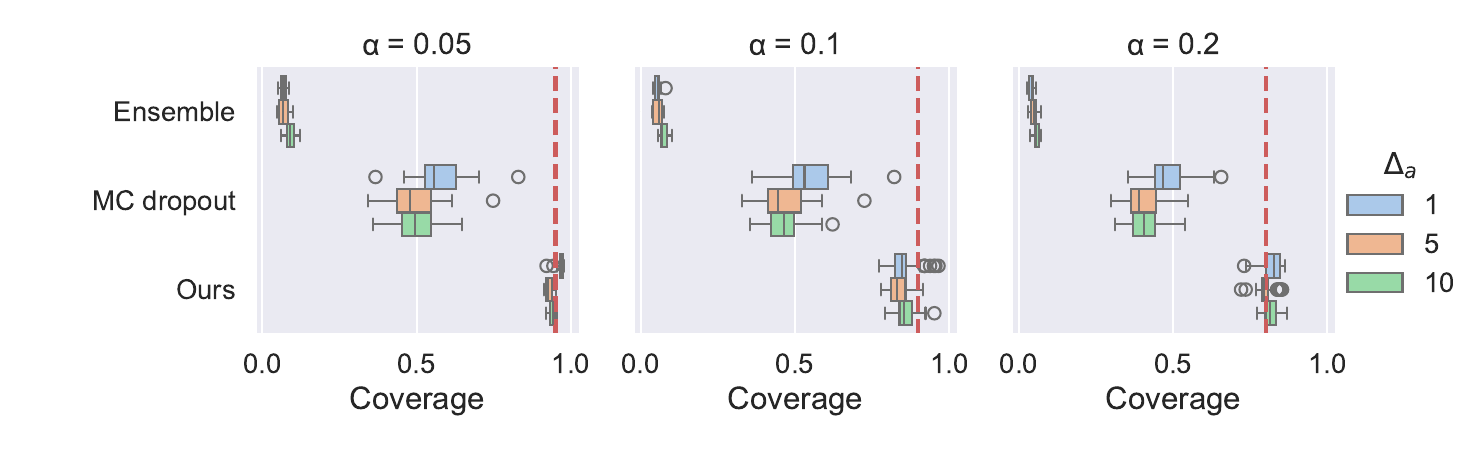}
    \vspace{-0.5cm}
    \caption{Comparison of \emph{faithfulness} on dataset 2 across 50 runs. Larger values are better.}
    \label{fig:boxplots_coverage_2}
\end{wrapfigure}
By choosing low-dimensional data-sets, we later render it possible to plot the treatment--response curves so that one can inspect the prediction intervals visually. (We later also show that our method scales to high-dimensional settings as part of the real-world dataset.) Details about the data generation are in Supplement~\ref{sec:appendix_experiments}.

We evaluate the faithfulness of our CP intervals. On each dataset, we analyze the performance of the intervals in the presence of various soft interventions $\Delta_a \in \{1,5,10\}$ and hard interventions $a^{\ast} \in \{7x, 10x\}$ for each $X=x$. We average the empirical coverage across 50 runs with random seeds. The results are in Fig.~\ref{fig:boxplots_coverage_1} and Fig.~\ref{fig:boxplots_coverage_2}. Additionally, we report the empirical coverage of the GP.

\begin{wraptable}[11]{r}{0.5\textwidth}
    \tiny
    \centering
        \vspace{-0.4cm}
        \begin{tabular}{clccc} \toprule
        %\footnotesize
        & & \multicolumn{3}{c}{Coverage}\\
        \cmidrule(lr){3-5}
        Data & Intervention & $\alpha = 0.05$ &  $\alpha = 0.1$ & $\alpha = 0.2$ \\ 
        \midrule
        1 & $a^{\ast} = 7x$  & \textbf{1.00} / 0.19 & \textbf{0.90} / 0.13 & \textbf{0.83} / 0.11\\
        &  $a^{\ast} = 10x$ & \textbf{1.00} / 0.28 & \textbf{0.91} / 0.23 & \textbf{0.88} / 0.11\\
        \midrule
        2 & $a^{\ast} = 7x$ & \textbf{1.00} / 0.02 & \textbf{0.94} / 0.02 & \textbf{0.85} / 0.02\\
        &  $a^{\ast} = 10x$ & \textbf{1.00} / 0.08 &            \textbf{0.84} / 0.07 & \textbf{0.83} / 0.07\\
        \bottomrule
        \end{tabular}
        \captionof{table}{Coverage of the intervals from our CP method / MC dropout for various hard interventions $a^\ast$ and significance levels $\alpha$. Intervals with coverage $\geq 1-\alpha$ are considered faithful.
        }
    \label{tab:results_unknown}
\end{wraptable}
We make the following observations. First, our CP intervals comply with the targeted significance level $\alpha$ and are therefore faithful. Second, both MC dropout and the deep ensemble method have a considerably lower coverage, implying that the intervals are \emph{not} faithful. This is in line with the literature, where MC dropout is found to produce poor approximations of the posterior \citep{LeFolgoc.2021}. In particular, the ensemble method is highly unfaithful. Thus, we will not consider this baseline in all the following experiments. Third, our method has only a small variability in terms of empirical coverage, whereas the empirical coverage of MC dropout varies greatly. This corroborates the robustness of our method. Fourth, the results are consistent for both datasets. Fifth, the GP is only able to capture the true potential outcome in the prediction intervals for small distribution shifts ($\Delta = 1$) 
\begin{wrapfigure}[16]{r}{0.5\textwidth}
    \centering
    \vspace{-.4cm}
    \includegraphics[width=1\linewidth]{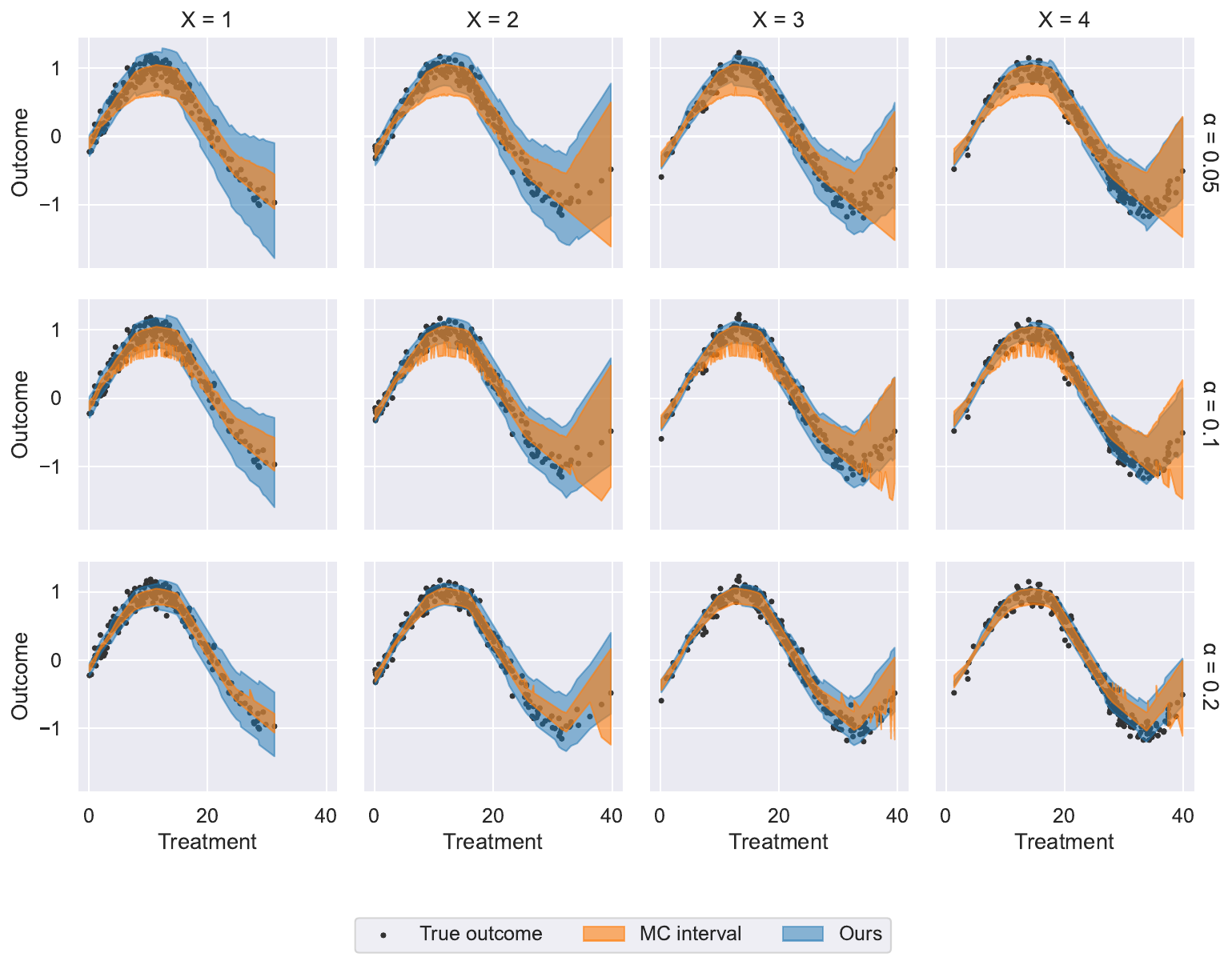}
    \vspace{-0.6cm}
    \caption{CP intervals for multiple significance levels $\alpha$ for Dataset 1 with intervention $\Delta = 5$.}
    \label{fig:intervals_synthetic}    
\end{wrapfigure}

\vspace{-.2cm}
on dataset 2. However, the empirical coverage is extremely low: $\alpha = 0.05$: 0.125; $\alpha = 0.1$: 0.125; $\alpha = 0.2$: 0.0833. This aligns with our expectations, as the aleatoric uncertainty in our experiments is low. Therefore, the GP intervals are small (average width of 0.1293) and barely valid after the intervention. 
In sum, this demonstrates our CP method's effectiveness. 

Table~\ref{tab:results_unknown} presents the empirical coverage of the intervals from our CP method vs. MC dropout across different $\alpha$ and hard interventions $a^{\ast}$. We observe that our CP intervals are effective and achieve the intended coverage. In contrast, MC dropout does not provide faithful intervals. Our findings are again in line with the literature, where MC dropout is found to produce poor approximations of the posterior and thus might provide poor coverage \citep{LeFolgoc.2021}. We present further results in Supplement~\ref{sec:appendix_results}.

\textbf{Insights:} We plot the intervals across different levels $\alpha$ and covariates $X$ (Fig.~\ref{fig:intervals_synthetic}), allowing us to inspect the intervals visually. We observe that the intervals behave as expected: they become sharper with increasing significance level $\alpha$. We further see that our CP intervals are slightly wider (see details in Supplement~\ref{sec:appendix_results}), yet this is intended as it ensures that the intervals are faithful. Our CP intervals (\textcolor{blue}{blue}) generally include the true outcome. In contrast, the intervals from MC dropout (\textcolor{orange}{orange}) often do \emph{not} include the true outcome (e.g., see the bottom row Fig.~\ref{fig:intervals_synthetic}) and are thus \emph{not} faithful.

\begin{wrapfigure}[12]{r}{0.4\textwidth}
    \centering
    \vspace{-1.2cm}
    \includegraphics[width=\linewidth]{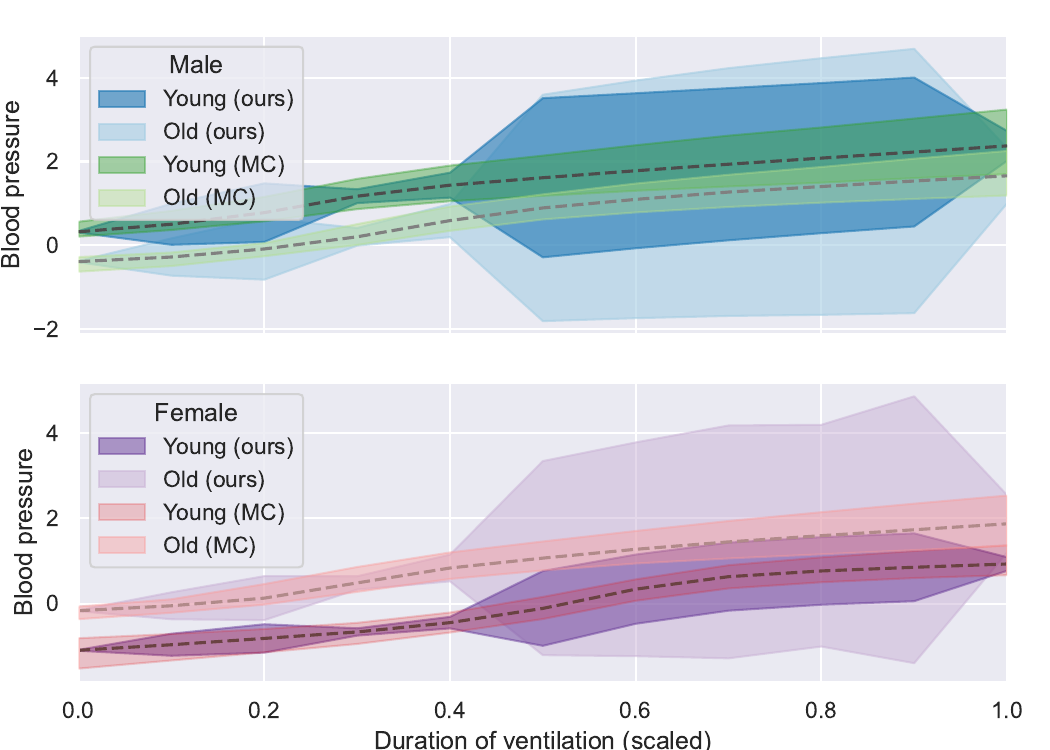}
    \caption{CP intervals for potential outcomes of increasing duration of mechanical ventilation for 4 exemplary patients.}
    \label{fig:real_world_results}
\end{wrapfigure}

\subsection{Results for the MIMIC dataset}

Recall that numerically evaluating causal inference methods on real-world data is not possible due to the fundamental problem of causal inference. Therefore, we provide insights on the MIMIC dataset in Fig.~\ref{fig:real_world_results}. We compare the CP intervals of two male and two female patients of differing ages when treated with increasing duration of mechanical ventilation. Our intervals show higher uncertainty in treatment regions rarely included in the training data (high medium to high treatments). The intervals given by MC-dropout are narrower, which suggests lower coverage, confirming the effectiveness of our proposed method. This finding aligns with our observation from the synthetic datasets.

\section{Discussion}
\label{sec:discussion}

\textbf{Limitations:}
As with any other method, our UQ method has limitations that offer opportunities for future research. Our method relies on the quality of the propensity estimator. Although we incorporate estimation errors when constructing intervals, poorly estimated propensities could potentially lead to wide prediction intervals. We acknowledge that our intervals are conservative for point interventions and for segments of the output space with limited calibration data, implying that a representative calibration dataset is essential for the performance of our method. As for all CP methods, the use of sample splitting may reduce data efficiency. Furthermore, we note that the optimization procedure can be computationally expensive for large CATE vectors.
% In future work, the intervals should be sharpened.
% confounding bias

\textbf{Broader impact:}
Our method makes a significant step toward UQ for potential outcomes and, thus, toward \emph{reliable} decision-making. We provided strong theoretical and empirical evidence that our prediction intervals are valid. To this end, our method fills an important demand for using causal ML in medical practice and other safety-critical applications with limited data.

\textbf{Considerations for practical application:}
First, our method is designed for single continuous treatments with a univariate outcome, which is common in dosing (e.g., determining the dosage of chemotherapy/insulin/hypertension drugs). In \emph{randomized controlled trials} (when the propensity score is known), we can directly apply Theorem~\ref{thm:known_prop} on top of the trained outcome prediction model $\phi$ to construct our CP intervals at target coverage level $\alpha$.  In observational studies, when the propensity score is unknown, we require the practitioner to have sufficient prior knowledge to set a bound $M$ on the propensity estimation error. The practitioner should choose this bound with care and rather in a conservative way to prevent undercoverage. Then, the CP intervals can be derived based on Theorem~\ref{thm:unknown_prop}. We emphasize again that our intervals do not suffer from undercoverage due to limited data, as CP guarantees are valid for \emph{any sample size} and our method is kept fully \emph{model agnostic}, enabling the practitioner to make reliable judgements based on limited data and an ML model of choice.

\textbf{Conclusion:} We presented a novel conformal prediction method for potential outcomes of continuous treatments with finite-sample guarantees. Our method extends naturally to treatment effects. A key strength of our method is that the intervals are valid under distribution shifts introduced by the treatment assignment, even if the propensity score is unknown and has to be estimated.

\newpage
\section*{Acknowledgements} 
This paper is supported by the DAAD program Konrad Zuse Schools of Excellence in Artificial Intelligence, sponsored by the Federal Ministry of Education and Research.

\bibliography{bibliography}

%\bibliographystyle{plainnat}
%\bibliographystyle{abbrvnat}
%\bibliography{bibliography}

% %%%%%%%%%%%%%%%%%%%%%%%%%%%%%%%%%%%%%%%%%%%%
% %Checklist

\newpage
\section*{NeurIPS Paper Checklist}

\begin{enumerate}

\item {\bf Claims}
    \item[] Question: Do the main claims made in the abstract and introduction accurately reflect the paper's contributions and scope?
    \item[] Answer: \answerYes{} % Replace by \answerYes{}, \answerNo{}, or \answerNA{}.
    \item[] Justification: The main claims made in the abstract and introduction are later stated as theorems and proofed in the Appendix.
    \item[] Guidelines:
    \begin{itemize}
        \item The answer NA means that the abstract and introduction do not include the claims made in the paper.
        \item The abstract and/or introduction should clearly state the claims made, including the contributions made in the paper and important assumptions and limitations. A No or NA answer to this question will not be perceived well by the reviewers. 
        \item The claims made should match theoretical and experimental results, and reflect how much the results can be expected to generalize to other settings. 
        \item It is fine to include aspirational goals as motivation as long as it is clear that these goals are not attained by the paper. 
    \end{itemize}

\item {\bf Limitations}
    \item[] Question: Does the paper discuss the limitations of the work performed by the authors?
    \item[] Answer: \answerYes{} % Replace by \answerYes{}, \answerNo{}, or \answerNA{}.
    \item[] Justification: The paper discusses the limitations in the end of the main paper as well as the Appendix.
    \item[] Guidelines:
    \begin{itemize}
        \item The answer NA means that the paper has no limitation while the answer No means that the paper has limitations, but those are not discussed in the paper. 
        \item The authors are encouraged to create a separate "Limitations" section in their paper.
        \item The paper should point out any strong assumptions and how robust the results are to violations of these assumptions (e.g., independence assumptions, noiseless settings, model well-specification, asymptotic approximations only holding locally). The authors should reflect on how these assumptions might be violated in practice and what the implications would be.
        \item The authors should reflect on the scope of the claims made, e.g., if the approach was only tested on a few datasets or with a few runs. In general, empirical results often depend on implicit assumptions, which should be articulated.
        \item The authors should reflect on the factors that influence the performance of the approach. For example, a facial recognition algorithm may perform poorly when image resolution is low or images are taken in low lighting. Or a speech-to-text system might not be used reliably to provide closed captions for online lectures because it fails to handle technical jargon.
        \item The authors should discuss the computational efficiency of the proposed algorithms and how they scale with dataset size.
        \item If applicable, the authors should discuss possible limitations of their approach to address problems of privacy and fairness.
        \item While the authors might fear that complete honesty about limitations might be used by reviewers as grounds for rejection, a worse outcome might be that reviewers discover limitations that aren't acknowledged in the paper. The authors should use their best judgment and recognize that individual actions in favor of transparency play an important role in developing norms that preserve the integrity of the community. Reviewers will be specifically instructed to not penalize honesty concerning limitations.
    \end{itemize}

\item {\bf Theory assumptions and proofs}
    \item[] Question: For each theoretical result, does the paper provide the full set of assumptions and a complete (and correct) proof?
    \item[] Answer: \answerYes{} % Replace by \answerYes{}, \answerNo{}, or \answerNA{}.
    \item[] Justification: All results are proven in the Appendix.
    \item[] Guidelines:
    \begin{itemize}
        \item The answer NA means that the paper does not include theoretical results. 
        \item All the theorems, formulas, and proofs in the paper should be numbered and cross-referenced.
        \item All assumptions should be clearly stated or referenced in the statement of any theorems.
        \item The proofs can either appear in the main paper or the supplemental material, but if they appear in the supplemental material, the authors are encouraged to provide a short proof sketch to provide intuition. 
        \item Inversely, any informal proof provided in the core of the paper should be complemented by formal proofs provided in appendix or supplemental material.
        \item Theorems and Lemmas that the proof relies upon should be properly referenced. 
    \end{itemize}

    \item {\bf Experimental result reproducibility}
    \item[] Question: Does the paper fully disclose all the information needed to reproduce the main experimental results of the paper to the extent that it affects the main claims and/or conclusions of the paper (regardless of whether the code and data are provided or not)?
    \item[] Answer: \answerYes{} % Replace by \answerYes{}, \answerNo{}, or \answerNA{}.
    \item[] Justification: The paper discusses the implementation details in´the Appendix. Furthermore, the paper includes a link to the code for reproducibility.
    \item[] Guidelines:
    \begin{itemize}
        \item The answer NA means that the paper does not include experiments.
        \item If the paper includes experiments, a No answer to this question will not be perceived well by the reviewers: Making the paper reproducible is important, regardless of whether the code and data are provided or not.
        \item If the contribution is a dataset and/or model, the authors should describe the steps taken to make their results reproducible or verifiable. 
        \item Depending on the contribution, reproducibility can be accomplished in various ways. For example, if the contribution is a novel architecture, describing the architecture fully might suffice, or if the contribution is a specific model and empirical evaluation, it may be necessary to either make it possible for others to replicate the model with the same dataset, or provide access to the model. In general. releasing code and data is often one good way to accomplish this, but reproducibility can also be provided via detailed instructions for how to replicate the results, access to a hosted model (e.g., in the case of a large language model), releasing of a model checkpoint, or other means that are appropriate to the research performed.
        \item While NeurIPS does not require releasing code, the conference does require all submissions to provide some reasonable avenue for reproducibility, which may depend on the nature of the contribution. For example
        \begin{enumerate}
            \item If the contribution is primarily a new algorithm, the paper should make it clear how to reproduce that algorithm.
            \item If the contribution is primarily a new model architecture, the paper should describe the architecture clearly and fully.
            \item If the contribution is a new model (e.g., a large language model), then there should either be a way to access this model for reproducing the results or a way to reproduce the model (e.g., with an open-source dataset or instructions for how to construct the dataset).
            \item We recognize that reproducibility may be tricky in some cases, in which case authors are welcome to describe the particular way they provide for reproducibility. In the case of closed-source models, it may be that access to the model is limited in some way (e.g., to registered users), but it should be possible for other researchers to have some path to reproducing or verifying the results.
        \end{enumerate}
    \end{itemize}

\item {\bf Open access to data and code}
    \item[] Question: Does the paper provide open access to the data and code, with sufficient instructions to faithfully reproduce the main experimental results, as described in supplemental material?
    \item[] Answer: \answerYes{} % Replace by \answerYes{}, \answerNo{}, or \answerNA{}.
    \item[] Justification: The paper provides a link to an anonymized GitHub repository containing all code necessary for reproducing the results in the paper.
    \item[] Guidelines:
    \begin{itemize}
        \item The answer NA means that paper does not include experiments requiring code.
        \item Please see the NeurIPS code and data submission guidelines (\url{https://nips.cc/public/guides/CodeSubmissionPolicy}) for more details.
        \item While we encourage the release of code and data, we understand that this might not be possible, so “No” is an acceptable answer. Papers cannot be rejected simply for not including code, unless this is central to the contribution (e.g., for a new open-source benchmark).
        \item The instructions should contain the exact command and environment needed to run to reproduce the results. See the NeurIPS code and data submission guidelines (\url{https://nips.cc/public/guides/CodeSubmissionPolicy}) for more details.
        \item The authors should provide instructions on data access and preparation, including how to access the raw data, preprocessed data, intermediate data, and generated data, etc.
        \item The authors should provide scripts to reproduce all experimental results for the new proposed method and baselines. If only a subset of experiments are reproducible, they should state which ones are omitted from the script and why.
        \item At submission time, to preserve anonymity, the authors should release anonymized versions (if applicable).
        \item Providing as much information as possible in supplemental material (appended to the paper) is recommended, but including URLs to data and code is permitted.
    \end{itemize}

\item {\bf Experimental setting/details}
    \item[] Question: Does the paper specify all the training and test details (e.g., data splits, hyperparameters, how they were chosen, type of optimizer, etc.) necessary to understand the results?
    \item[] Answer: \answerYes{} % Replace by \answerYes{}, \answerNo{}, or \answerNA{}.
    \item[] Justification: A short section on the experimental setup is provided in the main paper. More details can be found in the Appendix.
    \item[] Guidelines:
    \begin{itemize}
        \item The answer NA means that the paper does not include experiments.
        \item The experimental setting should be presented in the core of the paper to a level of detail that is necessary to appreciate the results and make sense of them.
        \item The full details can be provided either with the code, in appendix, or as supplemental material.
    \end{itemize}

\item {\bf Experiment statistical significance}
    \item[] Question: Does the paper report error bars suitably and correctly defined or other appropriate information about the statistical significance of the experiments?
    \item[] Answer: \answerYes{} % Replace by \answerYes{}, \answerNo{}, or \answerNA{}.
    \item[] Justification: The paper reports the mean and standard deviation for the empirical coverage.
    \item[] Guidelines:
    \begin{itemize}
        \item The answer NA means that the paper does not include experiments.
        \item The authors should answer "Yes" if the results are accompanied by error bars, confidence intervals, or statistical significance tests, at least for the experiments that support the main claims of the paper.
        \item The factors of variability that the error bars are capturing should be clearly stated (for example, train/test split, initialization, random drawing of some parameter, or overall run with given experimental conditions).
        \item The method for calculating the error bars should be explained (closed form formula, call to a library function, bootstrap, etc.)
        \item The assumptions made should be given (e.g., Normally distributed errors).
        \item It should be clear whether the error bar is the standard deviation or the standard error of the mean.
        \item It is OK to report 1-sigma error bars, but one should state it. The authors should preferably report a 2-sigma error bar than state that they have a 96\% CI, if the hypothesis of Normality of errors is not verified.
        \item For asymmetric distributions, the authors should be careful not to show in tables or figures symmetric error bars that would yield results that are out of range (e.g. negative error rates).
        \item If error bars are reported in tables or plots, The authors should explain in the text how they were calculated and reference the corresponding figures or tables in the text.
    \end{itemize}

\item {\bf Experiments compute resources}
    \item[] Question: For each experiment, does the paper provide sufficient information on the computer resources (type of compute workers, memory, time of execution) needed to reproduce the experiments?
    \item[] Answer: \answerYes{} % Replace by \answerYes{}, \answerNo{}, or \answerNA{}.
    \item[] Justification:  The computational complexity and computing resources are stated in the Appendix.
    \item[] Guidelines:
    \begin{itemize}
        \item The answer NA means that the paper does not include experiments.
        \item The paper should indicate the type of compute workers CPU or GPU, internal cluster, or cloud provider, including relevant memory and storage.
        \item The paper should provide the amount of compute required for each of the individual experimental runs as well as estimate the total compute. 
        \item The paper should disclose whether the full research project required more compute than the experiments reported in the paper (e.g., preliminary or failed experiments that didn't make it into the paper). 
    \end{itemize}
    
\item {\bf Code of ethics}
    \item[] Question: Does the research conducted in the paper conform, in every respect, with the NeurIPS Code of Ethics \url{https://neurips.cc/public/EthicsGuidelines}?
    \item[] Answer: \answerYes{} % Replace by \answerYes{}, \answerNo{}, or \answerNA{}.
    \item[] Justification: \answerNA{}
    \item[] Guidelines:
    \begin{itemize}
        \item The answer NA means that the authors have not reviewed the NeurIPS Code of Ethics.
        \item If the authors answer No, they should explain the special circumstances that require a deviation from the Code of Ethics.
        \item The authors should make sure to preserve anonymity (e.g., if there is a special consideration due to laws or regulations in their jurisdiction).
    \end{itemize}

\item {\bf Broader impacts}
    \item[] Question: Does the paper discuss both potential positive societal impacts and negative societal impacts of the work performed?
    \item[] Answer: \answerYes{} % Replace by \answerYes{}, \answerNo{}, or \answerNA{}.
    \item[] Justification: The paper discusses the broader societal impact of the contribution in the end of the main paper.
    \item[] Guidelines:
    \begin{itemize}
        \item The answer NA means that there is no societal impact of the work performed.
        \item If the authors answer NA or No, they should explain why their work has no societal impact or why the paper does not address societal impact.
        \item Examples of negative societal impacts include potential malicious or unintended uses (e.g., disinformation, generating fake profiles, surveillance), fairness considerations (e.g., deployment of technologies that could make decisions that unfairly impact specific groups), privacy considerations, and security considerations.
        \item The conference expects that many papers will be foundational research and not tied to particular applications, let alone deployments. However, if there is a direct path to any negative applications, the authors should point it out. For example, it is legitimate to point out that an improvement in the quality of generative models could be used to generate deepfakes for disinformation. On the other hand, it is not needed to point out that a generic algorithm for optimizing neural networks could enable people to train models that generate Deepfakes faster.
        \item The authors should consider possible harms that could arise when the technology is being used as intended and functioning correctly, harms that could arise when the technology is being used as intended but gives incorrect results, and harms following from (intentional or unintentional) misuse of the technology.
        \item If there are negative societal impacts, the authors could also discuss possible mitigation strategies (e.g., gated release of models, providing defenses in addition to attacks, mechanisms for monitoring misuse, mechanisms to monitor how a system learns from feedback over time, improving the efficiency and accessibility of ML).
    \end{itemize}
    
\item {\bf Safeguards}
    \item[] Question: Does the paper describe safeguards that have been put in place for responsible release of data or models that have a high risk for misuse (e.g., pretrained language models, image generators, or scraped datasets)?
    \item[] Answer: \answerNA{} % Replace by \answerYes{}, \answerNo{}, or \answerNA{}.
    \item[] Justification: The paper does not pose the risk of misuse.
    \item[] Guidelines:
    \begin{itemize}
        \item The answer NA means that the paper poses no such risks.
        \item Released models that have a high risk for misuse or dual-use should be released with necessary safeguards to allow for controlled use of the model, for example by requiring that users adhere to usage guidelines or restrictions to access the model or implementing safety filters. 
        \item Datasets that have been scraped from the Internet could pose safety risks. The authors should describe how they avoided releasing unsafe images.
        \item We recognize that providing effective safeguards is challenging, and many papers do not require this, but we encourage authors to take this into account and make a best faith effort.
    \end{itemize}

\item {\bf Licenses for existing assets}
    \item[] Question: Are the creators or original owners of assets (e.g., code, data, models), used in the paper, properly credited and are the license and terms of use explicitly mentioned and properly respected?
    \item[] Answer: \answerYes{} % Replace by \answerYes{}, \answerNo{}, or \answerNA{}.
    \item[] Justification: All original owners of code and (real-world) datasets are stated in the paper as well as the GitHub repository.
    \item[] Guidelines:
    \begin{itemize}
        \item The answer NA means that the paper does not use existing assets.
        \item The authors should cite the original paper that produced the code package or dataset.
        \item The authors should state which version of the asset is used and, if possible, include a URL.
        \item The name of the license (e.g., CC-BY 4.0) should be included for each asset.
        \item For scraped data from a particular source (e.g., website), the copyright and terms of service of that source should be provided.
        \item If assets are released, the license, copyright information, and terms of use in the package should be provided. For popular datasets, \url{paperswithcode.com/datasets} has curated licenses for some datasets. Their licensing guide can help determine the license of a dataset.
        \item For existing datasets that are re-packaged, both the original license and the license of the derived asset (if it has changed) should be provided.
        \item If this information is not available online, the authors are encouraged to reach out to the asset's creators.
    \end{itemize}

\item {\bf New assets}
    \item[] Question: Are new assets introduced in the paper well documented and is the documentation provided alongside the assets?
    \item[] Answer: \answerNA{} % Replace by \answerYes{}, \answerNo{}, or \answerNA{}.
    \item[] Justification: The paper does not release new assets.
    \item[] Guidelines:
    \begin{itemize}
        \item The answer NA means that the paper does not release new assets.
        \item Researchers should communicate the details of the dataset/code/model as part of their submissions via structured templates. This includes details about training, license, limitations, etc. 
        \item The paper should discuss whether and how consent was obtained from people whose asset is used.
        \item At submission time, remember to anonymize your assets (if applicable). You can either create an anonymized URL or include an anonymized zip file.
    \end{itemize}

\item {\bf Crowdsourcing and research with human subjects}
    \item[] Question: For crowdsourcing experiments and research with human subjects, does the paper include the full text of instructions given to participants and screenshots, if applicable, as well as details about compensation (if any)? 
    \item[] Answer: \answerNA{} % Replace by \answerYes{}, \answerNo{}, or \answerNA{}.
    \item[] Justification: The paper does not involve crowdsourcing nor research with human subjects.
    \item[] Guidelines:
    \begin{itemize}
        \item The answer NA means that the paper does not involve crowdsourcing nor research with human subjects.
        \item Including this information in the supplemental material is fine, but if the main contribution of the paper involves human subjects, then as much detail as possible should be included in the main paper. 
        \item According to the NeurIPS Code of Ethics, workers involved in data collection, curation, or other labor should be paid at least the minimum wage in the country of the data collector. 
    \end{itemize}

\item {\bf Institutional review board (IRB) approvals or equivalent for research with human subjects}
    \item[] Question: Does the paper describe potential risks incurred by study participants, whether such risks were disclosed to the subjects, and whether Institutional Review Board (IRB) approvals (or an equivalent approval/review based on the requirements of your country or institution) were obtained?
    \item[] Answer: \answerNA{} % Replace by \answerYes{}, \answerNo{}, or \answerNA{}.
    \item[] Justification: The paper does not involve crowdsourcing nor research with human subjects.
    \item[] Guidelines:
    \begin{itemize}
        \item The answer NA means that the paper does not involve crowdsourcing nor research with human subjects.
        \item Depending on the country in which research is conducted, IRB approval (or equivalent) may be required for any human subjects research. If you obtained IRB approval, you should clearly state this in the paper. 
        \item We recognize that the procedures for this may vary significantly between institutions and locations, and we expect authors to adhere to the NeurIPS Code of Ethics and the guidelines for their institution. 
        \item For initial submissions, do not include any information that would break anonymity (if applicable), such as the institution conducting the review.
    \end{itemize}

\item {\bf Declaration of LLM usage}
    \item[] Question: Does the paper describe the usage of LLMs if it is an important, original, or non-standard component of the core methods in this research? Note that if the LLM is used only for writing, editing, or formatting purposes and does not impact the core methodology, scientific rigorousness, or originality of the research, declaration is not required.
    %this research? 
    \item[] Answer: \answerNA{} % Replace by \answerYes{}, \answerNo{}, or \answerNA{}.
    \item[] Justification: The usage of LLMs is not a standard component of the core methods in this paper.
    \item[] Guidelines:
    \begin{itemize}
        \item The answer NA means that the core method development in this research does not involve LLMs as any important, original, or non-standard components.
        \item Please refer to our LLM policy (\url{https://neurips.cc/Conferences/2025/LLM}) for what should or should not be described.
    \end{itemize}

\end{enumerate}

%%%%%%%%%%%%%%%%%%%%%%%%%%%%%%%%%%%%%%%%%%%%%%%%%%%%%%%%%%%%%%%%%%%%%%%%%%%%%%%
%%%%%%%%%%%%%%%%%%%%%%%%%%%%%%%%%%%%%%%%%%%%%%%%%%%%%%%%%%%%%%%%%%%%%%%%%%%%%%%
% APPENDIX
%%%%%%%%%%%%%%%%%%%%%%%%%%%%%%%%%%%%%%%%%%%%%%%%%%%%%%%%%%%%%%%%%%%%%%%%%%%%%%%
%%%%%%%%%%%%%%%%%%%%%%%%%%%%%%%%%%%%%%%%%%%%%%%%%%%%%%%%%%%%%%%%%%%%%%%%%%%%%%%
\newpage
\appendix
\onecolumn

\section{Additional theoretical results}
\label{sec:appendix_theory}

\subsection{Calculating prediction intervals for further causal quantities and differences}

We presented a method for calculating conformal prediction intervals for potential outcomes of continuous treatments. In the following, we show how the intervals can be combined to yield valid prediction intervals for further causal quantities, such as the individual treatment effect (ITE) $\gamma_i$ of treatment $a$:
\begin{align}
    \gamma_i(a) := Y_i(a) - Y_i(0).
\end{align}
Here, we consider the setting in which the non-conformity score is chosen to be the absolute residual.

\begin{lemma}
    \label{lem:other_causal_intervals}
    Let $S_a^{\ast}$ and $S_0^{\ast}$ denote the optimal imputed non-conformity scores $S_{n+1}$ for treatment $a$ and no treatment at significance level $1-\frac{\alpha}{2}$ for $\alpha\in (0,1)$, respectively. Furthermore, let 
    \begin{align}
        C^+ := \phi(x_i, a) + S_a^{\ast} - \phi(x_i, 0) + S_0^{\ast},\\
        C^- := \phi(x_i, a) - S_a^{\ast} - \phi(x_i, 0) - S_0^{\ast}.
    \end{align}
    Then the interval $C_{\gamma}(X_i,a) := [ C^-, C^+]$ contains the ITE $\gamma_i$ with probability $1-\alpha$.
\end{lemma}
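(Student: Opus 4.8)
The plan is to reduce the two-sided ITE guarantee to the two one-sided potential-outcome guarantees already established in Theorems~\ref{thm:known_prop} and~\ref{thm:unknown_prop}, and then to combine them via a Bonferroni-type union bound. First I would unpack the shape of the potential-outcome intervals. Because we use the residual non-conformity score $S_{n+1}(y) = \abs{y - \phi(x_i, a)}$ throughout, the set $\{ y \mid S_{n+1}(y) \leq S^{\ast} \}$ is exactly the symmetric band $[\phi(x_i, a) - S^{\ast},\, \phi(x_i, a) + S^{\ast}]$. Instantiating the coverage theorems at significance level $1 - \tfrac{\alpha}{2}$ (i.e., substituting $\tfrac{\alpha}{2}$ for $\alpha$) therefore yields
\begin{align}
    P\big(Y_i(a) \in [\phi(x_i,a) - S_a^{\ast},\, \phi(x_i,a) + S_a^{\ast}]\big) &\geq 1 - \tfrac{\alpha}{2}, \\
    P\big(Y_i(0) \in [\phi(x_i,0) - S_0^{\ast},\, \phi(x_i,0) + S_0^{\ast}]\big) &\geq 1 - \tfrac{\alpha}{2}.
\end{align}

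Next I would establish a purely deterministic containment by interval arithmetic. Denote by $E_a$ and $E_0$ the two coverage events above. On $E_a \cap E_0$ both potential outcomes lie in their respective bands, so their difference $\gamma_i = Y_i(a) - Y_i(0)$ is squeezed between the smallest and largest possible differences of the endpoints. The minimum is attained by taking the lower endpoint for $Y_i(a)$ and the upper endpoint for $Y_i(0)$, giving $(\phi(x_i,a) - S_a^{\ast}) - (\phi(x_i,0) + S_0^{\ast}) = C^-$; the maximum is attained by the reverse choice, giving $(\phi(x_i,a) + S_a^{\ast}) - (\phi(x_i,0) - S_0^{\ast}) = C^+$. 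Hence the set inclusion $E_a \cap E_0 \subseteq \{ \gamma_i \in [C^-, C^+] \}$ holds pointwise, with no probabilistic content.

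Finally I would combine the two marginal guarantees through the union bound, which is exactly why the error budget $\alpha$ is split evenly between the two potential outcomes. The chain
\begin{align}
    P(\gamma_i \in [C^-, C^+]) \geq P(E_a \cap E_0) = 1 - P(E_a^c \cup E_0^c) \geq 1 - P(E_a^c) - P(E_0^c) \geq 1 - \tfrac{\alpha}{2} - \tfrac{\alpha}{2} = 1 - \alpha
\end{align}
closes the argument. The main obstacle is that $E_a$ and $E_0$ need \emph{not} be independent — the same covariates $x_i$ and the same calibration data drive both intervals — so one cannot simply multiply the coverage probabilities. Boole's inequality sidesteps this entirely, since it holds under arbitrary dependence, at the price of conservativeness; this is the standard trade-off and is precisely what forces the $\tfrac{\alpha}{2}$ split rather than a sharper allocation of the budget.
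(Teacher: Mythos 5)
Your proposal is correct and follows essentially the same route as the paper: per-arm coverage at level $1-\tfrac{\alpha}{2}$, a deterministic containment of the ITE event in the intersection of the two coverage events (the paper phrases this via the triangle inequality on the residuals $\varepsilon_i(a),\varepsilon_i(0)$, you via interval arithmetic — these are equivalent), and then a combination of the two guarantees. If anything, your version is slightly more complete, since you make the final union-bound step explicit where the paper only writes ``it follows directly.''
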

\begin{proof}
    Let $\varepsilon_i(a)$ be the estimation error of the potential outcome, i.e.
    \begin{align}
       \varepsilon_i(a) := Y_i(a) - \phi(x_i, a) .
    \end{align}
    We can rewrite the coverage guarantee of the original conformal prediction intervals for the potential outcome $Y(a)$ as
    \begin{align}
        P(Y_i(a) \in C(X_i,a)) = P(\lvert \varepsilon_i(a) \rvert \leq S_a^{\ast}) \geq 1-\frac{\alpha}{2}.
    \end{align}
    Now observe that
    \begin{align}
        & P(\gamma_i(a) \in C_{\gamma}(x_i,a)) = P((Y_i(a) - Y_i(0)) \in C_{\gamma}(x_i,a))\\
        = & P((Y_i(a) \geq  C^- + Y_i(0)) \land (Y_i(a) \leq  C^+ + Y_i(0)))\\
        = & P((\varepsilon_i(a) \geq \varepsilon_i(0) - (S_a^{\ast} + S_0^{\ast})) \land (\varepsilon_i(a) \leq \varepsilon_i(0) + (S_a^{\ast} + S_0^{\ast}))) \\
        = & P(\lvert \varepsilon_i(a) - \varepsilon_i(0) \rvert \leq S_a^{\ast} + S_0^{\ast})\\
        \geq & P(\lvert \varepsilon_i(a) \rvert + \lvert \varepsilon_i(0) \rvert \leq S_a^{\ast} + S_0^{\ast}).
    \end{align}
    Thus, it follows directly that
    \begin{align}
        P(\gamma_i(a) \in C_{\gamma}(X_i,a)) \geq 1-\alpha.
    \end{align}
\end{proof}

\subsection{Alternative scenario 2: Fixing an approximation of {$\delta_{a^{\ast}}(a)$}}

In Section~\ref{sec:unknown_prop}, we formulated the unknown propensity shift in terms of
\begin{align}
    \delta_{a^{\ast}}(a) = \lim\limits_{\sigma \rightarrow 0} \frac{1}{\sqrt{2\pi} \sigma} \exp \left( -\frac{(a-a^{\ast})^2}{\sigma^2} \right)
\end{align}
and minimized over $\sigma$ and $c_a$ in Theorem~\ref{thm:unknown_prop} to construct the CP intervals. However, in certain applications, it might be beneficial to control the spread of the approximation of $\delta_{a^{\ast}}(a)$ through fixing $\sigma$ to a small value $\sigma_0$ and performing the soft intervention $\tilde{\pi}(a \mid x) = \frac{c_a}{\sqrt{2\pi} \sigma_0} \frac{\exp{(-\frac{(a-a^{\ast})^2}{\sigma_0^2})}}{\hat{\pi}(a\mid x)}$. In this case, the resulting optimization problem is a convex problem similar to Theorem~\ref{thm:known_prop}. We present the alternative optimization problem below.

\begin{theorem}[Alternative for Theorem \ref{thm:unknown_prop}: Conformal prediction intervals for unknown propensity scores]
\label{thm:unknown_prop_b}
    Let a new datapoint be given with $X_{n+1} = x_{n+1}$ and $A_{n+1} = a_{n+1}$. Let $\eta^{S} = \{\eta_1^{S}, \ldots, \eta_{n+1}^{S} \} \in \mathbb{R}^{n+1}$ be the optimal solution to
    \begin{equation}
    \label{eq:unknown_prop_alternative}
    \begin{aligned}
        &\max_{\substack{\eta_i,\\ i=1,\ldots,n+1}} \min_{\frac{1}{M} \leq c_a \leq M} \sum_{i=1}^{n} \eta_i \, \left( S_i - \frac{c_{a}}{\sqrt{2\pi} \sigma_0} \frac{\exp{ \left( -\frac{(a_i-a^{\ast})^2}{\sigma_0^2} \right) }}{\hat{\pi}(a_i\mid x_i)} \right) + \eta_{n+1} \left( S - \frac{c_{a}}{\sqrt{2\pi} \sigma_0} \frac{1}{\hat{\pi}(a_i\mid x_{n+1})} \right)\\
        &\text{ s.t.} \qquad\qquad\qquad\qquad -\alpha \leq \eta_i \leq 1-\alpha, \quad \forall i=1,\ldots,n+1 ,
    \end{aligned}    
    \end{equation}
    for an imputed unknown $S_{n+1}=S$.
    Furthermore, let $S^{\ast}$ be defined as the maximum $S$ s.t. $\eta_{n+1}^{S} < 1-\alpha$. Then, the prediction interval
    \begin{align}
        C(x_{n+1}, a^*):=\{ y \mid S_{n+1}(y) \leq S^{\ast}\}
    \end{align}
    satisfies the desired coverage guarantee
    \begin{align}
        P(Y(a^{\ast}) \in C(X_{n+1}, a^*) \geq 1-\alpha,
    \end{align}
    where with a slight abuse of notation $Y(a^{\ast})$ denotes the potential outcome under the soft intervention $\tilde{\pi}$ above.
\end{theorem}
\begin{proof}
    The statement follows from Theorem~\ref{thm:unknown_prop}.
\end{proof}

\subsection{Soft-interventions on estimated propensities}

In the main paper, we presented algorithms for constructing prediction intervals for soft interventions if the propensity function is known and hard interventions if it is unknown. These are arguably the most common scenarios in practice. However, in some cases, one might also be interested in the effect of soft interventions on estimated propensity scores ~\citep[e.g.,][]{Marmarelis.2024}. Therefore, we present an alternative theorem for calculating conformal prediction intervals under soft interventions with estimated propensity scores below.

\begin{theorem}[Conformal prediction intervals for soft interventions with unknown propensity scores]
\label{thm:unknown_prop_soft}
    Let a new datapoint be given with $X_{n+1} = x_{n+1}$ and $A_{n+1} = a_{n+1}$. Furthermore, let $\hat{\pi}$ denote the estimated propensity score with estimation error bounded by $[\frac{1}{M}, M]$, $M>0$. The soft intervention is represented by the shift given through $\Delta \in \mathbb{R}$. Let $\eta^{S} = \{\eta_1^{S}, \ldots, \eta_{n+1}^{S} \} \in \mathbb{R}^{n+1}$ be the optimal solution to
    \begin{equation}
    \label{eq:unknown_prop}
    \begin{aligned}
        &\max_{\substack{\eta_i,\\ i=1,\ldots,n+1}} \min_{\frac{1}{M} \leq c_a \leq M} \sum_{i=1}^{n} \eta_i \, \left( S_i - \frac{c_{a}\hat{\pi}(a_i + \Delta \mid x_i)}{\hat{\pi}(a_i\mid x_i)}\right) + \eta_{n+1} \left( S - \frac{c_{a}\hat{\pi}(a_{n+1} +  \Delta\mid x_{n+1})}{\hat{\pi}(a_{n+1} \mid x_{n+1})} \right)\\
        &\text{ s.t.} \qquad\qquad\qquad\qquad -\alpha \leq \eta_i \leq 1-\alpha, \quad \forall i=1,\ldots,n+1 ,
    \end{aligned}    
    \end{equation}
    for an imputed unknown $S_{n+1}=S$.
    Furthermore, let $S^{\ast}$ be defined as the maximum $S$ s.t. $\eta_{n+1}^{S} < 1-\alpha$. Then, the prediction interval
    \begin{align}
        C(x_{n+1}, a^*):=\{ y \mid S_{n+1}(y) \leq S^{\ast}\}
    \end{align}
    satisfies the desired coverage guarantee
    \begin{align}
        P(Y(a^{\ast}) \in C(X_{n+1}, a^*) \geq 1-\alpha,
    \end{align}
    where with a slight abuse of notation $Y(a^{\ast})$ denotes the potential outcome under the soft intervention represented by $\Delta$.
\end{theorem}
\begin{proof}
    The statement follows from Theorem~\ref{thm:known_prop} and Theorem~\ref{thm:unknown_prop}.
\end{proof}

\newpage
\section{Algorithm}
\label{sec:appendix_algorithm}

We now use Thm.~\ref{thm:unknown_prop} to present an algorithm for computing CP intervals of potential outcomes from continuous treatment variables under unknown propensities in Alg.~\ref{alg:algorithm_unknown}. We present a similar algorithm for scenario\,\circled{1} with known propensities and discuss the computational complexity in below. 

We make the following comments: In our algorithm, an optimization solver is used to calculate $v_{n+1}$ according to Theorem \ref{thm:unknown_prop}. The specific choice of the solver is left to the user. In our experiments in Section~\ref{sec:experiments}, we perform the optimization via interior point methods. Further, the overall goal of our algorithm is to find the optimal imputed non-conformity score $S^{\ast}$ such that the coverage guarantees hold. It can be implemented through suitable iterative search algorithms.

\LinesNumbered
\begin{algorithm}[H]
    \caption{\footnotesize Algorithm for computing CP intervals of potential outcomes of continuous interventions under unknown propensities.}
    \label{alg:algorithm_unknown}
    \footnotesize
    \KwIn{Calibration data $(X_i, A_i, Y_i)_{i \in \{m+1, \ldots ,n\}}$, new sample $X_{n+1}$ and intervention $a^{\ast}$, significance level $\alpha$, prediction model $\phi$, propensity estimator $\hat{\pi}$, assumed error bound $M$, error tolerance $\varepsilon$, optimization solver
    }
    \KwOut{CP interval $C_{n+1}$ for a new test sample}
    $S_{\mathrm{up}} \gets \max\{\max_{i=m+1,\ldots,n} S_i, 1\}$; $S_{\mathrm{low}} \gets \min\{\min_{i=m+1,\ldots,n} S_i, -1\}$;\\
    \tcc{Calculate $v_{n+1}^{\mathrm{up}}$, $v_{n+1}^{low}$}
    $v_{n+1}^{\mathrm{up}}  \gets \text{solver}(\phi, \hat{\pi}, (X_i, A_i, Y_i)_{i \in \{m+1, \ldots ,n\}}, X_{n+1}, a^{\ast}, \alpha, M, S_{\mathrm{up}})$;\\
    $v_{n+1}^{\mathrm{low}} \gets \text{solver}((X_i, A_i, Y_i)_{i \in \{m+1, \ldots ,n\}}, X_{n+1}, a^{\ast}, \alpha, M, S_{\mathrm{low}})$;\\    
    \tcc{Iterative search for $S^{\ast}$}
    \While{$v_{n+1}^{up} > 0$}{
        $S_{\mathrm{up}} \gets 2S_{up}$;\\
        $v_{n+1}^{\mathrm{up}} \gets \text{solver}(\phi, \hat{\pi}, (X_i, A_i, Y_i)_{i \in \{m+1, \ldots ,n\}}, X_{n+1}, a^{\ast}, \alpha, M, S_{\mathrm{up}})$;
    }
    \While{$v_{n+1}^{\mathrm{low}} = 0$}{
        $S_{\mathrm{low}} \gets 0.5S_{\mathrm{low}}$;\\
        $v_{n+1}^{\mathrm{low}} \gets \text{solver}(\phi, \hat{\pi}, (X_i, A_i, Y_i)_{i \in \{m+1, \ldots ,n\}}, X_{n+1}, a^{\ast}, \alpha, M, S_{\mathrm{low}})$;
    }
    $S^{\ast} \gets \frac{S_{\mathrm{up}} + S_{\mathrm{low}}}{2}$;\\
    \While{$S_{\mathrm{up}} - S_{\mathrm{low}} > \varepsilon$}{
        $v_{n+1}^{S^{\ast}} \gets \text{solver}(\phi, \hat{\pi}, (X_i, A_i, Y_i)_{i \in \{m+1, \ldots ,n\}}, X_{n+1}, a^{\ast}, \alpha, M, S^{\ast})$;\\
        \If{$v_{n+1}^{S^{\ast}} > 0$}{
            $S_{\mathrm{low}} \gets \frac{S_{\mathrm{up}} + S_{\mathrm{low}}}{2}$;
        }
        \Else{$S_{v} \gets \frac{S_{\mathrm{up}} + S_{\mathrm{low}}}{2}$;}
        $S^{\ast} \gets \frac{S_{\mathrm{up}} + S_{\mathrm{low}}}{2}$;
    }
    \tcc{Compute $C(X_{n+1}, a^{\ast})$}
    \Return{$C(X_{n+1}, a^{\ast}) = \{ y \mid S_{n+1}(y) \leq S^{\ast}\}$}
\end{algorithm}

\textbf{Algorithm explanation:}
Theorem~\ref{thm:unknown_prop} requires knowledge of the optimal imputed non-conformity score $S^{\ast}$. Since it is unknown beforehand, we implement Algorithm~\ref{alg:algorithm_unknown} as a binary search algorithm. We find suitable upper and lower bounds for $S^{\ast}$ (until line 8). To validate that the bounds are indeed smaller/larger than the optimal non-conformity score, we observe the corresponding dual value $\nu_{n+1}$ (Theorem~\ref{thm:unknown_prop}). After finding valid bounds, we start searching for $S^{\ast}$ via standard binary search, continuously increasing the lower and decreasing the upper bound. When the difference between the bounds is less than a specified error tolerance $\varepsilon$, we stop and take $S^{\ast}$ to be the mean of the interval. In every iteration (when calling ‘solver’), we make use of the optimization in Theorem~\ref{thm:unknown_prop} and check whether the dual value fulfills the requirement in the Theorem.

Below, we state a second algorithm that is applicable if the propensity score is known. In this case, a convex optimization solver can be used.

\textbf{Computational complexity:}
The complexity of running our algorithms depends heavily on the employed optimization solver with complexity $\sigma_s(n_c)$ (e.g., polynomial complexity for suitable\footnote{A suitable solver refers to a solver designed to handle the constrained convex problem in our theorem} convex solvers) and the size of the calibration dataset $n_c$. This might become costly for large-scale calibration datasets in practice. The outer algorithm has a time complexity of at most $O(\log(\frac{S_{up} - S_{low}}{\varepsilon})+1)$. Overall, our algorithm has a fixed complexity of $O(\log(\frac{S_{up} - S_{low}}{\varepsilon})\sigma_s(n_c))$. The complexity of deriving intervals through MC-dropout or ensemble methods depends, however, on the number of MC samples or models, respectively. The latter thus scales with the precision of the intervals, which might be difficult to control. Furthermore, we emphasize that MC intervals are generally \emph{not} faithful and therefore \emph{not} directly comparable. While the theoretical runtime of CP may exhibit more complex scaling behavior (e.g., cubic or non-linear), our empirical results demonstrate that CP scales well in practice. In our work, we use optimization as a tool to provide conformal prediction intervals. Future research should focus on developing more efficient optimization algorithms for this task.

\LinesNumbered
\begin{algorithm}[t]
    \caption{Algorithm for computing CP intervals of potential outcomes of continuous interventions under known propensities.}
    \label{alg:algorithm}
    \footnotesize
    \KwIn{Calibration data $(X_i, A_i, Y_i)_{i \in \{m+1, \ldots ,n\}}$, new sample $X_{n+1}$ and soft intervention $A^{\ast}(X_{n+1})$, significance level $\alpha$, prediction model $\phi$, error tolerance $\varepsilon$, optimization solver
    }
    \KwOut{CP interval $C_{n+1}$ for a new test sample}
    $S_{\mathrm{up}} \gets \max\{\max_{i=m+1,\ldots,n} S_i, 1\}$; $S_{\mathrm{low}} \gets \min\{\min_{i=m+1,\ldots,n} S_i, -1\}$;\\
    \tcc{Calculate $\eta_{n+1}^{\mathrm{up}}$, $\eta_{n+1}^{low}$, where $\eta$ is the optimal solution to Eq. (\ref{eq:known_prop}).}
    $\eta_{n+1}^{\mathrm{up}}  \gets \text{solver}(\phi, \hat{\pi}, (X_i, A_i, Y_i)_{i \in \{m+1, \ldots ,n\}}, X_{n+1}, A^{\ast}(X_{n+1}), \alpha, S_{\mathrm{up}})$;\\
    $\eta_{n+1}^{\mathrm{low}} \gets \text{solver}((X_i, A_i, Y_i)_{i \in \{m+1, \ldots ,n\}}, X_{n+1}, A^{\ast}(X_{n+1}), \alpha, S_{\mathrm{low}})$;\\    
    \tcc{Iterative search for $S^{\ast}$}
    \While{$\eta_{n+1}^{up} < 1-\alpha$}{
        $S_{\mathrm{up}} \gets 2S_{up}$;\\
        $\eta_{n+1}^{\mathrm{up}} \gets \text{solver}(\phi, \hat{\pi}, (X_i, A_i, Y_i)_{i \in \{m+1, \ldots ,n\}}, X_{n+1}, A^{\ast}(X_{n+1}), \alpha, S_{\mathrm{up}})$;
    }
    \While{$v_{n+1}^{\mathrm{low}} >= 1-\alpha$}{
        $S_{\mathrm{low}} \gets 0.5S_{\mathrm{low}}$;\\
        $\eta_{n+1}^{\mathrm{low}} \gets \text{solver}(\phi, \hat{\pi}, (X_i, A_i, Y_i)_{i \in \{m+1, \ldots ,n\}}, X_{n+1}, A^{\ast}(X_{n+1}), \alpha, S_{\mathrm{low}})$;
    }
    $S^{\ast} \gets \frac{S_{\mathrm{up}} + S_{\mathrm{low}}}{2}$;\\
    \While{$S_{\mathrm{up}} - S_{\mathrm{low}} > \varepsilon$}{
        $\eta_{n+1}^{S^{\ast}} \gets \text{solver}(\phi, \hat{\pi}, (X_i, A_i, Y_i)_{i \in \{m+1, \ldots ,n\}}, X_{n+1}, a^{\ast}, \alpha, S^{\ast})$;\\
        \If{$\eta_{n+1}^{S^{\ast}} < 1-\alpha$}{
            $S_{\mathrm{low}} \gets \frac{S_{\mathrm{up}} + S_{\mathrm{low}}}{2}$;
        }
        \Else{$S_{up} \gets \frac{S_{\mathrm{up}} + S_{\mathrm{low}}}{2}$;}
        $S^{\ast} \gets \frac{S_{\mathrm{up}} + S_{\mathrm{low}}}{2}$;
    }
    \tcc{Compute $C(X_{n+1}, A^{\ast}(X_{n+1}))$}
    \Return{$C(X_{n+1}, A^{\ast}(X_{n+1})) = \{ y \mid S_{n+1}(y) \leq S^{\ast}\}$}
\end{algorithm}

\textbf{Note on the stability of the optimization algorithm:}
In Scenario 1, the only source of potential instability can be a very low propensity score in low-overlap regions of the covariate space. This, however, is only a problem if $\pi(a|x) <<\pi(a+\Delta|x)$. We can thus consider this unlikely in practice. For example, consider a patient who would be treated with a dosage of 10mg of some medication, which is prescribed in the range from 0 to 50mg. A practitioner is likely to be interested in the effect of an increase of the dosage to 15mg (i.e., $\Delta=5$) in contrast to an increase to 50mg. Furthermore, it is reasonable to assume the propensity function to be locally smooth. Therefore, the instability of $\pi(a|x) <<\pi(a+\Delta|x)$ is unlikely to occur.

In Scenario 2, we could additionally face an underflow of $\exp(-\frac{(a_i-a)^2}{2\sigma^2})$ or a blow-up of the pre-factor $(\sigma\hat{\pi}(x_i))^{-1}$. As a remedy, we can reparameterize the problem to work in the log domain and only exponentiate after subtracting a stable (maximum) constant across all datapoints. This additionally prevents the Jacobian/Hessian of the constraints from becoming nearly singular or wildly varying, causing Newton‐type steps to blow up or stall.

\newpage
\section{Semi-synthetic experiments}
\label{sec:appendix_tcga}

To underline the effectiveness of our method, we perform additional experiments on the semi-synthetic TCGA dataset. The Cancer Genome Atlas (TCGA) dataset \citep{Weinstein.2013} consists of a comprehensive and diverse collection of gene expression data. The data was collected from patients with different cancer types. In our experiment, we consider the gene expression measurements of the 4,000 genes with the highest variability, which we employ as our features $X$. The study cohort consisted of a total of 9659 patients. We model a continuous treatment based on the sum of the 10 covariates with the highest variance and assign a treatment effect that is constant in the sum of the covariates. Specifically, we model the treatment to follow a normal distribution centered at 100*sum of the 10 covariate values, and the outcome to follow a normal distribution centered at the sum of the treatment and the covariate sum times 100.

As in the main paper, we construct CP intervals for different interventions and confidence levels $\alpha$. We state the empirical coverage of our method in Table~\ref{tab:tcga} as well as the coverage of the intervals returned by the ensemble method and MC dropout below. The prediction performance of the trained model on the hold-out test dataset is reported. We find that our method is highly effective.

\begin{table}[h]
\centering
\caption{Coverage of the intervals from our CP method as well as the ensemble method and MC dropout on the TCGA dataset. We report the mean followed by the standard deviation in apprentices.}
\begin{tabular}{llccc} \toprule
\footnotesize
& & \multicolumn{3}{c}{Confidence level} \\
\cmidrule(lr){3-5}
Intervention & Method & $\alpha = 0.05$ &  $\alpha = 0.1$ & $\alpha = 0.2$ \\
\midrule
\multirow{3}{2cm}{$\Delta = 0.5$} & Ensemble & 0.0640 (0.0445) & 0.0600 (0.0379) & 0.0480 (0.0412) \\
& MC & 0.8280 (0.0795) & 0.8160 (0.0783) & 0.8040 (0.0741) \\
& Ours & 0.9680 (0.0324) & 0.8920 (0.0391) & 0.8040 (0.0741) \\
\midrule
\multirow{3}{2cm}{$\Delta = 1.0$} & Ensemble & 0.0880 (0.0483) & 0.0680 (0.0371) & 0.0520 (0.0348) \\
& MC &  0.8560 (0.0612) & 0.8520 (0.0614) & 0.8400 (0.0657) \\
& Ours & 0.9733 (0.0377) & 0.9500 (0.0500) & 0.7667 (0.0618) \\
\midrule
\multirow{3}{2cm}{$\Delta = 1.5$} & Ensemble & 0.0720 (0.0411) & 0.0680 (0.0412) & 0.0640 (0.0389)\\
& MC & 0.7880 (0.0815) & 0.8040 (0.0925) & 0.8280 (0.0786)\\
& Ours & 0.9400 (0.0438) & 0.8920 (0.0699) & 0.8200 (0.0619)\\
\bottomrule
\end{tabular}
\label{tab:tcga}
\end{table}

We observe that our method consistently achieves the desired coverage. To evaluate the usefulness of our intervals, we also report the interval width in Table~\ref{tab:tcga_width} below. The range of the outcomes was 2.0.

\begin{table}[h]
\centering
\caption{Width of the intervals from our CP method on the TCGA dataset. We report the mean followed by the standard deviation in apprentices.
}
\begin{tabular}{lccc} \toprule
\footnotesize
& \multicolumn{3}{c}{Confidence level} \\
\cmidrule(lr){2-4}
Intervention & $\alpha = 0.05$ &  $\alpha = 0.1$ & $\alpha = 0.2$ \\
\midrule
$\Delta = 0.5$  & 0.1003 (0.0331) & 0.0843 (0.0252) & 0.0683 (0.0169)\\
$\Delta = 1.0$  & 0.1017 (0.0420) & 0.0877 (0.0349) & 0.0642 (0.0200)\\
$\Delta = 1.5$  & 0.0930 (0.0272) & 0.0822 (0.0260) & 0.0674 (0.0180)\\
\bottomrule
\end{tabular}
\label{tab:tcga_width}
\end{table}

\textbf{Comparison to the CP methods for binary treatments \citep{Lei.2021}:}

To investigate whether simple weighted CP methods for causal tasks on binary treatments sufficiently address the non-exchangeability due to the distribution shift induced by the intervention, we compare our method with the method by \citet{Lei.2021}. Of note, both methods fulfill the coverage guarantees. However, when comparing the interval width, we see that our method is superior: Our intervals have an average width of 0.6255 (sd = 0.1714). In contrast, the intervals on the binarized treatment obtained through the method by \citet{Lei.2021} have an average width of 3.2876 (sd = 0.5587). Hence, the intervals by our method are by far more informative.

\newpage
\section{Proofs}
\label{sec:appendix_proofs}

\subsection{Proofs of the supporting lemmas}
\label{sec:appendix_proof_lemma}

In the following, we prove Lemma~\ref{lem:point_intervention_shift} and Lemma~\ref{lem:invexity} from our main paper.

\paragraph{Proof of Lemma~\ref{lem:point_intervention_shift}}

Recall the definition of the hard intervention
\begin{align}
    \tilde{\pi}(a \mid x) = \delta_{a^{\ast}}(a)
    = \frac{\delta_{a^{\ast}}(a)}{\hat{\pi}(a \mid x)} \frac{\hat{\pi}(a \mid x)}{\pi(a \mid x)} \pi(a \mid x),
\end{align}
where
\begin{align}
    \delta_{a^{\ast}}(a) = \lim\limits_{\sigma \rightarrow 0} \frac{1}{\sqrt{2\pi} \sigma} \exp \left( -\frac{(a-a^{\ast})^2}{\sigma^2} \right).
\end{align}
Under Assumption 1, we have
\begin{align}
    \frac{\hat{\pi}(a \mid x)}{\pi(a \mid x)}=: c_a \in [\frac{1}{M}, M]
\end{align}
for some $M>0$ and all $a,x$. Then
\begin{align}
    \lim\limits_{\sigma \rightarrow 0} \frac{1}{\sqrt{2\pi} \sigma} \frac{\exp \left( -\frac{(a-a^{\ast})^2}{\sigma^2} \right) }{\hat{\pi}(a\mid x)} \frac{1}{M} \leq \pi(a^{\ast} | x) \leq \lim\limits_{\sigma \rightarrow 0} \frac{1}{\sqrt{2\pi} \sigma} \frac{\exp \left( -\frac{(a-a^{\ast})^2}{\sigma^2} \right) }{\hat{\pi}(a\mid x)} M.
\end{align}
Therefore, the distribution shift induced by the hard intervention can be represented as
\footnotesize
\begin{align}
    f(a,x) = \lim\limits_{\sigma \rightarrow 0} \frac{c_{a}}{\sqrt{2\pi} \sigma} \frac{\exp \left( -\frac{(a-a^{\ast})^2}{\sigma^2} \right) }{\hat{\pi}(a\mid x)} \in \mathcal{F}:= \left\{ \frac{c_{a}}{\sqrt{2\pi} \sigma} \frac{\exp \left( -\frac{(a-a^{\ast})^2}{\sigma^2} \right)}{\hat{\pi}(a\mid x)} \;\middle|\; 0<\sigma, c_a \in [\frac{1}{M}, M] \right\}.
\end{align}
\hfill $\qed$
\normalsize

\paragraph{Proof of Lemma~\ref{lem:invexity}}

We first prove that the problem~\eqref{eqn:PS} fulfills the linear independence constraint qualifications. For all $i=m+1,\ldots,n+1$, we denote the constraints of problem \eqref{eqn:PS} as
\begin{align}
    h_i(u,v,c_a, \sigma) := S_i - u_i + v_i - \frac{c_a}{\sqrt{2\pi} \sigma} \frac{\exp(-\frac{(a_i-a^{\ast})^2}{2\sigma})}{\hat{\pi}(a_i, x_i)}.
\end{align}
The gradient of $h_i$ is given by
\begin{align}
    \nabla h_i(u,v,c_a, \sigma) = \def\arraystretch{2}\left[\begin{array}{c}
    \dfrac{\partial h_i}{\partial u_{m+1}}(\left.u,v,c_a, \sigma \right)\\
    \dfrac{\partial h_i}{\partial v_{m+1}}(\left.u,v,c_a, \sigma \right)\\
    \vdots \\
    \dfrac{\partial h_i}{\partial u_i}(\left.u,v,c_a, \sigma \right)\\
    \dfrac{\partial h_i}{\partial v_i}(\left.u,v,c_a, \sigma \right)\\
    \vdots \\
    \dfrac{\partial h_i}{\partial c_a}(\left.u,v,c_a, \sigma \right)\\
    \dfrac{\partial h_i}{\partial \sigma}(\left.u,v,c_a, \sigma \right)    
    \end{array}\right] =\def\arraystretch{2}\left[\begin{array}{c}
    0 \\
    0\\
    \vdots\\
    -1\\
    1\\
    \vdots\\
    - \frac{1}{\sqrt{2\pi} \sigma} \frac{\exp(-\frac{(a_i-a^{\ast})^2}{2\sigma})}{\hat{\pi}(a_i, x_i)}\\
    \frac{\exp(-\frac{(a_i-a^{\ast})^2}{2\sigma})}{\hat{\pi}(a_i, x_i)} (\frac{c_a}{\sqrt{2\pi} \sigma^2} - \frac{c_a (a_i-a^{\ast})^2}{2 \sqrt{2\pi} \sigma^4})
    \end{array}\right].
\end{align}
Therefore, with $\nabla h := (\nabla h_{m+1}, \ldots, \nabla h_{n+1})$ and $\lambda \in \mathbb{R}^{n+1}$, we obtain
\begin{align}
    \nabla h \cdot \lambda = 0 \iff \lambda = 0 \in \mathbb{R}^{n+1}.
\end{align}
As a result, the constraints are linearly independent. This property suffices for the KKT conditions to hold at any (local) optimum of \eqref{eqn:PS}.To furthermore show that the KKT conditions are also sufficient for a global optimum, we show that \eqref{eqn:PS} is Type-I invex. An optimization problem with objective function $f(x)$ and constraints $g(x)<=0$ with $x \in \mathbb{R}^{n+1}$ is Type-I invex at $x_0$, if there exists $\nu(x, x_0) \in \mathbb{R}^{n+1}$, such that
\begin{align}
    f(x) - f(x_0) \geq \nu(x, x_0)^T \nabla f(x_0),
\end{align}
and
\begin{align}
    -g(x_0) \geq \nu(x, x_0)^T \nabla g(x_0)
\end{align}
\citep{Hanson.1987}.
In problem \eqref{eqn:PS}, the gradients of the objective function and of each constraint $h_i$ for all $i,j = m+1, \ldots, n+1$ at $x_0$ are given by
\begin{align}
    &\frac{\partial  \text{obj}(u_0, v_0, c_{a_0},\sigma_0)}{\partial u_i} = 1-\alpha, \quad
    \frac{\partial  \text{obj}(u_0, v_0, c_{a_0},\sigma_0)}{\partial v_i} = \alpha, \\
    &\frac{\partial  \text{obj}(u_0, v_0, c_{a_0},\sigma_0)}{\partial c_a} =  \frac{\partial  \text{obj}(u_0, v_0, c_{a_0},\sigma_0)}{\partial \sigma} = 0
\end{align}
$\forall i=m+1,\ldots,n+1$ and 
\begin{align}
    &\frac{\partial h_i}{\partial u_j} \rvert_{u_0, v_0, c_{a_0}, \sigma_0} = \begin{cases}
        -1, \qquad& \text{for } i=j,\\
        0, & \text{else},
    \end{cases} 
    \qquad
    \frac{\partial h_i}{\partial v_j} \rvert_{u_0, v_0, c_{a_0}, \sigma_0} = \begin{cases}
        1, \qquad& \text{for } i=j,\\
        0, & \text{else},
    \end{cases}\\
    &\frac{\partial h_i}{\partial c_a} \rvert_{u_0, v_0, c_{a_0}, \sigma_0} = - \frac{1}{\sqrt{2\pi} \sigma_0} \frac{\exp \left( -\frac{(a_i-a^{\ast})^2}{2\sigma_0} \right) }{\hat{\pi}(a_i, x_i)},\\
    &\frac{\partial h_i}{\partial \sigma} \rvert_{u_0, v_0, c_{a_0}, \sigma_0} = \frac{\exp \left( -\frac{(a_i-a^{\ast})^2}{2\sigma_0} \right) }{\hat{\pi}(a_i, x_i)} \left( \frac{c_{a_0}}{\sqrt{2\pi} \sigma_0^2} - \frac{c_{a_0} (a_i-a^{\ast})^2}{2 \sqrt{2\pi} \sigma_0^4} \right).
\end{align}
For 
\begin{align}
    \eta((u, v, c_{a}, \sigma), (u_0, v_0, c_{a_0}, \sigma_0)) := (-u_{0_1},\ldots, -u_{0_{n+1}}, -v_{0_1},\ldots, -v_{0_{n+1}}, -c_{a_0}, 0)^T ,
\end{align}
the definition of Type-I invexity holds for \eqref{eqn:PS}. Thus, the KKT conditions are also sufficient for a global optimum. 
\hfill $\qed$

\subsection{Proof of Theorem~\ref{thm:known_prop}}
\label{sec:appendix_proof_known}

We prove Theorem~\ref{thm:known_prop} in three steps: (i)~We show that function class $\mathcal{F} := \{ \theta \frac{\pi(a + \Delta_A \mid x)}{\pi(a \mid x)} \mid \theta \in \mathbb{R}^+\}$ indeed satisfies Eq.~\eqref{eqn:prop_shift} for the intervention $a^{\ast} = a + \Delta_A$ and rewrite Eq.~\eqref{eqn:general_quantile_reg} as a convex optimization problem. (ii)~We retrieve the corresponding dual problem, derive a dual prediction set, and show the equality of the coverage guarantee of the dual and the primal prediction sets. (iii)~We derive $S^{\ast}$ from the dual prediction set to construct $C_{n+1}$ and prove the overall coverage guarantee. For further theoretical background on the idea of the proof, we refer to \citet{Gibbs.2023}.

\paragraph{Justification of the distribution shift}
Observe that $\mathbb{E}[f(A,X)] = \theta$ for all $f \in \mathcal{F} := \{ \theta \frac{\pi(a + \Delta_A \mid x)}{\pi(a \mid x)} \mid \theta \in \mathbb{R}^+\}$. Therefore, Eq.~\eqref{eqn:prop_shift} simplifies to
\begin{align}
    \tilde{\pi}(a,x) = \frac{\pi(a + \Delta_A \mid x)}{\pi(a \mid x)} \pi(a \mid x) =\pi(a + \Delta_A \mid x).
\end{align}
Thus, $\mathcal{F}$ satisfies the propensity shift from Eq.~\eqref{eqn:prop_shift} for the soft intervention $a^{\ast} = a + \Delta_A$. Following Lemma~\ref{lem:finite_dim_classes}, we thus aim to find
\begin{align}
    \hat{q}_S := \text{arg\,}\min\limits_{\theta > 0} \frac{1}{n-m} \left( \sum_{i=m+1}^n l_{\alpha}(\theta \frac{\pi(a_i + \Delta_A \mid x_i)}{\pi(a_i \mid x_i)}, S_i) + l_{\alpha}(\theta \frac{\pi(a^{\ast} \mid x_{n+1})}{\pi(a_{n+1} \mid x_{n+1})}, S) \right) . 
\end{align}

\paragraph{Dual problem formulation}
First, we rewrite the primal problem as
\begin{equation}
\begin{aligned}
    &\min_{\theta > 0} \quad &&\sum_{i=m+1}^{n+1} (1-\alpha)u_i + \alpha v_i\\
    &\text{ s.t.} &&S_i - \theta \frac{\pi(a_i + \Delta_A \mid x_i)}{\pi(a_i \mid x_i)}) -u_i + v_i = 0, \quad \forall i=m+1\ldots,n+1, S_{n+1} = S\\
    & &&u_i, v_i \geq 0,  \quad \forall i=m+1\ldots,n+1.
\end{aligned}
\end{equation}
For a reference, see \citep{Gibbs.2023}.
The Lagrangian of the primal problem states
\footnotesize
\begin{align}
    \mathcal{L} = \sum_{i=m+1}^{n+1} (1-\alpha) u_i + \alpha v_i + \sum_{i=m+1}^{n+1} \eta_i \left( S_i - \theta \frac{\pi(a_i + \Delta_A \mid x_i)}{\pi(a_i \mid x_i)} - u_i+v_i \right) - \sum_{i=m+1}^{n+1} (\gamma_{1_i}u_i + \gamma_{2_i}v_i).
\end{align}
\normalsize
Setting derivative of $\mathcal{L}$ w.r.t. $u_i$ and $v_i$ to $0$ results in 
\begin{align}
    &\frac{\partial \mathcal{L}}{\partial u_i} = (1-\alpha) - \eta_i - \gamma_{1_i} \overset{!}{=} 0, \quad \forall i=m+1\ldots,n+1\\
    &\frac{\partial \mathcal{L}}{\partial v_i} = (1-\alpha) - \eta_i - \gamma_{2_i} \overset{!}{=} 0, \quad \forall i=m+1\ldots,n+1.
\end{align}
Since $\gamma_{1_i}, \gamma_{2_i} \geq 0$ $\forall i$, it follows for all $i=m+1,\ldots,n+1$ that
\begin{align}
    (1-\alpha) - \eta_i \geq 0 \quad \text{and} \quad \alpha - \eta_i \geq 0 \quad
    \Rightarrow -\alpha \leq \eta_i \leq 1-\alpha.
\end{align}
Therefore, the dual problem is formulated as
\begin{equation}
\begin{aligned}
    &\max_{\eta_i, i=m+1,\ldots,n+1} \min_{\theta > 0} \quad && \sum_{i=m+1}^{n} \eta_i \left(S_i - \theta \frac{\pi(a_i + \Delta_A \mid x_i)}{\pi(a_i \mid x_i)} \right) + \eta_{n+1} \left( S - \theta \frac{\pi(a^{\ast} \mid x_{n+1})}{\pi(a_{n+1} \mid x_{n+1})} \right) \\
    &\text{ s.t.} &&-\alpha \leq \eta_i \leq 1-\alpha, \quad \forall i=m+1,\ldots,n+1.
\end{aligned}    
\end{equation}

\paragraph{Coverage guarantee}

Recall from Lemma~\ref{lem:finite_dim_classes} that, for
\footnotesize
\begin{align}
    \hat{q}_{S_{n+1}}(y) = \text{arg\,}\min\limits_{\theta >0} \frac{1}{n-m} (\sum_{i=m+1}^n l_{\alpha}(\theta \frac{\pi(a_i + \Delta_A \mid x_i)}{\pi(a_i \mid x_i)}, S_i) + l_{\alpha}(\theta \frac{\pi(a^{\ast} \mid x_{n+1})}{\pi(a_{n+1} \mid x_{n+1})}, S_{n+1}(y))),
\end{align}
\normalsize
we can construct $C_{n+1}=\{ y \mid S_{n+1}(y) \leq \hat{q}_{S_{n+1}}(y) \}$ to achieve the desired coverage guarantee
\begin{align}
    P_f(Y(a^{\ast}) \in  C(X_{n+1}, A^{\ast}(X_{n+1}))) \geq 1 - \alpha. 
\end{align}

It is infeasible to calculate $\hat{q}_{S_{n+1}}(y)$ directly. Therefore, we optimize the dual problem to receive 
\begin{align}
     C(X_{n+1}, a^{\ast}):=\{ y \mid S_{n+1}(y) \leq S^{\ast}\}
\end{align}
with $S^{\ast}$ the maximum $S$, s.t. for $\eta_{n+1}^{S}$ maximizing the dual problem,  $\eta_{n+1}^{S} < 1-\alpha$. Hence, it is left to show that replacing $\hat{q}_{S_{n+1}}(y)$ by $S^{\ast}$ in $C$ does not change to coverage guarantee.

To do so, we fix some $\theta > 0$ to obtain a specific $f(a,x) := \theta \frac{\pi(a^{\ast} \mid x)}{\pi(a \mid x)}$. Let $\hat{g}(a,x) \in \mathcal{F}$ denote the primal optimal solution. Recall the Lagrangian
\footnotesize
\begin{align}
    \mathcal{L} = \sum_{i=m+1}^{n+1} (1-\alpha) u_i + \alpha v_i + \sum_{i=m+1}^{n+1} \eta_i (S_i -f(a_i, x_i) - u_i+v_i) - \sum_{i=m+1}^{n+1} (\gamma_{1_i}u_i + \gamma_{2_i}v_i).
\end{align}
\normalsize
Deriving wrt. $f$ yields the stationarity condition
\begin{align}
    0 &\overset{!}{=} -\sum_{i=m+1}^{n+1} \eta_i^S f(a_i, x_i)\\
    &= -\sum_{S_i < \hat{g}(a_i, x_i)} \eta_i^S f(a_i, x_i) -\sum_{S_i > \hat{g}(a_i, x_i)} \eta_i^S f(a_i, x_i) -\sum_{S_i = \hat{g}(a_i, x_i)} \eta_i^S f(a_i, x_i).
\end{align}
The complementary slackness Karush-Kuhn-Tucker conditions yield
\begin{align}
     \eta_{i}^{S} \in \begin{cases}
        - \alpha,  & \text{if } S_i < \hat{g}(a_i, x_i) , \\
        [- \alpha, 1- \alpha],  & \text{if } S_i = \hat{g}(a_i, x_i) , \\
        1 - \alpha,  & \text{if } S_i > \hat{g}(a_i, x_i).
     \end{cases}
\end{align}
Therefore, we can rewrite the equation from above as
\begin{align}
    0 &= \sum_{S_i < \hat{g}(a_i, x_i)} \alpha f(a_i, x_i) -\sum_{S_i > \hat{g}(a_i, x_i)}(1-\alpha) f(a_i, x_i) -\sum_{S_i = \hat{g}(a_i, x_i)} \eta_i^S f(a_i, x_i)\\
    &= \sum_{\eta_i^S < 1-\alpha} \alpha f(a_i, x_i) -\sum_{\eta_i^S = 1-\alpha} (1-\alpha) f(a_i, x_i) -  \sum_{\substack{\eta_i^S < 1-\alpha,\\ S_i = \hat{g}(a_i, x_i)}} (\alpha + \eta_i^S) f(a_i, x_i)\\
    &= \sum_{i=m+1}^{n+1} (\alpha - \mathbbm{1}_{[\eta_i^S = 1-\alpha]}) f(a_i, x_i) -  \sum_{\substack{\eta_i^S < 1-\alpha,\\ S_i = \hat{g}(a_i, x_i)}} (\alpha + \eta_i^S) f(a_i, x_i).
\end{align}

Before deriving the coverage guarantee from the stationarity condition, we state the following lemma to underline the definition of $S^{\ast}$.
\begin{lemma}[\citet{Gibbs.2023}]\label{lem:S_to_eta}
    The mapping $S \mapsto \eta_{n+1}^S$ is non-decreasing in S for all $\eta_{n+1}^S$ maximizing
    \begin{equation}
    \begin{aligned}
        &\max_{\eta_i, i=m+1,\ldots,n+1} \min_{g \in \mathcal{F}} \quad && \sum_{i=1}^{n} \eta_i(S_i - g(a_i, x_i)) + \eta_{n+1} (S - g(a_{n+1},x_{n+1}))\\
        &\text{ s.t.} &&-\alpha \leq \eta_i \leq 1-\alpha, \quad \forall i=m+1,\ldots,n+1
    \end{aligned}    
    \end{equation}
    for non-negative function classes $\mathcal{F}$.
\end{lemma}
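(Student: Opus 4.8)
The plan is to recognize this as a monotone comparative statics (Topkis-type) result and to exploit the special way in which the parameter $S$ enters the objective. First I would observe that $S$ appears in the inner objective only through the single bilinear term $\eta_{n+1}S$, and that this term does not involve $g$. Writing the objective as $L(\eta,g,S) = \eta_{n+1}S + \sum_{i=1}^{n}\eta_i(S_i - g(a_i,x_i)) - \eta_{n+1}g(a_{n+1},x_{n+1})$, the summand $\eta_{n+1}S$ is constant in $g$ and can be pulled out of $\min_{g\in\mathcal{F}}$, so the inner value decouples as
$$V(\eta,S) := \min_{g\in\mathcal{F}} L(\eta,g,S) = \eta_{n+1}S + W(\eta),$$
where $W(\eta) := \min_{g\in\mathcal{F}}\left[\sum_{i=1}^{n}\eta_i(S_i - g(a_i,x_i)) - \eta_{n+1}g(a_{n+1},x_{n+1})\right]$ is \emph{independent of} $S$. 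The outer problem is then simply $\max_{\eta} V(\eta,S)$ over the box $[-\alpha,1-\alpha]^{n+1}$, and $\eta^S$ is by definition any maximizer of $V(\cdot,S)$.

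Next I would run the standard revealed-preference argument. Fix $S_1 < S_2$ and let $\eta^{(1)},\eta^{(2)}$ be \emph{any} maximizers of $V(\cdot,S_1)$ and $V(\cdot,S_2)$ respectively. Optimality yields $V(\eta^{(1)},S_1)\geq V(\eta^{(2)},S_1)$ and $V(\eta^{(2)},S_2)\geq V(\eta^{(1)},S_2)$. Substituting $V(\eta,S)=\eta_{n+1}S+W(\eta)$ and adding the two inequalities, the $W$ terms cancel, giving $\eta^{(1)}_{n+1}S_1 + \eta^{(2)}_{n+1}S_2 \geq \eta^{(2)}_{n+1}S_1 + \eta^{(1)}_{n+1}S_2$, i.e. $(\eta^{(1)}_{n+1}-\eta^{(2)}_{n+1})(S_1-S_2)\geq 0$. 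Since $S_1-S_2<0$, this forces $\eta^{(1)}_{n+1}\leq\eta^{(2)}_{n+1}$, which is exactly the claimed monotonicity. Crucially, because the inequalities hold for arbitrary maximizers, the conclusion is valid for \emph{every} selection of $\eta^S$, matching the ``for all $\eta_{n+1}^S$'' phrasing of the statement.

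The point to be careful about — more a matter of well-posedness than a deep obstacle — is ensuring that the maximizers exist and that all quantities in the summed inequalities are finite, so that the decoupling identity $V(\eta,S)=\eta_{n+1}S+W(\eta)$ is legitimate; this is where the assumption that $\mathcal{F}$ is a (finite-dimensional, non-negative) function class is used, guaranteeing the inner minimization is well-posed at each maximizer. I would also stress that the argument never uses any structure of $W$ beyond its independence of $S$: the entire force of the lemma comes from the increasing differences of the objective in $(\eta_{n+1},S)$ induced by the bilinear term $\eta_{n+1}S$, in line with the treatment of \citet{Gibbs.2023}.
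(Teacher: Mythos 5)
Your argument is correct. The decoupling $V(\eta,S)=\eta_{n+1}S+W(\eta)$ is legitimate because $S$ enters the inner objective only through the term $\eta_{n+1}S$, which is constant in $g$, and the revealed-preference step $(\eta^{(1)}_{n+1}-\eta^{(2)}_{n+1})(S_1-S_2)\geq 0$ then delivers the monotonicity for every selection of maximizers; the finiteness caveat you flag is also easily discharged, since $\eta=0$ is feasible with value $0$, so the optimal value (and hence $W$ at any maximizer) is finite even when $W(\eta)=-\infty$ elsewhere on the box. Note, however, that the paper does not actually prove Lemma~\ref{lem:S_to_eta}: it imports it from \citet{Gibbs.2023} and uses it as a black box, so your write-up supplies a proof the paper omits. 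The closest in-paper analogue is Lemma~\ref{lem:S_to_v}, the primal counterpart ($S\mapsto v_{n+1}^S$ non-increasing), which the authors prove by contradiction via manipulations of the primal feasibility constraints $S_i-g(x_i,a_i)-u_i+v_i=0$. Your Topkis-style comparative-statics argument on the dual is genuinely different and, in my view, cleaner: it requires no case analysis, no contradiction, and no structure of $\mathcal{F}$ beyond well-posedness of the inner minimum (in particular it does not use non-negativity of $\mathcal{F}$, so it proves a slightly more general statement), and it makes transparent that the entire content of the lemma is the increasing differences of the objective in $(\eta_{n+1},S)$.
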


To prove the final coverage guarantee, we observe that
\begin{align}
    &\mathbb{E}[f(a_{n+1},x_{n+1}) (\mathbbm{1}_{[Y(a^{\ast}) \in  C(X_{n+1}, a^{\ast})]} - (1-\alpha))] \\=
    &\mathbb{E}[f(a_{n+1},x_{n+1}) (\alpha - \mathbbm{1}_{[Y(a^{\ast}) \notin  C(X_{n+1}, a^{\ast})]})]\\
    = &\mathbb{E}[f(a_{n+1},x_{n+1}) (\alpha - \mathbbm{1}_{[S(y) > S^{\ast}]})].
\end{align}
With the definition of $S^{\ast}$ as the maximum optimizer $\eta_{n+1}^S$ with $\eta_{n+1}^S < 1-\alpha$ and Lemma~\ref{lem:S_to_eta}, it follows that
\begin{align}
    \mathbb{E}[f(a_{n+1},x_{n+1}) (\alpha - \mathbbm{1}_{[S(y) > S^{\ast}]})] = \mathbb{E}[(\alpha - \mathbbm{1}_{[\eta_{n+1}^S = 1-\alpha]}) f(a_{n+1}, x_{n+1})]
\end{align}
and, by exchangeability of $(f(a_i, x_i), \hat{q}_{S}(a_i. x_i), S_i)$, that
\begin{align}
    \mathbb{E} [ (\alpha - \mathbbm{1}_{[\eta_i^S = 1-\alpha]}) f(a_i, x_i)] &= \mathbb{E} \left[ \frac{1}{n-m}\sum_{i=m+1}^{n+1}(\alpha - \mathbbm{1}_{[\eta_i^S = 1-\alpha]}) f(a_i, x_i) \right] \\
    &= \mathbb{E} \left[ \frac{1}{n-m}\sum_{\substack{\eta_i^S < 1-\alpha,\\ S_i = \hat{g}(a_i, x_i)}} (\alpha + \eta_i^S) f(a_i, x_i) \right].
\end{align}

Since $f$ is positive and $\eta_i \in [-\alpha, 1-\alpha]$, it follows
\begin{align}
    & \mathbb{E}[f(a_{n+1},x_{n+1}) (\mathbbm{1}_{[Y(a^{\ast}) \in  C(X_{n+1}, a^{\ast})]} - (1-\alpha))] \geq 0
\end{align}
and thus
\begin{align}
    P_f(Y(a^{\ast}) \in C_{n+1} C(X_{n+1}, A^{\ast}(X_{n+1}))) \geq 1 - \alpha.
\end{align}
\hfill $\qed$

\subsection{Proof of Theorem~\ref{thm:unknown_prop}}
\label{sec:appendix_proof_unknown}

We follow the same outline as in the proof of Theorem~\ref{thm:known_prop} in Section~\ref{sec:appendix_proof_known}. In Lemma~\ref{lem:point_intervention_shift}, we motivated the functional class of distribution shifts. Therefore, it is left to prove the coverage guarantee of $C_{n+1}$.

Key to our proof is the following lemma.
\begin{lemma}\label{lem:S_to_v}
    The mapping $S \mapsto v_{n+1}^S$ is non-increasing in S for all $g^S(x,a)$ minimizing
    \begin{equation}
    \begin{aligned}
        &\min_{g \in \mathcal{F}} \quad && \sum_{i=m+1}^{n+1} (1-\alpha)u_i + \alpha v_i\\
        &\text{ s.t.} &&S_i - g(x_i,a_i) - u_i + v_i = 0, \quad \forall i=m+1,\ldots,n+1
    \end{aligned}    
    \end{equation}
    for non-negative function classes $\mathcal{F}$ and imputed $S_{n+1}=S$ stemming from a non-negative non-conformity score function (e.g., the residual of the prediction).
\end{lemma}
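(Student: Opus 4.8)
The plan is to first make the optimization variable $v_{n+1}^S$ explicit and then to reduce the whole statement to a one‑dimensional monotonicity property of the fitted quantile at the test point. At any optimum of the linear program it never pays to keep both $u_i$ and $v_i$ strictly positive, so $u_i^S=(S_i-g^S(x_i,a_i))^+$ and $v_i^S=(g^S(x_i,a_i)-S_i)^+$, where $g^S$ is the optimal element of $\mathcal{F}$ and $\rho_\alpha(r):=(1-\alpha)r^++\alpha r^-$ (with $r^+=\max(r,0)$, $r^-=\max(-r,0)$) is the pinball loss attached to $l_\alpha$. Writing $m(S):=g^S(x_{n+1},a_{n+1})$ for the fitted value at the imputed test point, this gives $v_{n+1}^S=(m(S)-S)^+$. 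Since $r\mapsto r^+$ is non‑decreasing, it suffices to show that $S\mapsto m(S)-S$ is non‑increasing, i.e.\ $m(S_2)-m(S_1)\le S_2-S_1$ for all $S_1<S_2$.

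Next I would pass to one dimension by partially minimizing over the calibration fit at a fixed test value. Setting
\begin{align}
    \Phi(t) := \min\Big\{ \textstyle\sum_{i=1}^{n} \rho_\alpha(S_i - g(x_i,a_i)) \ \Big| \ g \in \mathcal{F},\ g(x_{n+1},a_{n+1}) = t \Big\},
\end{align}
the problem becomes $m(S)=\argmin_t\big[\Phi(t)+\rho_\alpha(S-t)\big]$, which exhibits $m(\cdot)$ as the proximal map of $\Phi$ with respect to the asymmetric loss $\rho_\alpha$; the target inequality is exactly its one‑sided non‑expansiveness. The stationarity condition reads: optimality at $S$ is equivalent to the existence of $\xi\in\partial\Phi(m(S))$ with $\xi\in\partial\rho_\alpha(S-m(S))$, where $\partial\rho_\alpha(r)=\{1-\alpha\}$ for $r>0$, $\{-\alpha\}$ for $r<0$, and $[-\alpha,1-\alpha]$ at $r=0$. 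Because the constraints satisfy LICQ and the problem is Type‑I invex (Lemma~\ref{lem:invexity}), every KKT point is a global optimum, so $\Phi$ behaves like a convex function and $\partial\Phi$ is a monotone relation.

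I would then close the argument by a short case analysis on the signs of $r_i:=S_i-t_i$, where $t_i=m(S_i)$. Monotonicity of $\partial\Phi$ gives $(t_2-t_1)(\xi_2-\xi_1)\ge 0$, while $\xi_i\in\partial\rho_\alpha(r_i)$ confines each $\xi_i$ to $[-\alpha,1-\alpha]$ and couples its value to $\mathrm{sign}(r_i)$. If $r_1>0$ (so $v_{n+1}^{S_1}=0$) and $r_2<0$, then $\xi_1=1-\alpha>-\alpha=\xi_2$, forcing $t_2\le t_1$; but $r_1>0,\ r_2<0$ together with $S_1<S_2$ force $t_2>S_2>S_1>t_1$, a contradiction — so $v_{n+1}$ cannot jump up from $0$. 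If instead $r_1\le 0$, the same two monotonicities force $r_2\ge r_1$, i.e.\ $t_2-t_1\le S_2-S_1$, so $v_{n+1}^{S_2}=(-r_2)^+\le(-r_1)^+=v_{n+1}^{S_1}$. In both regimes $v_{n+1}^{S_2}\le v_{n+1}^{S_1}$, which is the claim; non‑uniqueness of the quantile‑regression solution (flat pieces of $\Phi$) is absorbed by the order‑invariant solver already fixed in the paper.

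The main obstacle is precisely the non‑expansiveness step. The pure exchange/rearrangement inequality $\rho_\alpha(S_1-t_1)+\rho_\alpha(S_2-t_2)\le\rho_\alpha(S_1-t_2)+\rho_\alpha(S_2-t_1)$ only certifies monotonicity $t_1\le t_2$ and is vacuous for the Lipschitz‑from‑above bound; were $\Phi$ genuinely non‑convex, the global minimizer could jump upward by more than $S_2-S_1$ as a distant basin becomes optimal, breaking the non‑increasing property. Ruling this out is exactly where the invexity and LICQ of Lemma~\ref{lem:invexity} are indispensable, since they let me treat $\partial\Phi$ as monotone and run the proximal argument. I would also remark that the dual counterpart Lemma~\ref{lem:S_to_eta} only yields the sign/support version (the set $\{S:v_{n+1}^S>0\}$ is a lower interval), so the magnitude statement genuinely requires the primal analysis above; finally, $\mathcal{F}$ being non‑negative keeps $m(S)\ge 0$, consistent with the non‑negative score setting, although the monotonicity argument itself does not rely on it.
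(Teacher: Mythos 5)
Your reduction of the claim to the one-sided non-expansiveness $m(S_2)-m(S_1)\le S_2-S_1$ of the fitted test value is clean, and the identity $v_{n+1}^S=(m(S)-S)^+$ is correct for this LP-type objective. The problem is the step you yourself flag as the main obstacle: you discharge it by asserting that LICQ plus Type-I invexity let you ``treat $\partial\Phi$ as monotone.'' That inference is not valid. Invexity guarantees that every KKT point of the \emph{original} problem is a global minimum; it does not make the partially minimized value function $\Phi(t)$ convex, nor its (generalized) subdifferential a monotone relation. A function such as $t\mapsto 1-e^{-t^2}$ is invex (its only stationary point is the global minimum) yet has a non-monotone derivative, and the same phenomenon can occur for $\Phi$ here, since the slice $\{g\in\mathcal{F} \mid g(x_{n+1},a_{n+1})=t\}$ is parametrized nonlinearly through $(c_a,\sigma)$. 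Without monotonicity of $\partial\Phi$, the inequality $(t_2-t_1)(\xi_2-\xi_1)\ge 0$ on which your whole case analysis rests is exactly what you cannot assume.

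Second, even granting monotone $\partial\Phi$, the case analysis has a hole in the regime that matters most: if $r_2<r_1<0$ (so $v_{n+1}$ strictly increases), then $\partial\rho_\alpha$ pins $\xi_1=\xi_2=-\alpha$, the monotonicity inequality reads $0\ge 0$, and nothing forces $r_2\ge r_1$; your claim that ``the same two monotonicities force $r_2\ge r_1$'' fails on any flat piece of $\Phi$ of slope $-\alpha$. Relatedly, on such a flat piece the argmin over $t$ is a nontrivial interval and different selections $g^S$ can move $m(S)$ by more than $S_2-S_1$, so the ``for all $g^S$ minimizing'' quantifier is not absorbed by order-invariance of the solver, which only concerns permutations of the data, not tie-breaking in $t$. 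For comparison, the paper's own proof takes an entirely different and more elementary route: it assumes $\tilde S>S$ with $v_{n+1}^{\tilde S}>v_{n+1}^{S}$, multiplies the two feasibility constraints $S-g^S(x_{n+1},a_{n+1})-u^S_{n+1}+v^S_{n+1}=0$ and $\tilde S-g^{\tilde S}(x_{n+1},a_{n+1})-u^{\tilde S}_{n+1}+v^{\tilde S}_{n+1}=0$ against each other and derives a contradiction by direct algebra, never invoking invexity, value functions, or subgradients. To repair your argument you would need either an actual proof that $\Phi$ is convex (doubtful for this $\mathcal{F}$) or to switch to a constraint-level exchange argument of that kind.
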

\begin{proof}
    Assume for contradiction that there exists $\tilde{S} > S$ such that $v^{\tilde{S}}_{n+1} > v^S_{n+1}$. Then
    \begin{align}
        (\tilde{S} - S)(v^{\tilde{S}}_{n+1} - v^S_{n+1}) > 0.
    \end{align}
    We observe that
    \begin{align}
        \tilde{S} (S - g^S(x_{n+1},a_{n+1}) - u^S_{n+1} + v^S_{n+1}) = 
        S (\tilde{S} - g^{\tilde{S}}(x_{n+1},a_{n+1}) - u^{\tilde{S}}_{n+1} + v^{\tilde{S}}_{n+1}) = 0. 
    \end{align}
    Reformulating the equation above yields
    \begin{align}
        &(\tilde{S} - S)(v^{\tilde{S}}_{n+1} - v^S_{n+1})\\
        = \quad &\tilde{S}u^S_{n+1} - Su^{\tilde{S}}_{n+1} + \tilde{S}g^S(x_{n+1},a_{n+1}) - Sg^{\tilde{S}}(x_{n+1},a_{n+1}) + \tilde{S}v^{\tilde{S}}_{n+1} - Sv^S_{n+1}\\
        < \quad &S(u^S_{n+1} - u^{\tilde{S}}_{n+1} + g^S(x_{n+1},a_{n+1}) - g^{\tilde{S}}(x_{n+1},a_{n+1}) - (v^S_{n+1} - v^{\tilde{S}}_{n+1}))\\
        = \quad &S(S - \tilde{S}).
    \end{align}
    This is equivalent to
    \begin{align}
        &(S - \tilde{S})( v^S_{n+1} - v^{\tilde{S}}_{n+1}) < S(S - \tilde{S})\\
        \iff \quad & v^S_{n+1} - v^{\tilde{S}}_{n+1} > S \geq 0,
    \end{align}
    which contradicts the assumption that $v^{\tilde{S}}_{n+1} > v^S_{n+1}$.
\end{proof}

\textbf{Coverage guarantees.} As in \ref{sec:appendix_proof_known}, we fix some $\sigma > 0 $ and $c_a \in [\frac{1}{M}, M]$ to obtain a specific $f(a,x) := \frac{c_a}{\sqrt{2\pi}\sigma} \frac{\exp{ \left( -\frac{(a_i-a^{\ast})^2}{2\sigma^2} \right) }}{\hat{\pi}(a_i\mid x_i)}$.  We further denote $\hat{g}(a,x) \in \mathcal{F}$ the optimal solution given by the optimal values $\hat{\sigma}$ and $\hat{c_a}$.  

With the definition of $S^{\ast}$ as the minimum $S$ such that  $v_{n+1}^{S^{\ast}} = 0$ and Lemma~\ref{lem:S_to_v}, we now can state
\begin{align}
    \mathbb{E}[f(a_{n+1},x_{n+1}) (\mathbbm{1}_{[Y(a^{\ast}) \in C_{n+1}]} - (1-\alpha))] &=
    \mathbb{E}[f(a_{n+1},x_{n+1}) (\mathbbm{1}_{[v_{n+1}^{S} > 0]} - (1-\alpha))]\\
    &= \mathbb{E}[f(a_{n+1},x_{n+1}) (\alpha - \mathbbm{1}_{[v^S_{n+1} = 0]})]
\end{align}
and, by exchangeability of $(f(a_i, x_i), \hat{q}_{S}(a_i. x_i), S_i)$, that
\footnotesize
\begin{align}
    \mathbb{E}[f(a_{n+1},x_{n+1}) (\alpha - \mathbbm{1}_{[v^S_{n+1} = 0]})] &= \mathbb{E} \left[ \frac{1}{n-m}\sum_{i=m+1}^{n+1} f(a_{n+1},x_{n+1}) (\alpha - \mathbbm{1}_{[v^S_{n+1} = 0]}) \right] \\
    &=\frac{1}{n-m} \mathbb{E} \left[ \sum_{v^S_i > 0} \alpha f(a_i, x_i) - \sum_{v^S_i = 0}) (1-\alpha)f(a_i, x_i) \right] \\
    &=\frac{1}{n-m} \mathbb{E} \left[ \sum_{S_i < \hat{g}(a_i, x_i)} \alpha f(a_i, x_i) - \sum_{S_i \geq \hat{g}(a_i, x_i)}) (1-\alpha)f(a_i, x_i) \right] .
\end{align}
\normalsize

Deriving the Lagrangian above wrt. $f$ yields the stationarity condition
\begin{align}
    0 &\overset{!}{=} \sum_{i=m+1}^{n+1} \eta_i^S f(a_i, x_i)\\
    &= \sum_{S_i < \hat{g}(a_i, x_i)} \eta_i^S f(a_i, x_i) + \sum_{S_i > \hat{g}(a_i, x_i)} \eta_i^S f(a_i, x_i) + \sum_{S_i = \hat{g}(a_i, x_i)} \eta_i^S f(a_i, x_i).
\end{align}
The complementary slackness Karush-Kuhn-Tucker conditions yield
\begin{align}
    \eta_{i}^{S} \in \begin{cases}
        - \alpha,  & \text{if } S_i < \hat{g}(a_i, x_i) ,\\
        [- \alpha, 1-\alpha],  & \text{if } S_i = \hat{g}(a_i, x_i) ,\\
        1-\alpha,  & \text{if } S_i > \hat{g}(a_i, x_i).
     \end{cases}
\end{align}

Therefore, we receive
\footnotesize
\begin{align}
    \mathbb{E}[f(a_{n+1},x_{n+1}) (\alpha - \mathbbm{1}_{[v^S_{n+1} = 0]})] 
    &= \frac{1}{n-m} \mathbb{E}\left[ \sum_{\eta^S_i < 1-\alpha} \alpha f(a_i, x_i) - \sum_{\eta^S_i = 1-\alpha}) (1-\alpha)f(a_i, x_i) \right] \\
    &= \frac{1}{n-m} \mathbb{E} \left[ \sum_{\substack{\eta_i^S < 1-\alpha,\\ S_i = \hat{g}(a_i, x_i)}} (\alpha + \eta_i^S) f(a_i, x_i) \right] .
\end{align}
\normalsize

Since $f$ is positive and $\eta_i \in [-\alpha, 1-\alpha]$, it follows
\begin{align}
    & \mathbb{E}[f(a_{n+1},x_{n+1}) (\mathbbm{1}_{[Y(a^{\ast}) \in  C(X_{n+1}, a^{\ast})]} - (1-\alpha))] \geq 0
\end{align}
and thus
\begin{align}
    P_f(Y(a^{\ast}) \in  C(X_{n+1}, a^{\ast})) \geq 1 - \alpha.
\end{align}
\hfill $\qed$

\newpage

\section{Additional background}
\label{sec:appendix_literature}

\subsection{Extended literature review}

\textbf{Uncertainty quantification for causal quantities}

There exist various methods for uncertainty quantification of causal quantities. These are often based on Bayesian methods \citep[e.g.,][]{Alaa.2017, Hess.2024, Hill.2011, Jesson.2020}. However, Bayesian methods require the specification of a prior distribution based on domain knowledge and are thus neither robust to model misspecification nor generalizable to model-agnostic machine learning models. Other methods only provide asymptotic guarantees \citep[e.g.,][]{Jin.2023, Jonkers.2024}. The strength of conformal prediction, however, is to provide finite-sample uncertainty guarantees.

In the following, we present related work on CP for causal quantities in more detail.

Recently, \citet{Alaa.2023} provided predictive intervals for CATE meta-learners under the assumption of full knowledge of the propensity score. As an extension, \citet{Jonkers.2024} proposed a Monte-Carlo sampling approach to receive less conservative intervals. \citet{Chen.2024} provide prediction intervals for counterfactual outcomes. However, the proposed method requires access to additional interventional data and is thus not applicable to real-world applications on observational data. All methods are restricted to binary treatments. 

Other works focus on prediction intervals for off-policy prediction \citep{Taufiq.2022, Zhang.2022b} and conformal sensitivity analysis \citep{Yin.2022b}, thus neglecting estimation errors arising from propensity or weight estimation or for randomized control trials \citep{Kivaranovic.2020}. \citet{Wang.2024} constructed intervals with treatment-conditional coverage of discrete treatments. Aiming for group-conditional coverage, \citet{Wang.2024} adapted CP to cluster randomized trials. Nevertheless, the method only applies to a finite number of treatments and thus is not applicable to continuous treatments. 

\citet{Lei.2021} consider the estimated propensity by incorporating the estimation error as a TV-distance term in the coverage guarantees. However, for large TV-distances (close to 1), the proposed method can only construct intervals with a very limited coverage $\alpha \in (0, 1 - TV)$. Hence, the method is not suitable for applications in medical practice. Our method, however, can also construct intervals with high coverage guarantees for high estimation errors. An increased error will widen the prediction intervals instead of reducing the coverage guarantee. We consider our approach more suitable for medical practice, as one can visually inspect the intervals and decide on the suitability of the task at hand.

Overall, no method can provide exact intervals for continuous treatments. Especially, no method considers the error arising from propensity estimation in the analysis.

\textbf{Conformal prediction under covariate shift}
Multiple works on CP with \emph{marginal coverage} under distribution shifts between training and test data have been introduced in the literature \citep[e.g.,][]{Cauchois.2020, Fannjiang.2022, Gendler.2022, Ghosh.2023, Gibbs.2021, Gibbs.2023, Guan.2023, Lei.2021, Podkopaev.2021, Tibshirani.2019, Yang.2022}. Our setting also involves a distribution shift due to the intervention on the treatment but differs from the latter in that the true distribution shift is unknown.

\citet{Gibbs.2023} introduced an approach to derive CP intervals under unknown distribution shifts. It proves valid finite-sample prediction intervals for all distribution shifts in a finite-dimensional function class. However, the approach does \textbf{not} directly apply to causal inference settings. Nevertheless, our framework builds upon the work by \citet{Gibbs.2023} in that we re-frame the proposed approach to apply to the distribution shift induced through the intervention in causal effect estimation. In this setting, the distribution shift is captured by the shift of the propensity function.
Adapting \citet{Gibbs.2023} to a causal inference setting requires carefully addressing the underlying challenges that come from computing CP intervals in a causal inference setting (e.g., propensity score estimation, hard/soft interventions), which we regard as our main novelty and which is of immediate practical relevance (e.g., in personalized medicine).

\subsection{The need for exchangeability in CP}

Coverage guarantees of existing CP intervals essentially rely on the exchangeability of the non-conformity scores. Exchangeability assures that the nonconformity score of the test point $n+1$ is equally likely to fall anywhere among the calibration scores, its rank is uniform, and that uniformity is exactly what yields the distribution‑free coverage guarantee. Without exchangeability, the rank is not guaranteed to be uniform, and the marginal coverage bound can fail.

However, intervening on treatment $A$ shifts the propensity function and, therefore, induces a shift in the covariates between calibration and test data, specifically in treatment $A$. Therefore, exchangeability is not fulfilled, and the coverage guarantees might fail.

As a remedy, we present a novel and powerful remedy in our work: The overall distribution of the confounders $X$ is assumed to stay constant between train, calibration, and test data (as standard in ML problems). This is completely orthogonal to constructing intervals for different (e.g., young or old) patients. Note that CP intervals are constructed for only one sample/patient at a time. This means that different intervals are constructed for patients with different features $X$. In other words, the intervals are constructed conditionally on $X$, but the coverage guarantee is marginal across the complete population of $X$. Overall, the shift from one patient to another does not pose any challenges for CP methods.

\newpage
\section{Extended discussion}
\label{sec:appendix_discussion}

\subsection{Discussion on the tightness of our CP intervals:}
Our method builds upon the idea of CP to provide finite-sample coverage guarantees. Notably, standard CP does not provide intervals that are proven to be sharp. To our knowledge, there is no method that provides sharp/the tightest possible intervals for potential outcomes. Exploring the tightness of our CP intervals is an interesting and important direction for future research.

In practice, it is possible to observe the width, and thus the informativeness for decision-making, of the intervals. However, coverage guarantees cannot be observed. Therefore, we help the decision-maker by providing valid intervals. The decision-maker has to decide, case by case, if the returned intervals are beneficial for the problem at hand. Note that this aspect of the informativeness of intervals holds true for any uncertainty quantification method (including those to be proven to be sharp).

\subsection{A note on challenges and difficulties in CP for causal effects of continuous treatments}

Existing works on conformal prediction for binary or (low-dimensional) discrete treatment are commonly based on (a) weighted conformal prediction \citep{Tibshirani.2019} or (b) conformal prediction local coverage guarantees \citep{Lei.2014}. The first approach provides marginal coverage under a distribution shift through reweighting. It requires computing the weights based on the probability of treatment $A=a$. However, for continuous treatments, this is always zero. Although applicable to binary or low-dimensional discrete treatments \citep[e.g.,][]{Lei.2021}, this weighting approach cannot be extended similarly to continuous treatments. Furthermore, the propensity of a continuous treatment given by the Dirac delta function $\delta_a$ would require us to restrict the calibration to data samples of the specific treatment, which are extremely rare or even \emph{might be missing}. Therefore, the calibration step cannot be employed in our setting. The second approach provides treatment group-conditional coverage. Although again possible for binary or low-dimensional treatments, this approach \emph{does not apply to continuous treatment}s as no treatment groups can be defined. Instead, we propose a novel method for conformal predictions that circumvents the above problems and is carefully tailored to continuous treatments.

\subsection{Causal effects of continuous treatments \& kernel smoothing}
Causal inference becomes challenging with continuous treatments primarily due to the infinite number of potential outcomes per sample, from which only one outcome is observed. Continuous treatments thus result in causal effects that are generally represented by curves (called dose-response curves) \citep{Kennedy.2019}. This is unlike binary treatment, where the causal effects are represented by a single discrete value.

For continuous treatments, the dose-response curves are typically assumed to fulfill some smoothness criterion \citep[e.g.,][]{Schwab.2020, Schweisthal.2023}. Hence, when estimating treatment effects, interpolation and kernel smoothing of the outcome function are commonly employed \citep[e.g.,][]{Kennedy.2019, Nagalapatti.2024}.

Underlying causal estimation with continuous treatments is the generalized propensity score \citep{Imbens.2000}. It is defined as the conditional probability of receiving treatment $a^{\ast}$ given the covariates $X$ under the following regularity conditions: (i)~For each $i$, $Y_i(a), x_i, A_i$ are defined on a common probability space; (ii)~$A_i$ is continuously distributed with respect to the Lebesgue measure; and (iii)~$Y_i = Y_i(A_i)$ is a well-defined random variable.

Approximating the density $\delta_{a^{\ast}}(a)$ of the hard intervention $a^{\ast}$ through a Gaussian kernel follows directly from the definition of $\delta_{a^{\ast}}(a)$ as the limit of such kernel. This is also common in the literature \citep[e.g.,][]{Kallus.2018}. Importantly, we note that we do not directly approximate the potential outcome $Y(a^{\ast})$ (but only the propensity scores). Thus, we do not have a bias-variance trade-off of the estimated outcome. Due to the smoothness of the dose-response curve, it is now valid to employ observed samples within a treatment region of $a^{\ast}$ defined by $\sigma$ to construct the intervals. We note that the importance of the samples is weighted by the inverse distance of the sample to $a^{\ast}$ in treatment space. We give further intuition on the relationship between $\sigma$, the importance of observational samples, and the prediction interval width in the following.

\subsection{Interpretation of optimal parameters}
To obtain CP intervals under an unknown distribution shift, we approximate the Dirac-delta distribution representing the hard intervention by a Gaussian function as
\begin{align}
    \delta_{a^{\ast}}(a) = \lim\limits_{\sigma \rightarrow 0} \frac{1}{\sqrt{2\pi} \sigma} \exp \left( -\frac{(a-a^{\ast})^2}{2\sigma^2} \right).
\end{align}
In Theorem~\ref{thm:unknown_prop}, we thus optimize over $\sigma >0$ and $c_a \in [\frac{1}{M}, M]$ to obtain the $(1-\alpha)$-quantile of the distribution shift-calibrated non-conformity scores. The optimal parameter $\sigma^{\ast}$ represents a trade-off between the uncertainty in the prediction and the uncertainty in the interval construction: A small $\sigma^{\ast}$ resembles the propensity of the hard intervention best. Thus, with sufficient or even infinite data close to $a^{\ast}$, we could construct the narrowest CP interval. However, the smaller $\sigma$, the less data close to $a^{\ast}$ will be available to calculate the prediction interval in practice. As a result, many calibration data samples will be strongly perturbed during the calculation, which increases the uncertainty and, thus, the interval size.

The parameter $c_a$ allows us to incorporate the estimation error in the propensity score. It represents a weighting of the propensity shift such that the $(1-\alpha)$-quantile of the non-conformity scores is increased with higher estimation error.

\subsection{Interpretation of parameter $M$}

Our optimization requires the specification of a parameter $M$, denoting a bound on the propensity estimation error. One can view the parameter $M$ as a type of sensitivity parameter. 
Therefore, we follow former work in causal inference and propose to incorporate domain knowledge to specify the parameter $M$ \citep[e.g.,][]{Frauen.2024, Tan.2006}. Another way of making use of the parameter $M$ is to observe how the intervals change for varying $M$. This indicates how much effect the propensity misspecification has on the prediction interval and can help in making reliable decisions. A third option to calibrate $M$ is to employ measures for epistemic uncertainty on top of the propensity estimate when there is no domain knowledge for specifying $M$.

\subsection{A note on the stability of our method}
\label{sec:appendix_stability}

In our experiments, one can observe some instability for certain privacy budgets and intervention combinations. This is likely due to the fact that the CP coverage guarantees are only \emph{marginal}. Therefore, we might experience under- or over-coverage. However, we note that across all runs, our method, on average, achieves the desired coverage. These instabilities only occur in single settings. Furthermore, the variance in coverage of our method is much lower than the coverage variance of the baselines. A more in-depth analysis of the stability of the proposed method is left for future work.

\newpage
\section{Experimentation details}
\label{sec:appendix_experiments}

\subsection{Synthetic dataset generation}

We consider two different propensity and outcome functions. In each setting, we assign two types of interventions: a known propensity shift of $\Delta = 1,5,10$, i.e., three soft interventions $a^{\ast} = a + \Delta$, and the point interventions $a^{\ast} \in \{1x, 5x, 10, \}$ given the confounder $X=x$.

We generate synthetic datasets for each setting. Specifically, we draw each 2000 train, 1000 calibration, and each 1000 test samples per intervention from the following structural equations. Dataset 1 is given by
\begin{align*}
    &X \sim \mathrm{Uniform}[1,4] \quad \text{(integer)}\\
    &A \sim p \cdot \mathrm{Uniform}[0, 5X) + (1-p)\mathrm{Uniform}[5X, 40], \quad p\sim \mathrm{Bernoulli}(0.3)\\
    &Y \sim \sin \left( \frac{\pi}{6}(0.1A -0.5X) \right) + \mathrm{Normal}(0, 0.1),
\end{align*}
and dataset~2 by
\begin{align*}
    &X \sim \mathrm{Uniform}[1,4]  \quad \text{(integer)}\\
    &A \sim \mathrm{Normal}(5X, 10)\\
    &Y \sim \sin \left( \frac{\pi}{2}(0.1A -0.1X) \right) + \mathrm{Normal}(0, 0.1),
\end{align*}

\subsection{Medical dataset}

We use the MIMIC-III dataset \citep{Johnson.2016}, which includes electronic health records (EHRs) from patients admitted to intensive care units. From this dataset, we extract 8 confounders (heart rate, sodium, blood pressure, glucose, hematocrit, respiratory rate, age, gender) and a continuous treatment (mechanical ventilation) using an open-source preprocessing pipeline \citep{Wang.2020}. From each patient trajectory in the EHRs, we sample random time points and average the value of each variable over the ten hours before the sampled time point. We define the variable blood pressure after treatment as the outcome, for which we additionally apply a transformation to be more dependent on the treatment and less on the blood pressure before treatment. We remove all patients (samples) with missing values and outliers from the dataset. Outliers are defined as samples with values smaller than the  0.1th percentile or larger than the 99.9th percentile of the corresponding variable. The final dataset contains 14719 samples, which we split into train (60\%), val (10\%), calibration (20\%), and test (10\%) sets.

\subsection{Implementation details}

Our experiments are implemented in PyTorch Lightning. We provide our code in our \href{https://github.com/m-schroder/ContinuousCausalCP}{GitHub} repository. All experiments were run on an AMD Ryzen 7 PRO 6850U 2.70 GHz CPU with eight cores and 32GB RAM.

We limited the experiments to standard multi-layer perception (MLP) regression models, consisting of three layers of width 16 with ReLu activation function and MC dropout at a rate of 0.1, optimized via Adam. We did not perform hyperparameter optimization, as our method aimed to provide an agnostic prediction interval applicable to any prediction model. All models were trained for 300 epochs with batch size 32. 

Our algorithm requires solving (non-convex) optimization problems through mathematical optimization. We chose to employ two interior-point solvers in our experiments: For the experiments with soft interventions that pose convex optimization problems, we use the solver MOSEK. For the hard interventions, which included non-convex problems, we used the solver IPOPT. Both solvers were run with default parameters.

\subsection{Selection of the interventions in our experiments}

The treatment (in the complete dataset) is modeled to lie in the range [0,40]. Therefore, the treatments/interventions can also only fall into this range. All samples that would have achieved a treatment outside this range through the interventions were neglected in our analysis. To guarantee that a sufficient number of samples were included in our experiments, we chose the maximal soft treatment as an increase of 10.

We sampled the covariates $X$ in the range from 1 to 4. To again perform interventions that fall inside the range of [0,40], we decided to set the intervention as 7$X$ and 10$X$. Other choices of interventions would also have been possible. Reassuringly, there was no systematic selection of interventions in our experiments besides the considerations above.

For the soft interventions, we do not see different effects of the interventions on the two datasets. For the hard interventions, however, we can observe a difference. Recall that in dataset 1, the treatment was sampled uniformly from [0,40] (with a dependence on $X$), whereas in dataset 2, it was sampled from a normal distribution with a mean of 5$X$. Therefore, the intervention 10$X$ is far in the tail of the distribution. As a result, we observe a slightly lower coverage for this intervention on dataset 2 compared to dataset 1.

\newpage
\section{Further results}
\label{sec:appendix_results}

We present further results from our experiments in Section~\ref{sec:experiments}. Specifically, we state the prediction performance of the underlying models  $\phi$, discuss the scalability of our approach, and show the prediction intervals per covariate for various significance levels $\alpha$ and soft interventions $\Delta$ of our synthetic experiments on dataset 1 and dataset 2. 

\textbf{Performance:}
We first report the performance of the underlying prediction models $\phi$ for the synthetic datasets across 50 runs in Table~\ref{tab:evaluation_phi}. The prediction model on the real-world dataset achieved a mean squared error loss of 1.2373.
\begin{table}[h]
    \centering
    \begin{tabular}{lcc|c}
        \toprule
         & Dataset 1 & Dataset 2 & MIMIC\\
         \midrule
         $\phi$ & 0.0216 (0.0056) & 0.9029 (0.3908) & 0.0141 (0.0057)\\
         Ens. & 0.0094 (2.1169$e^{-5}$) & 0.0130 (0.0003) & - \\
         \bottomrule
    \end{tabular}
    \vspace{1em}
    \caption{Mean and standard deviation of MSE loss of prediction models $\phi$ across 50 runs.}
    \label{tab:evaluation_phi}
\end{table}

\textbf{Width of the prediction intervals:}
We further report the width of the prediction intervals in our synthetic experiments in Table~\ref{tab:lenght_known}. The width is important to assess the usefulness of the resulting prediction intervals. As the performance of the ensemble method is not comparable with the coverage of MC-Dropout and our CP method, we only compare the latter two methods with regard to the interval width.
\begin{table}[h]
    \centering
    \begin{tabular}{l | c c|c c}
    \toprule
    &\multicolumn{2}{c|}{Dataset 1} & \multicolumn{2}{c}{Dataset 2}\\
    Delta & Ours & MC-Dropout & Ours & MC-Dropout\\
    \midrule
    1 & 0.3647 (0.1284) & 0.1938 (0.1170) & 0.4051 (0.1036) & 0.2897 (0.1480)\\
    5 & 0.4024 (0.2285) & 0.1653 (0.1103) & 0.4610 (0.2479) & 0.3036 (0.1455)\\
    10 & 0.4301 (0.2610) & 0.1639 (0.1080) & 0.6711 (0.8520) & 0.3235 ( 0.1445)\\
    \bottomrule
    \end{tabular}
    \vspace{1em}
    \caption{Mean and standard deviation of the resulting prediction intervals.}
    \label{tab:lenght_known}
\end{table}

\textbf{Comparison to the vanilla CP baseline:}
We compare our method to the naive vanilla CP (V-CP), i.e., a CP method that does not account for the distribution shift. We observe that V-CP does not achieve any valid prediction interval across all distribution shifts and confidence levels. This can be explained by the good prediction performance of the underlying model. Thus, V-CP intervals are extremely small (average width of 0.0003) and can never cover the true potential outcome after the intervention. Overall, the results confirm the importance of accounting for the distribution shift induced by the intervention.

\begin{wrapfigure}[16]{r}{0.5\textwidth}
    \vspace{-1cm}
    \centering
    \includegraphics[width=\linewidth]{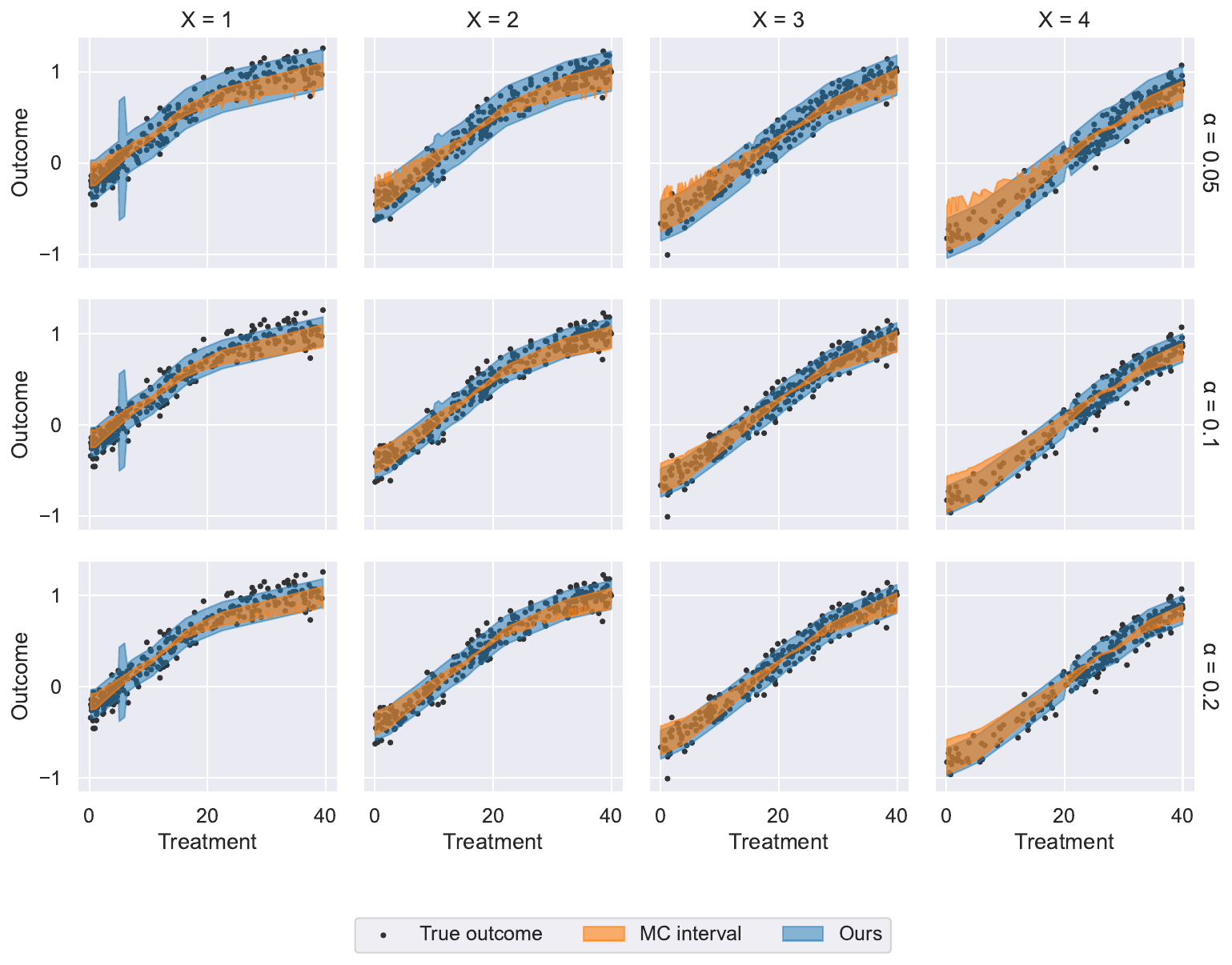}
    \caption{Prediction intervals for multiple significance levels $\alpha$ for the synthetic dataset 1 with intervention $\Delta = 1$.}
    \label{fig:intervals_synthetic1_delta1}
\end{wrapfigure}
\textbf{Scalability:}
Calculating the prediction intervals requires an iterative search for an optimal value $S^{\ast}$. Therefore, the underlying optimization problem must be fitted multiple times throughout the algorithm, potentially posing scalability problems. In our empirical studies, however, we did not encounter scalability issues. Importantly, we found that the average runtime of our algorithm on a standard desktop CPU is only 16.43 seconds. On the MIMIC dataset, computing CP intervals takes roughly 10 times longer than computing MC intervals. However, we emphasize that MC intervals are generally \emph{not} faithful and therefore \emph{not} directly comparable.

\textbf{Prediction intervals:}
In Figures~\ref{fig:intervals_synthetic1_delta1}, \ref{fig:intervals_synthetic1_delta5}, and \ref{fig:intervals_synthetic1_delta10}, we present the prediction bands given by our method and MC dropout on dataset 1. In particular, for confounder $X=1$, our method shows a large increase in the uncertainty in the potential outcomes of treatments affected by the intervention. MC dropout does not capture this uncertainty.
\begin{figure}[h]
    \begin{minipage}{.49\textwidth}
    \centering
    \includegraphics[width=1\linewidth]{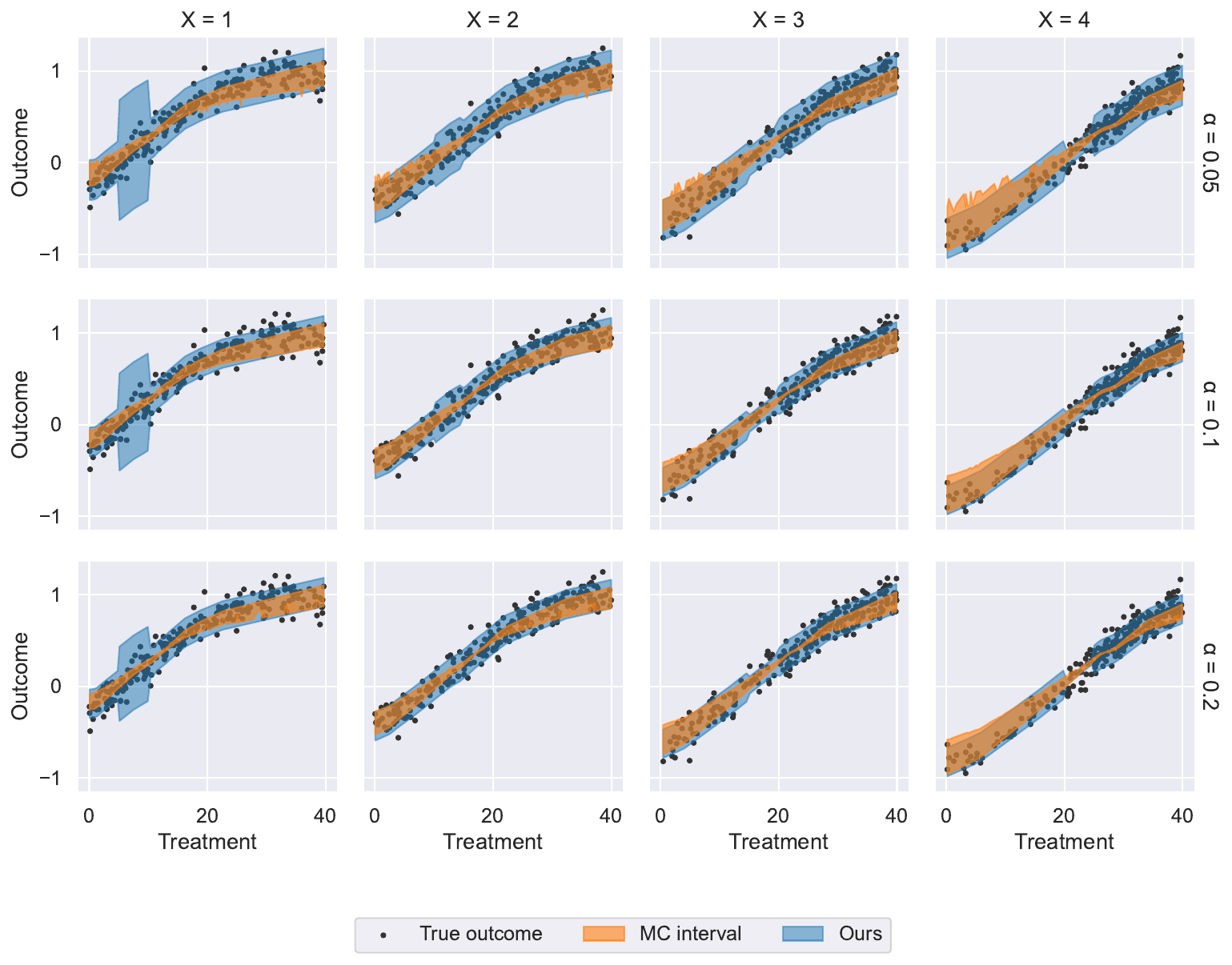}
    \caption{Prediction intervals for multiple significance levels $\alpha$ for the synthetic dataset 1 with intervention $\Delta = 5$}
    \label{fig:intervals_synthetic1_delta5}
    \end{minipage}
    \hfill
    \begin{minipage}{.49\textwidth}
    \centering
    \includegraphics[width=1\linewidth]{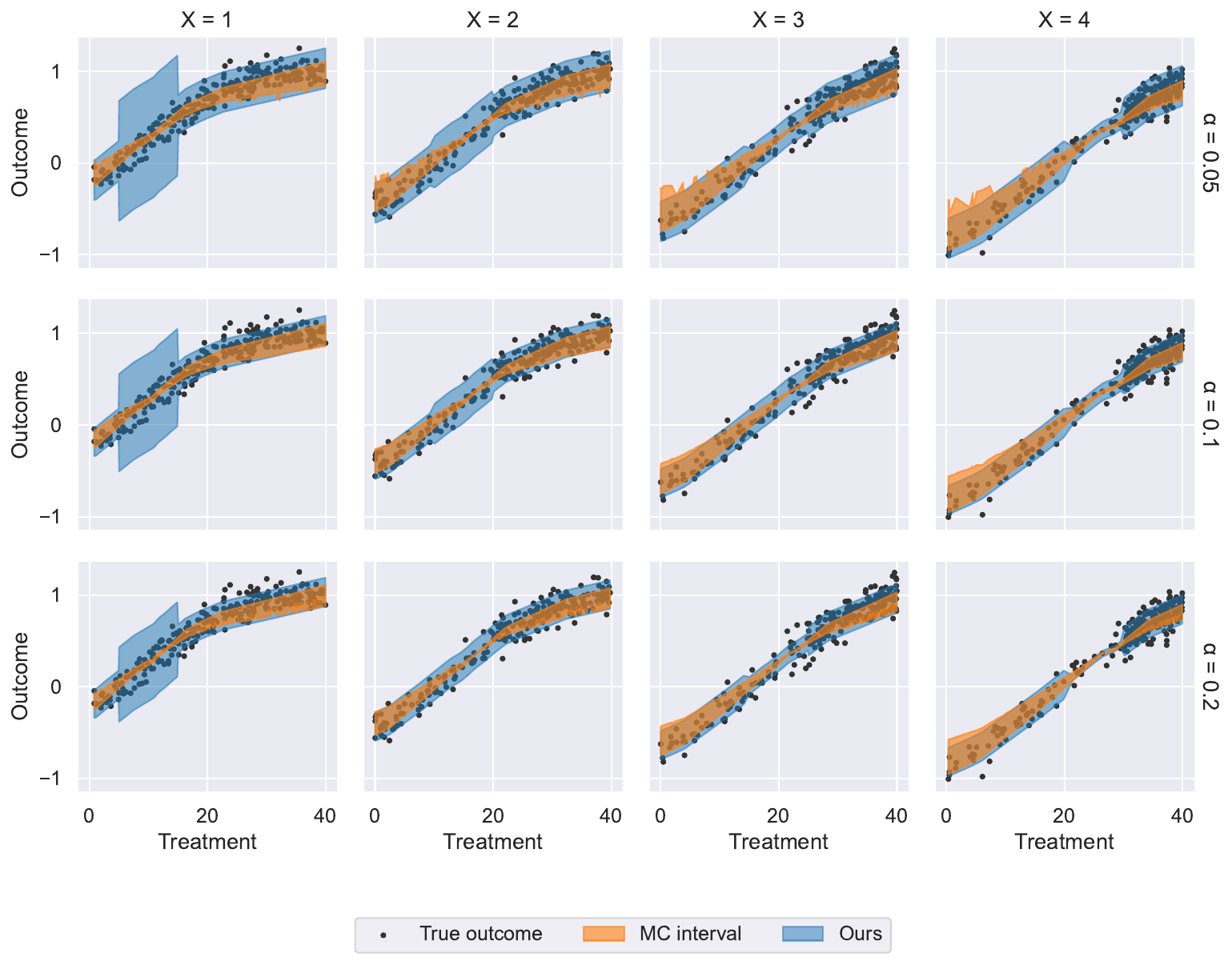}
    \caption{Prediction intervals for multiple significance levels $\alpha$ for the synthetic dataset 1 with intervention $\Delta = 10$.}
    \label{fig:intervals_synthetic1_delta10}
    \end{minipage}
\end{figure}

In Figures \ref{fig:intervals_synthetic2_delta1} and \ref{fig:intervals_synthetic2_delta10}, we present the prediction bands given by our method and MC dropout on dataset 2 for the soft interventions $\Delta = 1$ and  $\Delta = 10$ (the results for $\Delta=5$ were presented in the main paper). We observe that the prediction intervals for $\Delta=10$ become extremely wide for high treatments. This aligns with our expectation, as data for high treatments in combination with low confounders is rare or even absent in the dataset. Thus, the expected uncertainty is very high.
\begin{figure}[h]
    \begin{minipage}{.49\textwidth}
    \centering
    \includegraphics[width=\linewidth]{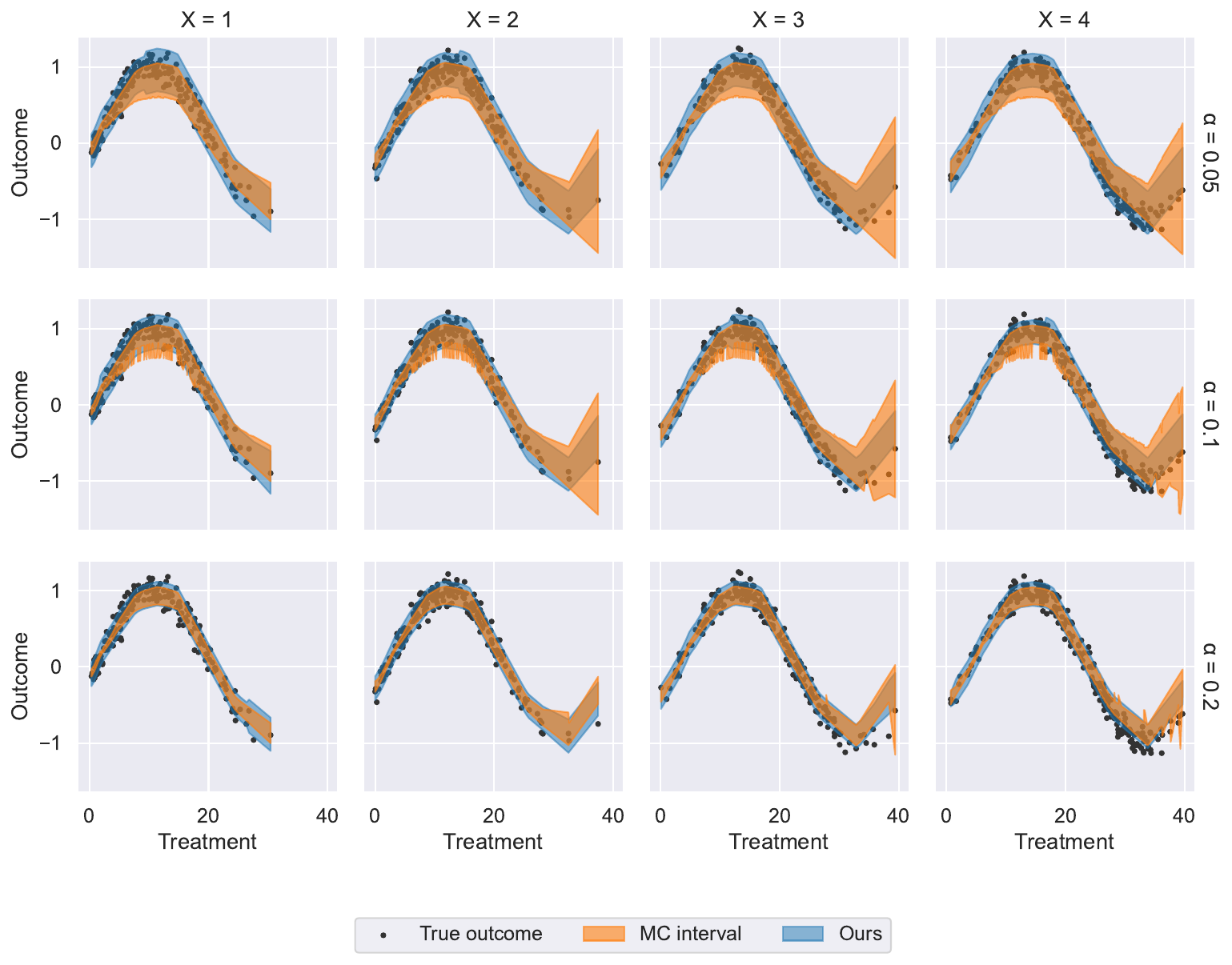}
    \caption{Prediction intervals for multiple significance levels $\alpha$ for the synthetic dataset 2 with intervention $\Delta = 1$}
    \label{fig:intervals_synthetic2_delta1}
    \end{minipage}
    \hfill
    \begin{minipage}{.49\textwidth}
    \centering
    \includegraphics[width=0.8\linewidth]{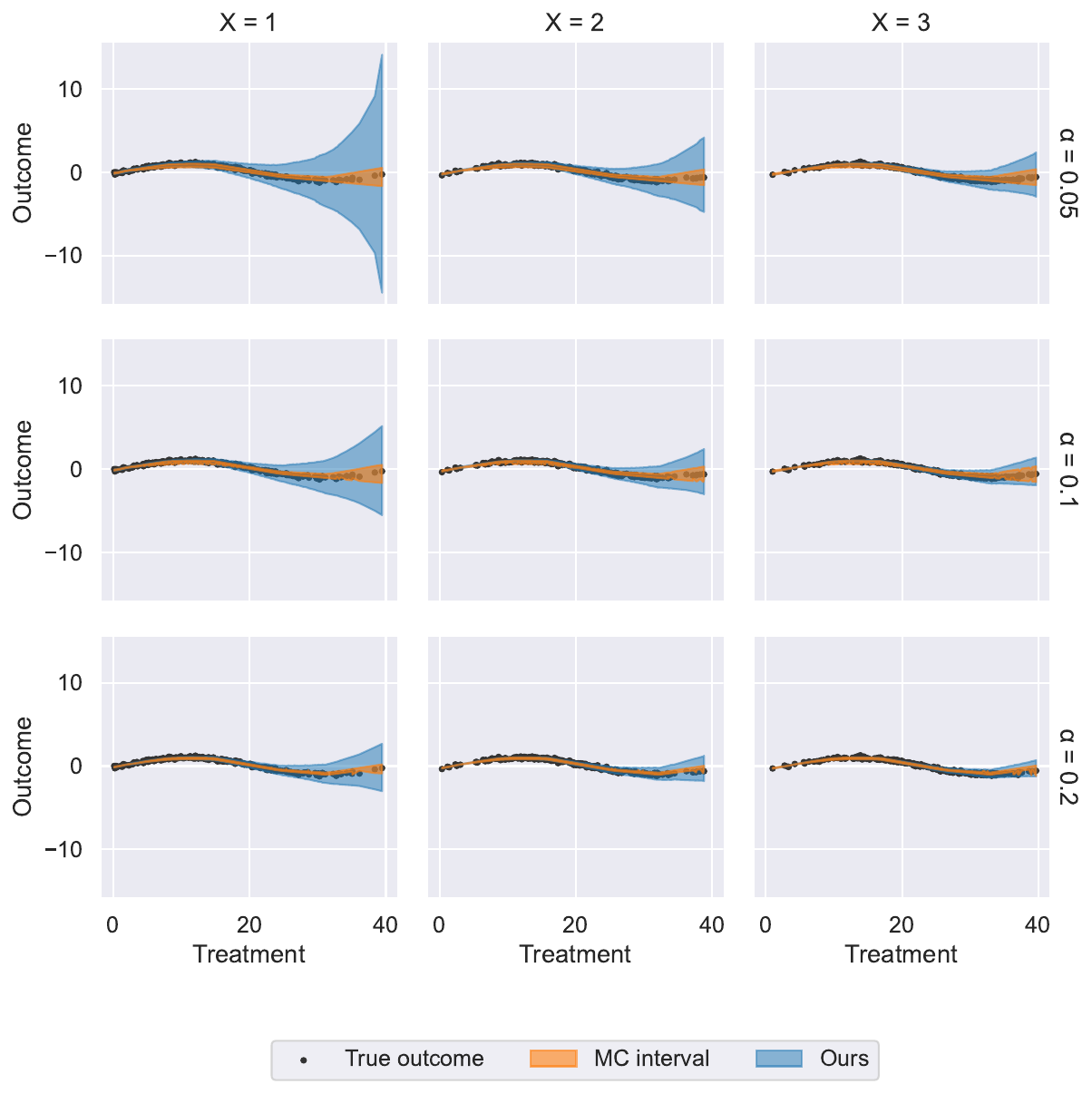}
    \caption{Prediction intervals for multiple significance levels $\alpha$ for the synthetic dataset 2 with intervention $\Delta = 10$.}
    \label{fig:intervals_synthetic2_delta10}
    \end{minipage}
\end{figure}

\end{document}